\pdfoutput=1
\documentclass[11pt]{article}

\usepackage[margin=1in]{geometry}
\usepackage{amsmath}
\usepackage{amssymb}
\usepackage{amsthm}
\usepackage{mathtools}
\usepackage{mathrsfs}
\usepackage{lmodern}
\usepackage[T1]{fontenc}
\usepackage{microtype}
\usepackage{enumitem}
\usepackage{booktabs}
\usepackage{graphicx}
\usepackage{xcolor}
\usepackage[round,authoryear]{natbib}
\usepackage[colorlinks=true,linkcolor=blue!60!black,citecolor=blue!60!black,urlcolor=blue!60!black]{hyperref}
\usepackage{cleveref}
\usepackage{pgfplots}
\usepackage{pgfplotstable}
\pgfplotsset{compat=1.17}

\usepackage{float}
\newfloat{algorithm}{t}{loa}[section]
\floatname{algorithm}{Algorithm}
\crefname{algorithm}{Algorithm}{Algorithms}
\Crefname{algorithm}{Algorithm}{Algorithms}

\providecommand{\Description}[1]{}

\theoremstyle{plain}
\newtheorem{theorem}{Theorem}[section]

\newtheorem{proposition}[theorem]{Proposition}
\newtheorem{lemma}[theorem]{Lemma}
\newtheorem{corollary}[theorem]{Corollary}

\theoremstyle{definition}
\newtheorem{definition}[theorem]{Definition}
\newtheorem{example}[theorem]{Example}
\theoremstyle{plain}
\newtheorem{remark}[theorem]{Remark}
\crefname{fact}{Fact}{Facts}
\Crefname{fact}{Fact}{Facts}

\newcommand{\Xcal}{\mathcal{X}}
\newcommand{\Ycal}{\mathcal{Y}}
\newcommand{\Fcal}{\mathcal{F}}
\newcommand{\Sel}{\mathsf{Sel}}
\newcommand{\SNL}{S_{\mathrm{NL}}}
\newcommand{\JNL}{J_{\mathrm{NL}}}
\newcommand{\fNL}{f_{\mathrm{NL}}}
\newcommand{\err}{\mathrm{err}}
\newcommand{\egora}{\epsilon_g}
\newcommand{\Diam}{\mathrm{diam}}
\renewcommand{\Pr}{\mathbb{P}}
\newcommand{\E}{\mathbb{E}}
\newcommand{\eps}{\varepsilon}
\newcommand{\LLM}{\mathrm{LLM}_\theta}

\newcounter{stassumption}

\newtheorem{assumption}{Assumption}

\crefname{assumption}{Assumption}{Assumptions}
\Crefname{assumption}{Assumption}{Assumptions}

\begin{document}

\title{NL-PAC: Specification Ambiguity and Certified Minimax Risk Floors
       in LLM-Mediated Supervision}
\author{Berkay Anahtarci\\
  \small Department of Mathematical Engineering, \"Ozye\u{g}in University,
  Istanbul, T\"urkiye\\
  \small \texttt{berkay.anahtarci@ozyegin.edu.tr}}
\date{}

\maketitle

\begin{abstract}
Large language models increasingly provide labels, evaluations, and feedback
for tasks specified in natural language. When a specification admits multiple
readings but the supervision channel does not reveal which is operative,
additional labels reduce sampling error without resolving the resulting
identification problem. We introduce Natural Language PAC (NL-PAC), a
framework that uses a fixed model's thresholded decoding law to define
admissible labels and candidate targets. The probability that multiple labels
are admissible equals the diameter of the pointwise-admissible target class,
and under target-blind supervision every learner incurs worst-case risk of at
least half this diameter, at every sample size; the exact randomized minimax
risk over this class is attained by a data-independent strategy. Finite-sample
confidence bounds make these quantities certifiable from held-out unlabeled
inputs. In a frozen Qwen~2.5--3B audit, one prespecified prompt yields a
positive model-relative certificate, whereas a paraphrase and exact-rule
controls yield zero. A held-out bridge audit finds that supplied candidate
reading clauses fail the admissibility condition needed to transfer the
certificate to coherent readings. The guarantee is specific to the audited
model, prompt, threshold, and input distribution; extending it to human
interpretations requires external validation.
\end{abstract}

\section{Introduction}
\label{sec:intro}

Modern learning pipelines increasingly use large-language-model (LLM)
judgments in place of human evaluation or labeling, including in
LLM-as-a-judge systems~\citep{liu2023geval,kim2023prometheus,gu2024survey}.
Prompts and scoring rubrics specify these tasks in natural language and may
admit multiple interpretations. The supervising model can adopt one
interpretation without revealing which one governs its judgments. We call the supervision channel \emph{target-blind}
when its observation law does not reveal which reading is operative.
This regime is common in practice: judge models are typically served through
proprietary APIs whose internals are opaque to the
analyst~\citep{lamalfa2024lmaas}, so the operative reading is unobservable by
construction. Target blindness refers to this hidden reading; whether the model's
decoding probabilities are exposed is a separate matter that later governs the
audit's access mode. The resulting error is interpretive rather than statistical: additional
observations from the same unresolved channel do not identify the operative
reading, although a more informative channel can. We ask how much minimax risk
remains when the channel defining the admissible targets is also the learner's
only source of supervision, and whether that risk floor can be certified from
unlabeled inputs. We formalize
the setting as \emph{Natural Language PAC} (NL-PAC), characterize when such a
channel creates an irreducible risk floor, and turn that floor into a
finite-sample certificate.

Classical statistical learning theory does not model this coupling between
task specification and the supervision channel: in the
Probably Approximately Correct (PAC) framework of \citet{valiant1984theory},
the analyst fixes the instance space, distribution, target, and hypothesis
class, and the learner pays only the statistical price of estimating a target
inside that specification. This is the separation underlying VC theory and
computational learning theory generally~\citep{kearns1994introduction}.
Language-model systems collapse this separation: the task is given as a
natural-language instruction, and the supervision signal is produced by a
model interpreting that same instruction~\citep{liu2023ambiguity}. NL-PAC
makes this specification--channel pair the object of analysis.

Formally, a fixed model, prompt, and threshold induce at each input an
\emph{admissible label set} $A_\tau(x)$; the overlap mass
$D^\star_\tau=\Pr(|A_\tau(X)|\ge2)$ is the diameter of the pointwise-admissible
core of the induced candidate class, and under target-blind supervision every learner incurs
worst-case risk at least $D^\star_\tau/2$, with the exact obstruction given by a
multiplicity-weighted value $V^\star_\tau$ over the admissible core. This bound
does not vanish with sample size.
Because this floor is a functional of the model-induced admissible sets, it can
be certified from unlabeled deployment inputs whenever those sets are observed
exactly or estimated with a certified decoding radius. The rest of the paper
develops and audits these quantities.

\paragraph{Contributions.}
We establish four results that together characterize how a model-induced
ambiguity set creates an irreducible learning floor and how that floor can be
audited from held-out unlabeled inputs.

\begin{enumerate}[leftmargin=*, itemsep=4pt, topsep=4pt]
  \item \textbf{Representation.}
      Thresholded model outputs define a pointwise-admissible core and its open
      $\zeta$-tolerance class. The core diameter equals the overlap mass
      $D^\star_\tau=\Pr(|A_\tau(X)|\ge2)$, while the tolerance-class diameter
      lies in $[D^\star_\tau,D^\star_\tau+2\zeta]$, with both endpoints sharp
      (\Cref{thm:representation}; sharpness in \Cref{app:proofs}).
  \item \textbf{Blind-channel value.} Target blindness yields a half-diameter
      minimax floor over the tolerance class and the exact value
      $V^\star_\tau$ over the core, uniformly in the learner's sample size
      (\Cref{thm:semantic_floor,thm:deployed_exact_minimax}). The latter is
      attained by a data-independent randomized learner and witnessed by an
      explicit finite least-favorable family. It equals
      $D^\star_\tau/2$ when at most two labels are admissible at almost every input and
      is strictly sharper when higher-order overlap has positive mass.
  \item \textbf{Coherent readings.} An explicit $\eta$-coverage
        condition bounds the gap between $V^\star_\tau$ and the blind minimax
        value over a finite family of global readings
        (\Cref{thm:coherent_bridge}). This condition addresses the fact that
        pointwise selectors need not represent coherent global interpretations.
        A fit--holdout audit finds that the bound is uninformative for the
        supplied two-reading pool (\Cref{ssec:qwen_eta_bridge}).
  \item \textbf{Certification.} Observed admissible sets give Hoeffding
        certificates for the pairwise floor, diameter, and exact core value:
        $N$ held-out inputs give a pairwise lower certificate at a Hoeffding
        radius, and under sampled decoding the radius additionally pays
        threshold-margin and finite-depth terms
        (\Cref{prop:master_statistic}; \Cref{cor:plugin_dstar};
        \Cref{thm:end_to_end};
        \Cref{prop:vstar_certificate,prop:plugin_vstar_certificate}).
\end{enumerate}

\paragraph{Adjacent literatures.}
Two adjacent literatures bracket the problem without containing it. Noisy and
weak supervision assume an operative target and analyze corrupted, aggregated,
or heuristic observations of
it~\citep{natarajan2013learning,ratner2017snorkel,dawid1979maximum}. In
classical malicious- and nasty-noise models, the resulting lower bounds scale
with the corruption budget and vanish as that budget
vanishes~\citep{kearnsli1993malicious,bshouty2002nasty}. Instance-dependent
noise further illustrates the role of structural restrictions: bounded
instance- and label-dependent noise is learnable under additional conditions,
as shown by \citet{cheng2020bounded}, whereas an unrestricted noise law leaves
the target--noise decomposition unidentified. Truth-tracking analyses
similarly ask when reports from imperfect sources identify an externally
defined state~\citep{singleton2024truth}.

LLM-as-a-judge research instead makes language part of the evaluation
interface: work on rating indeterminacy documents that the criteria supplied
to a judge may themselves admit multiple valid
readings~\citep{guerdan2025rating}. Quantitative analyses
bound how far judge labels can substitute for ground truth under
\emph{exogenous} judge error or
bias~\citep{dorner2025limits,feuer2026biasbounded}. NL-PAC isolates the
intersection of these concerns: the same specification-mediated channel both
defines the admissible targets and supplies the observations, so its inability
to reveal the operative reading creates an identification obstruction. That
obstruction is set by the indistinguishability of admissible readings, not by
a noise budget, and is endogenous to the model's own interpretation.

On the
alignment side, an information-theoretic argument holds that no specification
can pin down an external target while conditional entropy
remains~\citep{young2025harm}, and reward-misspecification studies show that
optimizing a proxy for an underspecified objective can diverge sharply from the
intended one~\citep{pan2022reward}; the ambiguity diameter certified here is an
operational, model-relative witness of that gap. The closest prior quantity is
the \emph{ambiguity degree} of partial-label
learning~\citep{cour2011partial,liu2014learnability}, which marks exactly the
condition under which candidate-label ambiguity does \emph{not} create an
irreducible floor; NL-PAC characterizes the complementary regime, in which the
channel cannot disambiguate. Further bordering literatures, including learning
from disagreement, perspectivist and prompt-underspecification research, and
inference on partially identified parameters, are treated in \Cref{app:related}.

\paragraph{Relation to classical theory.}
NL-PAC instantiates classical statistical decision theory in a
language-mediated supervision problem. The two-target and
least-favorable-prior reductions follow standard
arguments~\citep{wald1950statistical,blackwell1954games,ferguson1967decision},
and the per-input value coincides with the finite zero--one subset game from
adversarial multiclass classification~\citep{fathony2016adversarial}. The
NL-PAC contribution is to derive the target set and observation channel from a
fixed model--prompt configuration, construct an explicit finite
least-favorable family of admissible selectors, and make the resulting minimax
value certifiable from held-out unlabeled inputs. The proofs combine classical
two-point testing and concentration
arguments~\citep{tsybakov2009introduction,hoeffding1963probability} with this
model-induced structure.

\paragraph{Empirical validation.}
The principal experiment audits a frozen Qwen judge's declared-label
conditional first-token probabilities, where the certificate applies with its
correction-free radius: one prespecified prompt yields a positive
model-relative certificate, while a second paraphrase and the exact-rule
controls yield zero (\Cref{sec:experiments}). Two audits delimit the claim:
the bridge passes its fitted mixture-coverage check but fails its held-out
admissibility check for the supplied two-reading pool, and the sampled-decoding
mode is inconclusive at feasible depth because its finite-depth and
threshold-margin corrections are vacuous. Controlled
tasks recovering the prescribed zero and positive certificate cases, and a
ChaosNLI calculation illustrating the gain from multiclass multiplicity,
appear in \Cref{app:experiment_diagnostics}.

\paragraph{Scope of the guarantee.}
The certificate quantifies ambiguity relative to the audited tuple
$(\LLM,\Pi,\tau,P)$; it does not by itself identify the designer's intended
target or establish human task ambiguity. Extending the guarantee to coherent
human readings or deployment settings therefore requires coverage and external
construct-validity evidence.

\paragraph{Roadmap.}
\Cref{sec:setup} defines the admissible geometry; \Cref{sec:oracle} derives the
blind-channel values and coherent-reading bridge; \Cref{sec:certificate}
constructs finite-sample certificates; and \Cref{sec:experiments,sec:discussion}
report the empirical probes, limitations, and extensions.

\section{NL-PAC Setup and Model-Admissible Geometry}
\label{sec:setup}

This section distinguishes four sources of uncertainty: ambiguity in the task
description, stochasticity in model decoding, error relative to a target, and
indistinguishability induced by the supervision channel. The resulting
quantities describe the audited model and channel rather than ambiguity in
human interpretation.

\subsection{Supervision and admissibility channels}
\label{ssec:basic_objects}

We adopt the standard PAC setting. Let $(\Xcal,\mathscr F_\Xcal)$ be a
measurable instance space, let $\Ycal$ be a finite label space with the
discrete $\sigma$-algebra, and let $P$ be a probability measure on $\Xcal$.
The target is a measurable labeling function $f\colon\Xcal\to\Ycal$. In the
classical setting, supervision consists of direct observations of $f(x)$.

Here, by contrast, the target is specified through natural language. Let
$\mathcal T$ denote the model's token alphabet and equip the set
$\mathcal T^*$ of finite token strings with the discrete $\sigma$-algebra.
The spaces of task descriptions and justifications are measurable subsets
$\SNL,\JNL\subseteq\mathcal T^*$, endowed with the corresponding subspace
$\sigma$-algebras $\mathscr F_{\SNL}$ and $\mathscr F_{\JNL}$. All sets of strings
used below are assumed measurable. A fixed task description
$\fNL\in\SNL$, supplied independently of the learner, is intended to specify
the target $f$.

Throughout, the admissibility channel interprets the description with a
pretrained language model $\LLM$ whose parameters $\theta$ are frozen.
Since $\LLM$ is generative, $\LLM(\cdot\mid u)$ is a distribution over
token-string \emph{continuations} of a string $u\in\mathcal{T}^*$, not
over the label set $\Ycal$. Two fixed maps bridge the gap: a
\emph{label-extraction kernel} (verbalizer)
\[
    V \colon \mathcal{T}^* \to \Delta(\Ycal)
\]
taking a generated string to a distribution over labels, where $\Delta(\Ycal)$ is
the probability simplex on $\Ycal$ (e.g.\ constrained decoding onto a verbalizer
set $\{v_y\}_{y\in\Ycal}$, or parsing an answer field), and
a \emph{prompt template}
\[
    \Pi \colon \SNL \times \Xcal \to \mathcal{T}^*,
\]
taking a description and an instance to the prompt $\Pi(\fNL, x)$ presented
to $\LLM$. The kernel induces the label probability
\[
    p_{\LLM}(y \mid u) \;:=\; \sum_{s\in\mathcal{T}^*} V(y\mid s)\,\LLM(s\mid u).
\]
For the fixed description, define the prompt-conditioned
\emph{admissibility score}
\[
    \pi_{\LLM}(y\mid\fNL,x)
    \;:=\;
    p_{\LLM}\bigl(y\mid\Pi(\fNL,x)\bigr).
\]
This score is a decoding probability, not a calibrated posterior over human
interpretations. All label probabilities below refer to this quantity. The
description $\fNL$ and distribution $P$ are fixed throughout; we suppress them
from $A_\tau$, $D^\star_\tau$, and $\Sel_\tau$, displaying them only where a
result varies them (as in $\Fcal_{\tau,\zeta}(\fNL)$ and
$\Diam_{\tau,\zeta}(\fNL,P)$).

\begin{assumption}[Measurability]
\label{a:measurable}
For every $y \in \Ycal$, the map
$x \mapsto \pi_{\LLM}(y \mid \fNL,x)$ is
$\mathscr{F}_{\Xcal}$-measurable.
\end{assumption}

Abstractly, the learner's supervision is an \emph{observed-label kernel}
$x\mapsto Q(\cdot\mid x)\in\Delta(\Ycal)$, the conditional law of the label it
receives at input $x$. For the fixed-description base model studied here we
restrict attention to a kernel $Q$ that carries \emph{no} operative-target
argument; \Cref{sec:oracle} later embeds this base model in a target-indexed
family $Q_f$ in which target dependence, and hence partial distinguishability, is
possible. Target-independence is thus a modeling restriction of the base channel,
not a property discovered later. The headline blind-channel results depend only
on $Q$, not on how it is realized. We realize $Q$ concretely through a
justification oracle $g$ and a decoder $\rho$, introduced next, so that
$Q(\cdot\mid x)=\mathrm{Law}(\rho(J,x)\mid\fNL,x)$ with $J\sim g(\cdot\mid\fNL,x)$;
readers interested only in the floors may read $(g,\rho)$ as one instantiation of
$Q$.

\begin{definition}[Justification oracle]
\label{def:oracle}
The justification oracle $g$ is a Markov kernel from
$\bigl(\SNL\times\Xcal,\,\mathscr{F}_{\SNL}\otimes\mathscr{F}_{\Xcal}\bigr)$
to $\bigl(\JNL,\mathscr{F}_{\JNL}\bigr)$. On input $(\fNL,x)$ it draws a
justification $J\sim g(\cdot\mid\fNL,x)$, with realizations $j\in\JNL$.
\end{definition}

\begin{assumption}[Conditional independence of oracle draws]
\label{a:kernel}
The inputs are i.i.d., $X_{1:m}\sim P^m$, and given $X_{1:m}$ the draws
$J_i \sim g(\cdot \mid \fNL, X_i)$ are conditionally independent across $i$.
\end{assumption}

This excludes dependence among repeated oracle outputs beyond that induced by the
sampled inputs. We impose full conditional independence and do not analyze weaker
dependence conditions such as martingale-difference or mixing assumptions.

\begin{definition}[Justification decoder]
\label{def:decoder}
The \emph{justification decoder} is a deterministic map
$\rho\colon\JNL\times\Xcal\to\Ycal$ that reads a label off a justification--input
pair. It is required to be measurable with respect to the product
$\sigma$-algebra $\mathscr F_{\JNL}\otimes\mathscr F_{\Xcal}$ on its domain and
the discrete $\sigma$-algebra on $\Ycal$, so that the composed observed label
$\rho(J,X)$ is a well-defined $\Ycal$-valued random variable.
\end{definition}

The tuple $(\Xcal,\Ycal,P,f,\fNL,g,\rho)$ defines an \emph{NL-PAC learning
problem}. The target labels are unobserved; instead, for $i=1,\ldots,m$ the
learner receives a sample generated by drawing $X_i\sim P$, then a justification
$J_i\sim g(\cdot\mid\fNL,X_i)$, and setting $Y_i=\rho(J_i,X_i)$.
A \emph{learner} is any measurable, possibly randomized rule that maps
$(X_i,Y_i)_{i=1}^m$ to a measurable classifier
$h\colon\Xcal\to\Ycal$. We impose no hypothesis-class restriction, and measure
its performance against the latent target by the standard risk
$\err_P(h)=\Pr_{X\sim P}[h(X)\neq f(X)]$.

The framework therefore separates two channels. The \emph{admissibility channel}
$x\mapsto\pi_{\LLM}(\cdot\mid\fNL,x)$ determines which labels the model regards
as compatible with the description, whereas the \emph{supervision channel}
$(g,\rho)$ determines the labels available to the learner. The oracle $g$ may
be instantiated by the same model or by a distinct supervisor, so the two
induced label distributions need not coincide. NL-PAC asks whether the
observed channel supports low target risk or leaves an irreducible gap that
cannot be closed by changing the learner or increasing the sample size.

\begin{example}[Content moderation]
\label{ex:moderation}
Let $\Xcal$ be user comments and $\Ycal=\{0,1\}$ denote \emph{safe} and
\emph{toxic}, with $f(x)=1$ exactly when $x$ contains a literal lexical insult.
The description $\fNL=\text{\emph{``label the comment toxic ($1$) or safe ($0$)''}}$
is silent on sarcasm, admitting two readings: (i)~pragmatic personal attacks,
including sarcasm, are toxic and (ii)~only literal lexical insults are toxic, with
the target following reading~(ii). On $x=\text{\emph{``That was a brilliant move, genius''}}$ we have
$f(x)=0$, yet the oracle may return
$j=\text{\emph{``a sarcastic personal attack''}}$, yielding
$\rho(j,x)=1\neq f(x)$: not a decoding failure but the label under
reading~(i).
\end{example}

For a fixed description $\fNL$, the \emph{supervision channel} $(g,\rho)$ is
conditioned on the input $x$, but not on the operative target. The resulting
observation law therefore cannot distinguish among candidate targets associated
with the same description.

\begin{proposition}[Fixed-description target blindness]
\label{prop:deployed_blindness}
Fix $P$, the description $\fNL$, the oracle $g$, and the decoder $\rho$, and
draw $X\sim P$ and $J\sim g(\cdot\mid\fNL,X)$. The law of the observed pair
$(X,\rho(J,X))$ does not depend on the operative target: it is identical under
any two candidates $f_1,f_2\colon\Xcal\to\Ycal$.
\end{proposition}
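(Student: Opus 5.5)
The plan is to write down the joint law of $(X,\rho(J,X))$ explicitly and check that no ingredient of the formula involves $f$. Fix a measurable rectangle $B\times\{y\}$ with $B\in\mathscr F_\Xcal$ and $y\in\Ycal$. By the two-stage sampling in the definition of an NL-PAC learning problem, $X\sim P$ and then $J\sim g(\cdot\mid\fNL,X)$. The standard construction of a probability measure from a measure and a Markov kernel (Definition~\ref{def:oracle}) then gives
\[
\Pr\bigl(X\in B,\ \rho(J,X)=y\bigr)
=\int_B g\bigl(\{j\in\JNL:\rho(j,x)=y\}\,\big|\,\fNL,x\bigr)\,P(dx)
=\int_B Q(y\mid x)\,P(dx).
\]

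Before using this formula, I will verify that it is well defined. Because $\rho$ is jointly measurable (Definition~\ref{def:decoder}), the set $\{(j,x):\rho(j,x)=y\}$ lies in $\mathscr F_{\JNL}\otimes\mathscr F_\Xcal$. Its $x$-sections are therefore measurable. The map $x\mapsto g(\{j:\rho(j,x)=y\}\mid\fNL,x)$ is also measurable, by the usual kernel–product argument: a monotone-class or $\pi$–$\lambda$ argument over product sets $C\times E$, for which the map reduces to $\mathbf 1_E(x)\,g(C\mid\fNL,x)$. This measurability step is the only technical point, and it is routine.

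Since $\Ycal$ is finite with the discrete $\sigma$-algebra, rectangles of the form $B\times\{y\}$ generate $\mathscr F_\Xcal\otimes 2^{\Ycal}$ and form a $\pi$-system. The law of $(X,\rho(J,X))$ is thus determined by the displayed integrals via the $\pi$–$\lambda$ theorem, or directly by finite additivity over $y$.

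The final step is to observe that the right-hand side involves only $P$, $\fNL$, $g$ and $\rho$. These are all held fixed in the statement, and by construction the kernel $g$ takes arguments $(\fNL,x)$ only, with no target argument. Replacing the operative target $f_1$ by $f_2$ therefore changes none of the inputs to the formula, so the two joint laws agree on a generating $\pi$-system and hence coincide. The main obstacle is conceptual rather than technical: one must make explicit that the target $f$ enters the problem tuple only through the risk $\err_P$, and not through the sampling mechanism.
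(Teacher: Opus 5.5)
Your proof is correct and takes the same route as the paper. The paper also factors the law of $(X,\rho(J,X))$ as $P$ composed with the observed-label kernel $Q(\cdot\mid x)$, which is built from $g(\cdot\mid\fNL,x)$ and $\rho(\cdot,x)$ alone, and concludes that no ingredient depends on the target; you additionally spell out the measurability of $x\mapsto g(\{j:\rho(j,x)=y\}\mid\fNL,x)$ and the $\pi$--$\lambda$ uniqueness step that the paper's one-line factorization leaves implicit.
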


\begin{proof}
The law of $(X,\rho(J,X))$ factors as the input marginal $P$ composed with the
observed-label kernel $Q(\cdot\mid x)=\mathrm{Law}(\rho(J,x)\mid\fNL,x)$ with
$J\sim g(\cdot\mid\fNL,x)$. This kernel is determined by $g(\cdot\mid\fNL,x)$ and
the deterministic decoder $\rho(\cdot,x)$ alone; neither depends on the operative
target, so $Q$, and hence the joint law, is identical under any two candidates
$f_1,f_2$. This is the sense in which the blind-channel results depend only on
$Q$.
\end{proof}

We call an observation channel with this property \emph{target-blind}, and all subsequent blind-channel guarantees refer to this fixed-description
observation law. \emph{Oracle accuracy}, measured by the error rate
\[
    \egora:=\Pr_{X\sim P,\,J\sim g(\cdot\mid\fNL,X)}[\rho(J,X)\neq f(X)],
\]
is a
separate property of the supervision channel, required neither for target
blindness nor for the admissibility geometry; its compatibility with a
localized unresolved floor is discussed in \Cref{rem:floor_vs_epssys}. Thus the
channel can be accurate on average yet uninformative exactly where candidate
targets disagree.

This pins down the four sources of uncertainty announced above: description
ambiguity surfaces as the admissible-overlap mass $D^\star_\tau$
(\Cref{def:overlap_mass}); decoding stochasticity is the sampling controlled
later by the plug-in analysis (\Cref{ssec:plugin}); error relative to a target is
$\egora$ above; and
channel-induced indistinguishability is target blindness
(\Cref{prop:deployed_blindness}), the property that turns $D^\star_\tau$ into an
irreducible floor.

\subsection{Model-admissible labeling geometry}
\label{ssec:ambiguity}

We quantify model-admissible ambiguity through the diameter of a class of
labelings supported by the model's decoded probabilities. Two parameters enter
the construction: the probability threshold $\tau$ determines which labels are
admissible at an input, while the tolerance $\zeta$ controls the fraction of
inputs on which a labeling may violate pointwise admissibility. Throughout, we
take $\tau,\zeta\in(0,1/2)$. The resulting diameter will reduce to an observable
per-instance overlap statistic that drives both the risk floors and their
empirical certificates.

\begin{definition}[Admissible label set]
\label{def:admissible_set}
For each $x\in\Xcal$, the \emph{admissible label set} at threshold $\tau$ is
\[
    A_\tau(x) \;:=\; \bigl\{\, y \in \Ycal : \pi_{\LLM}(y \mid \fNL, x) \ge \tau \,\bigr\}.
\]
Thus a label is admissible precisely when its decoded probability is at least
$\tau$.
\end{definition}

\begin{assumption}[Finite labels and nonempty admissible sets]
\label{a:finite_Y}
The label space satisfies $2 \le |\Ycal| < \infty$, and
$A_\tau(x)$ is nonempty for $P$-almost every $x$.
\end{assumption}

This assumption ensures that pointwise admissible selections exist. Because
every distribution on $\Ycal$ assigns probability at least $1/|\Ycal|$ to some
label, nonemptiness is automatic when $\tau\le1/|\Ycal|$. In particular, it is
automatic for binary labels under the standing condition $\tau<1/2$.

\begin{definition}[Admissible-overlap mass]
\label{def:overlap_mass}
The \emph{admissible-overlap mass} is
\[
    D^\star_\tau:=\Pr_{x\sim P}\!\left[|A_\tau(x)|\ge2\right].
\]
\end{definition}

\begin{remark}[Role of the threshold $\tau$]
\label{rem:tau}
The threshold $\tau$ controls the strength of evidence required for a label to
be retained. Multiple labels can be admissible only when $\tau\le1/2$;
choosing $\tau<1/2$ therefore allows overlap without requiring an exact tie.
The geometric identities remain valid at other thresholds whenever
\Cref{a:finite_Y} holds, but $D_\tau^\star=0$ for $\tau>1/2$. Moreover,
admissible sets contract monotonically with the threshold:
\[ \tau_1\le\tau_2 \quad\Longrightarrow\quad A_{\tau_2}(x)\subseteq A_{\tau_1}(x)
   \quad\Longrightarrow\quad D^\star_{\tau_2}\le D^\star_{\tau_1}. \]
Hence increasing $\tau$ can only decrease the certified ambiguity floor. Our
experiments avoid post hoc threshold selection by reporting a prespecified
threshold sweep together with the corresponding certificate curve
(\Cref{sec:experiments,appssec:threshold_sensitivity}).
\end{remark}

\begin{definition}[Model-admissible labeling class and diameter]
\label{def:ambiguity_set}
\emph{(i) Disagreement.} Call a measurable labeling $f'\colon\Xcal\to\Ycal$
\emph{admissible at} $x$ when $f'(x)\in A_\tau(x)$, and measure disagreement
between two such labelings by the \emph{disagreement pseudometric}
\[
    d_P(f_1,f_2) \;:=\; \Pr_{x\sim P}\!\left[f_1(x)\neq f_2(x)\right].
\]

\emph{(ii) Class and core.} The \emph{model-admissible labeling class} at
tolerance $\zeta$ collects the labelings admissible off a set of mass at most
$\zeta$,
\[
    \Fcal_{\tau,\zeta}(\fNL)
    :=
    \left\{\, f'\colon\Xcal\to\Ycal\text{ measurable} \;\middle|\;
        \Pr_{x\sim P}\!\left[f'(x)\in A_\tau(x)\right] > 1-\zeta \,\right\},
\]
and its \emph{almost-everywhere admissible core} is the zero-tolerance class,
recovered as the intersection $\Sel_\tau=\bigcap_{\zeta>0}\Fcal_{\tau,\zeta}(\fNL)$,
\[
    \Sel_\tau
    :=
    \left\{\, f'\colon\Xcal\to\Ycal\text{ measurable} \;\middle|\;
        f'(x)\in A_\tau(x)\text{ for $P$-a.e.\ }x \,\right\}.
\]

\emph{(iii) Diameter.} For a family $\mathcal G$ of labelings, write
$\Diam_P(\mathcal G):=\sup_{f_1,f_2\in\mathcal G}d_P(f_1,f_2)$, with
$\Diam_P(\mathcal G)=0$ when $|\mathcal G|\le1$. The \emph{model-admissible
diameter} is
\[
    \Diam_{\tau,\zeta}(\fNL,P)
    :=\Diam_P\bigl(\Fcal_{\tau,\zeta}(\fNL)\bigr).
\]
\end{definition}

The disagreement pseudometric is standard in disagreement-based active
learning~\citep{hanneke2014theory}. Since $\err_P(h)=d_P(h,f)$, bounds expressed
in $d_P$ translate directly into PAC risk bounds.

These objects are well defined. For every $y\in\Ycal$,
\[ \{x:y\in A_\tau(x)\}=\{x:\pi_{\LLM}(y\mid\fNL,x)\ge\tau\} \]
is measurable by \Cref{a:measurable}. Since $\Ycal$ is finite,
$x\mapsto A_\tau(x)$ is a measurable multifunction and every admissibility and
disagreement event below is measurable; hence $\Fcal_{\tau,\zeta}(\fNL)$ and its
diameter are well defined.

Conceptually, $\Fcal_{\tau,\zeta}(\fNL)$ is a model-relative identified
set~\citep{manski2003partial} formed from approximate measurable selections of
$A_\tau$~\citep{molchanov2018random}, and its core $\Sel_\tau$ is a version space
in the sense of \citet{mitchell1982generalization}, with model admissibility
replacing labeled data as the defining constraint. What the analysis below
exploits is that this identified set is model-relative and that its diameter
reduces to a single auditable statistic.

For a measurable $u\colon\Xcal\to\Ycal$, write
$d_P(u,\Sel_\tau):=\inf_{v\in\Sel_\tau}d_P(u,v)$ for its distance to the
admissible core $\Sel_\tau$.

\begin{definition}[Maximal-spread pair]
\label{def:maximal_spread}
Fix a total order $\prec$ on $\Ycal$, and set $y_0:=\min_{\prec}\Ycal$. When
$A_\tau(x)\neq\varnothing$, enumerate its elements
$y_{(1)}(x),\ldots,y_{(|A_\tau(x)|)}(x)$ in decreasing order of
$\pi_{\LLM}(\cdot\mid\fNL,x)$, with ties broken by $\prec$. The
\emph{maximal-spread pair} takes the two top-ranked admissible labels,
\[
    f_\tau^{+}(x):=y_{(1)}(x),
    \qquad
    f_\tau^{-}(x):=y_{\bigl(\min\{2,\,|A_\tau(x)|\}\bigr)}(x),
\]
and both default to $y_0$ where $A_\tau(x)=\varnothing$. The two therefore
coincide exactly when $|A_\tau(x)|\le1$.
\end{definition}

In \Cref{ex:moderation}, the maximal-spread pair splits exactly on the
sarcastic borderline comments: there both labels clear the threshold,
$A_\tau(x)=\{0,1\}$, and $f_\tau^+$ takes the higher-scoring one, $f_\tau^-$
the other, while the two coincide on unambiguous comments;
$D^\star_\tau$ is the $P$-mass of the borderline region.

\begin{lemma}[Maximal-spread identity]
\label{lem:diameter_elicited}
Under \Cref{a:measurable,a:finite_Y}, the pair $f_\tau^{+},f_\tau^{-}$ lies in
the admissible core $\Sel_\tau$ and disagrees exactly on the overlap region,
\[
    \{x:f_\tau^{+}(x)\neq f_\tau^{-}(x)\}=\{x:|A_\tau(x)|\ge2\}.
\]
In particular, $d_P(f_\tau^{+},f_\tau^{-})=D^\star_\tau$.
\end{lemma}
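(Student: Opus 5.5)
The argument is direct from \Cref{def:maximal_spread}, in three steps: measurability, membership in the core, and the disagreement identity.

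\textbf{Measurability.} Because $\Ycal$ is finite and each score $x\mapsto\pi_{\LLM}(y\mid\fNL,x)$ is measurable by \Cref{a:measurable}, the space $\Xcal$ splits into finitely many measurable cells. On each cell the following are all fixed:
\begin{itemize}
  \item the set $A_\tau(x)$, which is cut out by the events $\{\pi_{\LLM}(y\mid\fNL,x)\ge\tau\}$;
  \item the ranking of the admissible labels by score, with ties broken by $\prec$, which is cut out by the events $\{\pi_{\LLM}(y\mid\fNL,x)>\pi_{\LLM}(y'\mid\fNL,x)\}$ and $\{\pi_{\LLM}(y\mid\fNL,x)=\pi_{\LLM}(y'\mid\fNL,x)\}$.
\end{itemize}
On every cell, $f_\tau^{+}$ and $f_\tau^{-}$ are constant. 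Each is therefore a finite sum of constants times indicators of measurable sets, so both are measurable. I expect this bookkeeping to be the only step needing care: one must check that the tie-breaking rule produces measurable cells. It does, because every cell is a finite Boolean combination of measurable score comparisons.

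\textbf{Membership in $\Sel_\tau$.} By \Cref{a:finite_Y}, $A_\tau(x)\ne\varnothing$ for $P$-a.e.\ $x$. On that full-measure set, $f_\tau^{+}(x)=y_{(1)}(x)$ and $f_\tau^{-}(x)=y_{(\min\{2,|A_\tau(x)|\})}(x)$ are both elements of $A_\tau(x)$. Hence both functions are admissible $P$-a.e., so $f_\tau^{\pm}\in\Sel_\tau$.

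\textbf{The disagreement set.} The enumeration $y_{(1)}(x),\dots,y_{(|A_\tau(x)|)}(x)$ lists distinct elements of $A_\tau(x)$. There are three cases:
\begin{itemize}
  \item If $|A_\tau(x)|\ge2$, the index $\min\{2,|A_\tau(x)|\}$ equals $2$, so $f_\tau^{-}(x)=y_{(2)}(x)\ne y_{(1)}(x)=f_\tau^{+}(x)$.
  \item If $|A_\tau(x)|=1$, both functions equal $y_{(1)}(x)$.
  \item If $A_\tau(x)=\varnothing$, both functions default to $y_0$.
\end{itemize}
This gives the set identity $\{x:f_\tau^{+}(x)\ne f_\tau^{-}(x)\}=\{x:|A_\tau(x)|\ge2\}$ pointwise, with no null-set exception. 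Taking $P$-measure of both sides yields $d_P(f_\tau^{+},f_\tau^{-})=D^\star_\tau$ by \Cref{def:overlap_mass}.
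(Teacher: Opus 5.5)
Your proof is correct and follows the paper's own argument. Measurability comes from \Cref{a:measurable} and the finiteness of $\Ycal$, admissibility holds $P$-a.e.\ by the nonemptiness in \Cref{a:finite_Y}, and a pointwise case split on $|A_\tau(x)|$ gives the disagreement identity; your partition of $\Xcal$ into finitely many measurable cells, built from threshold events and pairwise score comparisons, only spells out the measurability step that the paper states in one clause.
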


\begin{proof}
By \Cref{a:measurable} and finiteness of $\Ycal$ the ranked selectors are
measurable, and both pick admissible labels wherever $A_\tau(x)\neq\varnothing$,
which holds $P$-a.e.\ by \Cref{a:finite_Y}; hence $f_\tau^+,f_\tau^-\in\Sel_\tau$.
By construction they differ at $x$ precisely when $A_\tau(x)$ holds at least two
labels, so the displayed identity holds and its $P$-measure is $D^\star_\tau$.
\end{proof}

The theorem below makes the diameter tractable: $\Fcal_{\tau,\zeta}(\fNL)$ is
the open $d_P$-neighborhood of its core $\Sel_\tau$, and its diameter is
bracketed by the overlap mass.

\begin{theorem}[Geometry of the model-admissible labeling class]
\label{thm:representation}
Under \Cref{a:measurable,a:finite_Y},
\[
    \Fcal_{\tau,\zeta}(\fNL)
    =
    \left\{
        u\colon\Xcal\to\Ycal\text{ measurable}
        :d_P(u,\Sel_\tau)<\zeta
    \right\}.
\]
Moreover,
\[
    \Diam_P(\Sel_\tau)=D^\star_\tau,
    \qquad
    D^\star_\tau
    \le \Diam_{\tau,\zeta}(\fNL,P)
    \le D^\star_\tau+2\zeta.
\]
\end{theorem}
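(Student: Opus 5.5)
The plan has three parts: the representation identity, the core diameter, and the bracket for $\Diam_{\tau,\zeta}$.

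\emph{Representation.} For a measurable $u$, let $B_u:=\{x:u(x)\notin A_\tau(x)\}$. This set is measurable by \Cref{a:measurable} and finiteness of $\Ycal$. I will show $d_P(u,\Sel_\tau)=P(B_u)$, and that the infimum is attained. Once this holds, $u\in\Fcal_{\tau,\zeta}(\fNL)$ iff $P(B_u)<\zeta$ iff $d_P(u,\Sel_\tau)<\zeta$, which is the first display.

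\begin{itemize}
\item \emph{Lower bound.} Any $v\in\Sel_\tau$ satisfies $v(x)\in A_\tau(x)$ for a.e.\ $x$. Hence $u\neq v$ a.e.\ on $B_u$, and so $d_P(u,v)\ge P(B_u)$.
\item \emph{Upper bound.} Define $v:=u$ off $B_u$ and $v:=f_\tau^+$ on $B_u$. This $v$ is measurable. It lies in $\Sel_\tau$ because $f_\tau^+\in\Sel_\tau$ by \Cref{lem:diameter_elicited}. It disagrees with $u$ only on $B_u$, so $d_P(u,v)\le P(B_u)$.
\end{itemize}

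\emph{Core diameter.} For any $f_1,f_2\in\Sel_\tau$, a.e.\ on $\{|A_\tau|\le1\}$ both equal the unique admissible label. (The set $A_\tau(x)$ is nonempty a.e.\ by \Cref{a:finite_Y}.) So $f_1$ and $f_2$ can disagree only on $\{|A_\tau|\ge2\}$, giving $d_P(f_1,f_2)\le D^\star_\tau$. The maximal-spread pair of \Cref{lem:diameter_elicited} attains this value, so $\Diam_P(\Sel_\tau)=D^\star_\tau$.

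\emph{Bracket.} Since $\zeta>0$, we have $\Sel_\tau\subseteq\Fcal_{\tau,\zeta}(\fNL)$: every core element has $P(B_u)=0<\zeta$. Monotonicity of the supremum then gives $\Diam_{\tau,\zeta}\ge D^\star_\tau$. For the upper bound, take $u_1,u_2\in\Fcal_{\tau,\zeta}(\fNL)$. By the attained projections from the representation step, pick $v_i\in\Sel_\tau$ with $d_P(u_i,v_i)<\zeta$. Because $d_P$ is a pseudometric (the union bound on disagreement events gives the triangle inequality),
\[
d_P(u_1,u_2)\le d_P(u_1,v_1)+d_P(v_1,v_2)+d_P(v_2,u_2)<D^\star_\tau+2\zeta .
\]
Taking the supremum over $u_1,u_2$ yields $\Diam_{\tau,\zeta}\le D^\star_\tau+2\zeta$.

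The only delicate point is the projection step. It must produce a measurable element of $\Sel_\tau$, which the explicit patch with $f_\tau^+$ does. It must also keep strict versus non-strict inequalities consistent with the open definition of $\Fcal_{\tau,\zeta}$, which works because the infimum $d_P(u,\Sel_\tau)$ is attained and equals $P(B_u)$. Sharpness of the endpoints is deferred to the appendix and is not part of this plan.
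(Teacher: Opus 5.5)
Your proposal is correct and follows the paper's proof essentially step for step: you patch $u$ with the top-ranked admissible label (exactly $f_\tau^+$, including its default on the null set), you observe that core selections can disagree only on $\{|A_\tau|\ge2\}$ with the maximal-spread pair attaining this, and you bound the diameter by the triangle inequality through the projected core elements. The only difference is cosmetic: you package the two inclusions as the single identity $d_P(u,\Sel_\tau)=P(B_u)$ with an attained infimum, whereas the paper proves $\supseteq$ via $P(u\in A_\tau)\ge P(u=v)$ and $\subseteq$ via the same patch, as separate steps.
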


\begin{proof}[Proof sketch; full proof in \Cref{app:proofs}]
If $u$ is within $\zeta$ of some $v\in\Sel_\tau$, admissibility transfers from $v$
to $u$ off the small disagreement set, placing $u\in\Fcal_{\tau,\zeta}(\fNL)$.
Conversely, an admissible $u\in\Fcal_{\tau,\zeta}(\fNL)$ violates pointwise
admissibility only on a set of mass below $\zeta$; reassigning $u$ there to a
top-ranked admissible label $y_{(1)}(x)$ yields a core member $v\in\Sel_\tau$
that differs from $u$ on that same set, so $d_P(u,v)<\zeta$. Hence
$\Fcal_{\tau,\zeta}(\fNL)$ is exactly the open $\zeta$-neighborhood of
$\Sel_\tau$. Within the core, two selections can disagree only where
$|A_\tau(x)|\ge2$, and the maximal-spread pair attains that overlap exactly,
giving $\Diam_P(\Sel_\tau)=D^\star_\tau$. For the upper bracket, apply this same
reassignment to an arbitrary pair $u,v\in\Fcal_{\tau,\zeta}(\fNL)$, obtaining
core members $\tilde u,\tilde v\in\Sel_\tau$ with $d_P(u,\tilde u),d_P(v,\tilde
v)<\zeta$; the triangle inequality then gives
$d_P(u,v)\le d_P(\tilde u,\tilde v)+d_P(u,\tilde u)+d_P(v,\tilde v)<D^\star_\tau+2\zeta$.
\end{proof}

The additive $2\zeta$ in the bracket is not slack of the analysis: both
endpoints are attained, so no distribution-free bound improves it.

\begin{proposition}[Tightness of the coverage bracket]
\label{prop:bracket_tightness}
Both endpoints of the bracket $D^\star_\tau\le\Diam_{\tau,\zeta}(\fNL,P)
\le D^\star_\tau+2\zeta$ are sharp: each is realized as the diameter for some
choice of $P$ and admissible-set geometry:
\begin{enumerate}[label=(\alph*), itemsep=2pt]
    \item if $P$ restricted to $\{x:|A_\tau(x)|=1\}$ is nonatomic and this set
    has mass at least $2\zeta$, then
    $\Diam_{\tau,\zeta}(\fNL,P)=D^\star_\tau+2\zeta$;
    \item if $P$ is purely atomic and every atom has probability at least
    $\zeta$, then $\Diam_{\tau,\zeta}(\fNL,P)=D^\star_\tau$.
\end{enumerate}
\end{proposition}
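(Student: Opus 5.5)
The plan is to treat the two parts separately. In each part we take the upper bound from \Cref{thm:representation}, so only the matching lower bound (in (a)) or the collapse of the class onto its core (in (b)) remains to be shown.

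For (a), the target is a sequence of pairs in $\Fcal_{\tau,\zeta}(\fNL)$ whose disagreement tends to $D^\star_\tau+2\zeta$. Write $U:=\{x:|A_\tau(x)|=1\}$. This set is measurable by \Cref{a:measurable}, is disjoint from the overlap region, and has $P(U)\ge 2\zeta$.
\begin{enumerate}[label=(\roman*), itemsep=2pt]
\item Fix $\epsilon\in(0,\zeta)$. Because $P|_U$ is nonatomic, Sierpi\'nski's intermediate-value theorem gives disjoint measurable sets $B_1,B_2\subseteq U$ with $P(B_1)=P(B_2)=\zeta-\epsilon$.
\item On $U$ the maximal-spread pair agrees: $f^+_\tau=f^-_\tau$ equals the unique admissible label $y_{(1)}(x)$. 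Since $|\Ycal|\ge2$, we can fix a measurable map $w(x)\neq y_{(1)}(x)$ on $U$, for instance the $\prec$-least label different from $y_{(1)}(x)$.
\item Set $u:=w$ on $B_1$ and $u:=f^+_\tau$ elsewhere. Set $v:=w$ on $B_2$ and $v:=f^-_\tau$ elsewhere.
\item Each of $u,v$ is inadmissible exactly on its own $B_i$, which has mass $\zeta-\epsilon<\zeta$. Hence both satisfy the strict inequality defining $\Fcal_{\tau,\zeta}(\fNL)$.
\item The disagreement set of $u$ and $v$ is the disjoint union of three pieces. The first is $\{|A_\tau|\ge2\}$, where they disagree by \Cref{lem:diameter_elicited}. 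The second is $B_1$, where $u=w\ne y_{(1)}=v$. The third is $B_2$, by symmetry.
\item It follows that $d_P(u,v)=D^\star_\tau+2\zeta-2\epsilon$. Letting $\epsilon\downarrow0$ gives $\Diam_{\tau,\zeta}(\fNL,P)\ge D^\star_\tau+2\zeta$, and the upper bracket gives equality.
\end{enumerate}
The diameter is a supremum that need not be attained, which is consistent with the strict inequality in the class definition.

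For (b), the plan is to show that $\Fcal_{\tau,\zeta}(\fNL)=\Sel_\tau$ when $P$ is purely atomic with every atom of mass at least $\zeta$. Then $\Diam_{\tau,\zeta}(\fNL,P)=\Diam_P(\Sel_\tau)=D^\star_\tau$ by \Cref{thm:representation}.
\begin{enumerate}[label=(\roman*), itemsep=2pt]
\item Take $u\in\Fcal_{\tau,\zeta}(\fNL)$ and let $E:=\{x:u(x)\notin A_\tau(x)\}$. This set is measurable, and $P(E)<\zeta$.
\item Since $P$ is purely atomic, $\Xcal$ equals, up to a null set, a countable disjoint union of atoms $C_k$, each with $P(C_k)\ge\zeta$.
\item For each atom, $P(E\cap C_k)\in\{0,P(C_k)\}$. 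The value $P(C_k)$ is excluded because $P(E)<\zeta\le P(C_k)$, so $P(E\cap C_k)=0$ for every $k$.
\item Summing over the countably many atoms gives $P(E)=0$, that is, $u\in\Sel_\tau$.
\item The reverse inclusion $\Sel_\tau\subseteq\Fcal_{\tau,\zeta}(\fNL)$ is immediate.
\end{enumerate}

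The main obstacle is a measure-theoretic technicality in each part. In (a), the issue is existence of the sets $B_i$ of prescribed mass inside $U$; nonatomicity of $P|_U$ is exactly what Sierpi\'nski's theorem needs. In (b), it must be clear that "purely atomic" means $P$ is concentrated on a countable family of atoms, rather than merely having no nonatomic part in some weaker sense. I would state this reading explicitly and use the atom dichotomy for the measurable set $E$, rather than any a.e.-constancy of $u$, so that no separability of $\mathscr F_\Xcal$ is required.
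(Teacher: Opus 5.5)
Your proposal is correct and follows the paper's proof essentially step for step. In (a) the paper uses the same construction: two disjoint sets of mass $\zeta-\varepsilon$ inside the singleton region, relabeled to the $\prec$-least label other than the unique admissible one on top of $f_\tau^+$ and $f_\tau^-$, then $\varepsilon\downarrow0$. In (b) the paper likewise shows the coverage-failure set is null, so the class collapses onto $\Sel_\tau$; it argues directly from the definition of purely atomic (a set of mass below $\zeta$ contains no atom, hence is null), whereas you go through the countable atom decomposition, which gives the same conclusion.
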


\begin{proof}[Proof sketch; full proof in \Cref{app:proofs}]
For (a), nonatomicity supplies disjoint subsets $S_1,S_2$ of the singleton
region, each of mass $\zeta-\eps$. Reassigning the maximal-spread pair to a
distinct label on $S_1$ and $S_2$ keeps both labelings within coverage $\zeta$
yet adds $2(\zeta-\eps)$ of disagreement beyond the overlap mass $D^\star_\tau$;
letting $\eps\downarrow0$ pushes the diameter to $D^\star_\tau+2\zeta$. For (b),
an admissible labeling violates coverage on a set of mass below $\zeta$, which
can contain no atom once every atom has mass at least $\zeta$; that set is then
$P$-null, so the $\zeta$-neighborhood collapses onto the core and the diameter
equals $D^\star_\tau$.
\end{proof}

The identity $\Diam_P(\Sel_\tau)=D^\star_\tau$ equates the diameter of the
almost-everywhere admissible core with the mass of inputs carrying multiple
admissible labels. This ambiguity is a property of the model, prompt, and
threshold, not of the learner's sample: it persists under arbitrarily many
further observations from the same target-blind channel.

\section{Blind-Channel Risk Floors and Exact Minimax Values}
\label{sec:oracle}

This section characterizes the risk imposed by target-blind supervision. We
first derive pairwise lower bounds that hold uniformly over the sample size,
then identify the exact minimax value over the pointwise-admissible core and
relate it to globally coherent readings. The admissible-overlap mass
$D^\star_\tau$ is the central geometric quantity.

\subsection{Two-point blind-channel floor}
\label{ssec:semantic_floor}

We work in the \emph{oracle-label model}, in which the learner observes labels
decoded from oracle justifications rather than the target labels $f(X_i)$.

\begin{definition}[Oracle-labeled sample]
\label{def:oracle_sample}
An \emph{oracle-labeled sample} of size $m$ is the sequence
$S_m^{\mathrm{oracle}}:=((X_i,\widehat Y_i))_{i=1}^m$ obtained by drawing
$X_{1:m}\sim P^m$, then drawing the justifications $J_1,\ldots,J_m$
conditionally independently with $J_i\sim g(\cdot\mid\fNL,X_i)$, and setting
$\widehat Y_i:=\rho(J_i,X_i)$.
\end{definition}

The operative target enters the learner's risk but not the sampling law above,
so distinct targets can define the same statistical experiment. We turn this
into a lower bound by Le Cam's two-point method \citep{tsybakov2009introduction}:
if two model-admissible
labelings disagree on a set of mass $D$ and the oracle cannot distinguish them
there, the learner observes the same sample law under both targets, so no
measurable rule can be correct for both.

Fix $P$, $\fNL$, $g$, and $\rho$, and consider two measurable candidate targets
$f_1,f_2\colon\Xcal\to\Ycal$. To accommodate both target-blind and
target-dependent supervision, we momentarily allow the oracle to depend on the
operative target: write $g(\cdot\mid\fNL,x,f)$ for a target-indexed family of
Markov kernels (in the sense of \Cref{def:oracle} for each fixed $f$) that
specializes to the target-blind oracle $g(\cdot\mid\fNL,x)$ of
\Cref{def:oracle} exactly when it is constant in $f$. Let $P_j$ denote the law
of one observed example $(X,\rho(J_j,X))$ under target $f_j$, where
$X\sim P$ and $J_j\sim g(\cdot\mid\fNL,X,f_j)$. Under \Cref{a:kernel}, a sample
of size $m$ has law $P_j^{\otimes m}$. Write
$B:=\{x:f_1(x)\neq f_2(x)\}$ for the disagreement region and
$D:=P(B)=d_P(f_1,f_2)$ for its probability. This pairwise construction does
not require either target to be model-admissible.

\begin{theorem}[Two-point blind-channel floor]
\label{thm:semantic_floor}
For every sample size $m$ and every measurable, possibly randomized learner
$\hat h$ mapping an oracle-labeled sample to a classifier, the following
bounds hold.
\begin{enumerate}[label=(\alph*), itemsep=2pt]
    \item \emph{(Le Cam bound.)}
          \[
              \max_{j\in\{1,2\}}
              \mathbb{E}_{S\sim P_j^{\otimes m}}\bigl[d_P(\hat h(S), f_j)\bigr]
              \;\geq\;
              \frac{D}{2}\Bigl(1-\mathrm{TV}\bigl(P_1^{\otimes m},P_2^{\otimes m}\bigr)\Bigr),
          \]
          where the expectation averages over the sample and any internal
          randomization.
    \item \emph{(Unresolved-ambiguity floor.)} Suppose the oracle does not
          distinguish the two targets: for $P$-almost every $x$,
          \begin{equation}
          \label{eq:unresolved}
              \mathrm{Law}\bigl(\rho(J_1,x)\mid \fNL,x,f_1\bigr)
              = \mathrm{Law}\bigl(\rho(J_2,x)\mid \fNL,x,f_2\bigr).
          \end{equation}
          Then the two targets induce the same one-example observation law,
          $P_1=P_2$, so $P_1^{\otimes m}=P_2^{\otimes m}$ and the bound of
          part~(a) is independent of the sample size:
          \[
              \max_{j\in\{1,2\}}
              \mathbb{E}_{S\sim P_j^{\otimes m}}
              \bigl[d_P(\hat h(S), f_j)\bigr]
              \;\geq\; \frac{D}{2}
              \qquad\text{for every }m.
          \]
\end{enumerate}
\end{theorem}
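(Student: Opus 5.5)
The plan is the standard Le Cam two-point argument, carried out in three steps.

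First, reduce the risk to a testing problem via the pseudometric structure. For any classifier $h$, the triangle inequality for $d_P$ gives $d_P(h,f_1)+d_P(h,f_2)\ge d_P(f_1,f_2)=D$. More sharply, pointwise on $B$ the classifier $h(x)$ can equal at most one of $f_1(x),f_2(x)$, so $\mathbf 1\{h(x)\ne f_1(x)\}+\mathbf 1\{h(x)\ne f_2(x)\}\ge \mathbf 1_B(x)$. Integrating over $P$ gives the same conclusion. To handle randomized learners, I would treat internal randomization as an auxiliary uniform variable $U$ independent of the sample. Then $\hat h(S)$ is a measurable function of $(S,U)$, and the quantities $r_j(s):=\E_U[d_P(\hat h(s,U),f_j)]$ are measurable functions of $s$ taking values in $[0,1]$. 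Measurability of $(s,u)\mapsto d_P(\hat h(s,u),f_j)$ follows from Fubini, applied to the jointly measurable map $(s,u,x)\mapsto \hat h(s,u)(x)$; I would state this as the standing meaning of a ``measurable randomized learner.'' The pointwise inequality then gives $r_1(s)+r_2(s)\ge D$ for every $s$.

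Second, pass from pointwise to expected risk. Bound the maximum by the average: $\max_j \E_{P_j^{\otimes m}}[r_j]\ge \tfrac12(\E_{P_1^{\otimes m}}[r_1]+\E_{P_2^{\otimes m}}[r_2])$. Let $\mu:=P_1^{\otimes m}\wedge P_2^{\otimes m}$ be the common part, with density $\min(p_1,p_2)$ with respect to $\nu=P_1^{\otimes m}+P_2^{\otimes m}$. Its total mass is $1-\mathrm{TV}(P_1^{\otimes m},P_2^{\otimes m})$. Since $r_j\ge0$, we get $\E_{P_j^{\otimes m}}[r_j]\ge\int r_j\,d\mu$. Hence the average is at least $\tfrac12\int(r_1+r_2)\,d\mu\ge \tfrac D2\,\mu(\text{whole space})$, which is part~(a). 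The main obstacle is only bookkeeping: making sure the common-part measure is used rather than a cruder Pinsker-type bound, and that randomization is absorbed before the comparison.

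Third, prove part~(b) by showing $P_1=P_2$. The one-example law $P_j$ is the composition of the input marginal $P$ with the kernel $Q_j(\cdot\mid x)=\mathrm{Law}(\rho(J_j,x)\mid\fNL,x,f_j)$, exactly as in the proof of \Cref{prop:deployed_blindness}. Hypothesis~\eqref{eq:unresolved} says $Q_1(\cdot\mid x)=Q_2(\cdot\mid x)$ for $P$-a.e.\ $x$. For any measurable $E\subseteq\Xcal\times\Ycal$, write $P_j(E)=\int\sum_y \mathbf 1_E(x,y)Q_j(y\mid x)\,P(dx)$. Because $\Ycal$ is finite, the integrand is measurable, and the two integrals agree since the integrands differ only on a $P$-null set. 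Under \Cref{a:kernel}, the $m$-sample laws are the products $P_j^{\otimes m}$, so they coincide. Their total variation is therefore $0$, and part~(a) yields the bound $D/2$ for every $m$. In the target-blind case of \Cref{def:oracle}, the kernel is constant in $f$, so \eqref{eq:unresolved} holds automatically.
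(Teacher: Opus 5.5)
Your proposal is correct and follows essentially the same route as the paper's proof. Both arguments use the pointwise indicator inequality to get $d_P(h,f_1)+d_P(h,f_2)\ge D$, integrate against the common part $\min(p_1,p_2)$ of the two sample laws with max-at-least-average to reach the $\tfrac{D}{2}(1-\mathrm{TV})$ bound, and then derive $P_1=P_2$ from the a.e.\ equality of the decoded-label kernels, so the total-variation term vanishes for every $m$. Your explicit treatment of internal randomization through an auxiliary variable $U$, and of the measurability of $r_j$, makes careful a step the paper leaves implicit; it is not a different argument.
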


\begin{proof}[Proof sketch; full proof in \Cref{app:proofs}]
For (a), any returned classifier satisfies
$d_P(h,f_1)+d_P(h,f_2)\ge D$ pointwise on $B$, since $h(x)$ can match at most
one of two distinct labels; integrating this against the common part
$\min(p_1,p_2)$ of the two sample laws reduces the problem to testing
$P_1^{\otimes m}$ against $P_2^{\otimes m}$ and yields the
$\tfrac{D}{2}(1-\mathrm{TV})$ bound. For (b), condition~\Cref{eq:unresolved}
equates the one-example observation laws, $P_1=P_2$, hence
$P_1^{\otimes m}=P_2^{\otimes m}$ and the total-variation term vanishes
\emph{for every $m$}. The bound of part~(a) therefore reduces to the
sample-independent floor $D/2$: no sample size recovers what the channel never
transmits.
\end{proof}

This blind-channel bound is exact for the two-target problem: a data-independent
learner that returns $f_1$ or $f_2$ with equal probability has risk $D/2$
against either target, so the floor of part~(b) is attained. The exactness is
for the two-target minimax, not necessarily for the entire admissible class;
\Cref{cor:floor_tightness} records the general least-favorable-prior identity.

\begin{corollary}[Diameter floor for the base channel]
\label{cor:diameter_floor}
Suppose that, for every target
$f\in\Fcal_{\tau,\zeta}(\fNL)$, each observed example is generated by
$X\sim P$, $J\sim g(\cdot\mid\fNL,X)$, and $\widehat Y=\rho(J,X)$. Then, for
every sample size $m$,
\[
    \inf_{\hat h} \sup_{f\in\Fcal_{\tau,\zeta}(\fNL)}
        \mathbb{E}\bigl[d_P(\hat h(S_m^{\mathrm{oracle}}), f)\bigr]
    \;\geq\; \tfrac{1}{2}\,\Diam_{\tau,\zeta}(\fNL,P).
\]
The infimum ranges over all measurable, possibly randomized learners, and the
expectation averages over the sample and the learner's internal randomization.
\end{corollary}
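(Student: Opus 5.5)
The plan is to reduce the corollary to the two-point floor of \Cref{thm:semantic_floor}(b), applied to a near-diametral pair inside $\Fcal_{\tau,\zeta}(\fNL)$, and then pass to the supremum.

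Fix a sample size $m$, a learner $\hat h$, and $\delta>0$. If $\Diam_{\tau,\zeta}(\fNL,P)=0$ there is nothing to prove, since the risk is nonnegative. Otherwise, by the definition of $\Diam_P$ as a supremum, we can pick $f_1,f_2\in\Fcal_{\tau,\zeta}(\fNL)$ with $D:=d_P(f_1,f_2)\ge \Diam_{\tau,\zeta}(\fNL,P)-\delta$. The class is nonempty, since it contains the maximal-spread pair of \Cref{lem:diameter_elicited} (and, under \Cref{a:finite_Y}, the core $\Sel_\tau$), so this choice is legitimate.

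Next we check the unresolved condition \Cref{eq:unresolved} for this pair. By hypothesis, the observation law under every target in the class is generated by the same target-free kernel $g(\cdot\mid\fNL,X)$ and decoder $\rho$. In the target-indexed notation this means $g(\cdot\mid\fNL,x,f)$ is constant in $f$. Hence $\mathrm{Law}(\rho(J_1,x)\mid\fNL,x,f_1)$ and $\mathrm{Law}(\rho(J_2,x)\mid\fNL,x,f_2)$ coincide for every $x$; this is exactly the content of \Cref{prop:deployed_blindness}. \Cref{a:kernel} gives the product form $P_j^{\otimes m}$ of the sample law. \Cref{thm:semantic_floor}(b) then yields
\[
\sup_{f\in\Fcal_{\tau,\zeta}(\fNL)}\E\bigl[d_P(\hat h(S_m^{\mathrm{oracle}}),f)\bigr]\ \ge\ \max_{j}\E\bigl[d_P(\hat h(S),f_j)\bigr]\ \ge\ \tfrac D2\ \ge\ \tfrac12\bigl(\Diam_{\tau,\zeta}(\fNL,P)-\delta\bigr).
\]

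Finally, we let $\delta\downarrow0$ and take the infimum over $\hat h$; both steps preserve the inequality, since the bound does not depend on $\hat h$ or $m$. The only delicate point is that the diameter is a supremum that need not be attained: the tolerance class is open, so the bracket endpoint $D^\star_\tau+2\zeta$ in \Cref{prop:bracket_tightness}(a) is approached only in the limit. The $\delta$-approximation handles this. Beyond that, the main thing to verify is that the hypothesis of the corollary really makes the target-indexed oracle family constant on the class, so that \eqref{eq:unresolved} holds for $P$-almost every $x$ rather than only on average.
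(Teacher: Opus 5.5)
Your proposal is correct and follows the paper's proof essentially step for step. Blindness of the fixed-description channel (\Cref{prop:deployed_blindness}) makes every pair in $\Fcal_{\tau,\zeta}(\fNL)$ unresolved, so \Cref{thm:semantic_floor}(b) gives the floor $\tfrac12 d_P(f_1,f_2)$ uniformly in $m$; the supremum over the class dominates that two-point maximum, and the pair is then pushed toward the possibly unattained diameter, which your explicit $\delta$-approximation makes precise.
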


\begin{proof}
Every pair $f_1,f_2\in\Fcal_{\tau,\zeta}(\fNL)$ is admissible and, on the
fixed-description base channel, unresolved (\Cref{prop:deployed_blindness}), so
\Cref{thm:semantic_floor}(b) gives
$\max_{j\in\{1,2\}}\E[d_P(\hat h,f_j)]\ge\tfrac12 d_P(f_1,f_2)$ uniformly in
$m$. Since $\sup_{f}\E[d_P(\hat h,f)]$ dominates this two-point maximum,
optimizing the pair so that $d_P(f_1,f_2)$ approaches
$\Diam_{\tau,\zeta}(\fNL,P)$ yields the bound; the supremum need not be attained
(\Cref{prop:bracket_tightness}).
\end{proof}

We call a pair \emph{full-law unresolved} if its one-example observation laws
coincide. Every pair has this property under the fixed-description channel,
because the sampling law does not depend on the operative target. In the
deterministic extreme, the target-blind channel always emits labels from one
fixed reading $f'$, whose error against a target $f$ is $d_P(f,f')$. The resulting
$D/2$ floor is uniform over the sample size but remains a pairwise statement;
it does not determine the minimax risk over the entire admissible class. We
compute that full-class value next.

\subsection{Exact blind minimax values}
\label{ssec:exact_blind_values}

We now pass from a single unresolved pair to the full pointwise-admissible core
$\Sel_\tau$. The resulting minimax value is determined by the number of
admissible labels at each input. To define this multiplicity everywhere, fix
$y_0\in\Ycal$ and set
\[
    \bar A_\tau(x):=
    \begin{cases}
        A_\tau(x), & A_\tau(x)\neq\emptyset,\\
        \{y_0\}, & A_\tau(x)=\emptyset,
    \end{cases}
    \qquad
    k(x):=|\bar A_\tau(x)|.
\]
By \Cref{a:finite_Y}, $\bar A_\tau(x)=A_\tau(x)$ for $P$-almost every $x$, so
the completion is immaterial for all population quantities below.

\begin{definition}[Admissible-multiplicity functional]
\label{def:exact_blind_value}
The \emph{expected non-modal admissible mass} is
\[
    V^\star_\tau
    :=
    \E_{x\sim P}\!\left[1-\frac{1}{k(x)}\right].
\]
\end{definition}

At each input $x$, uniform randomization over $\bar A_\tau(x)$ incurs error
$1-1/k(x)$ against every admissible target label. Thus $V^\star_\tau$ is the
population risk achieved by the uniform admissible strategy.

The three central quantities play different roles.
The overlap mass $D^\star_\tau$ \emph{counts} the inputs carrying more than one
admissible label; the exact value $V^\star_\tau$ is the actual minimax error a
blind learner must incur; and the diameter $\Diam_{\tau,\zeta}$ measures the
spread of the admissible class. They coincide up to a factor of one half,
$V^\star_\tau=\tfrac12 D^\star_\tau=\tfrac12\Diam_P(\Sel_\tau)$, exactly when at
most two labels are admissible almost everywhere, and separate once higher-order
overlap has positive mass.

We show next that the uniform admissible strategy is minimax optimal
under target blindness.

\begin{theorem}[Exact blind-channel minimax over admissible selections]
\label{thm:deployed_exact_minimax}
Under \Cref{a:measurable,a:finite_Y}, suppose the supervision channel is
\emph{target-blind on $\Sel_\tau$} (automatic for the fixed-description channel,
\Cref{prop:deployed_blindness}): the one-example observation law $P_f$ it induces
under a target $f$ is common to all $f\in\Sel_\tau$. Then, for every sample size $m$,
\[
    \inf_{\hat h}\ \sup_{f\in\Sel_\tau}
        \E_{S\sim P_f^{\otimes m}}\!\left[d_P(\hat h(S),f)\right]
    = V^\star_\tau,
\]
the infimum ranging over all randomized measurable learners and the expectation
including their internal randomness. The common value is attained by a
data-independent learner.
\end{theorem}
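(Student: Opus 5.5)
The proof has two halves: an upper bound from an explicit data-independent learner, and a matching lower bound from a least-favorable prior over $\Sel_\tau$.

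\emph{Upper bound.} Let $\hat h_{\mathrm{unif}}$ ignore the sample and, independently at each input, draw $h(x)$ uniformly from $\bar A_\tau(x)$. To avoid defining an uncountable product of randomizations, realize this with one uniform variable $U\sim\mathrm{Unif}[0,1]$ and set $h_U(x)$ to be the $\lceil U k(x)\rceil$-th element of $\bar A_\tau(x)$ in the fixed order $\prec$. This map is jointly measurable in $(U,x)$ by \Cref{a:measurable} and finiteness of $\Ycal$, and for each $x$ the law of $h_U(x)$ is uniform on $\bar A_\tau(x)$. For any $f\in\Sel_\tau$ we have $f(x)\in\bar A_\tau(x)$ for $P$-a.e.\ $x$, so $\Pr_U[h_U(x)\ne f(x)]=1-1/k(x)$. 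Fubini then gives $\E_U d_P(h_U,f)=V^\star_\tau$ for every $f\in\Sel_\tau$. The sample plays no role, so this holds for every $m$.

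\emph{Lower bound.} By target blindness on $\Sel_\tau$, the sample law $P_f^{\otimes m}=:\mathbb Q$ is common to all $f\in\Sel_\tau$. Hence for any learner $\hat h$, the random classifier $H:=\hat h(S)$ with $S\sim\mathbb Q$ plus internal randomness has a law $\mu$ that does not depend on $f$. Thus $\sup_f\E[d_P(H,f)]\ge \int \E_\mu[d_P(H,f)]\,d\nu(f)$ for any prior $\nu$ on $\Sel_\tau$, and the problem reduces to one without data.

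I would take a finite least-favorable family. Let $L:=\mathrm{lcm}\{1,\dots,|\Ycal|\}$ and, for $r\in\{0,\dots,L-1\}$, let $f_r(x)$ be the $(r \bmod k(x))$-th element of $\bar A_\tau(x)$. Each $f_r$ is measurable and lies in $\Sel_\tau$. For fixed $x$, as $r$ ranges uniformly over $\{0,\dots,L-1\}$, $f_r(x)$ is exactly uniform on $\bar A_\tau(x)$ because $k(x)\mid L$. So for any fixed classifier $h$ and any $x$, $\frac1L\sum_r\mathbf 1[h(x)\ne f_r(x)]\ge 1-1/k(x)$: this equals $1-1/k(x)$ if $h(x)\in\bar A_\tau(x)$ and $1$ otherwise. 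Integrating over $P$ gives $\frac1L\sum_r d_P(h,f_r)\ge V^\star_\tau$ for every $h$. Averaging over $H\sim\mu$ (Tonelli, using joint measurability of $(\omega,x)\mapsto H_\omega(x)$, which holds because learners are measurable) gives $\max_r\E[d_P(H,f_r)]\ge\frac1L\sum_r\E[d_P(H,f_r)]\ge V^\star_\tau$. Combined with the upper bound, this yields equality and shows the infimum is attained by the data-independent $\hat h_{\mathrm{unif}}$.

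The main obstacle is constructing a prior under which every $h$ pays at least $V^\star_\tau$ while staying measurable and inside $\Sel_\tau$. A naive independent-per-input prior is not a measurable random function. The lcm cyclic family above avoids this by making the pointwise marginals exactly uniform with finitely many measurable selectors. The remaining technical points, both routine, are the joint measurability needed for Fubini and the $P$-null set where $A_\tau=\emptyset$, which the completion $\bar A_\tau$ absorbs.
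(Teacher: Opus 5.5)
Your proof is correct and follows essentially the same route as the paper. The $\operatorname{lcm}(1,\ldots,|\Ycal|)$ cyclic selector family serves as the least-favorable prior, and target blindness makes the learner's output law independent of the index, giving the bound $V^\star_\tau$; the only cosmetic difference is that you realize the achieving uniform strategy through a single $U$ and $\lceil U k(x)\rceil$, whereas the paper draws a uniform index from the same cyclic family, so that one finite family does double duty.
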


\begin{proof}[Proof sketch; full proof in \Cref{app:proofs}]
\emph{Achievability.} Consider the data-independent learner given by the
measurable Markov kernel $x\mapsto\mathrm{Uniform}(\bar A_\tau(x))$ (a uniform
draw from the finite cyclic selector family of the full proof, hence measurable
since $\bar A_\tau$ is a measurable finite-valued multifunction). It errs against
any fixed $f\in\Sel_\tau$ with probability $1-1/k(x)$ at $x$; its risk is
therefore $V^\star_\tau$ against every target.

\emph{Lower bound.} Target-blindness on $\Sel_\tau$ makes the sample law
identical across the class, so no learner's output depends on which target is
operative. Averaging the risk over a finite cyclic family of admissible
selections whose uniform mixture places uniform mass on $\bar A_\tau(x)$ at
every $x$ shows that no learner improves on $\E_x[1-1/k(x)]=V^\star_\tau$. The
family is thus least favorable, and the two bounds meet for every sample
size~$m$.
\end{proof}

For any target family $\mathcal G$, define its sample-size-$m$ minimax risk by
\[
    R_m(\mathcal G)
    :=
    \inf_{\hat h}\ \sup_{f\in\mathcal G}\
        \E_{S\sim P_f^{\otimes m}}\!\left[d_P(\hat h(S),f)\right].
\]
The theorem above establishes $R_m(\Sel_\tau)=V^\star_\tau$ for every $m$,
and its pointwise term $1-1/k(x)$ coincides with the subset value of the
adversarial multiclass zero-one game of \citet{fathony2016adversarial}.
The equality admits a classical decision-theoretic reading as well: the
cyclic selector family is least favorable for an uninformative experiment
in the sense of \citet{blackwell1954games}. Together these two viewpoints
furnish an explicit finite measurable selector family for the
model-induced sets $\bar A_\tau(x)$, together with an aggregate value
$V^\star_\tau$ estimable from held-out unlabeled inputs.

The equality $R_m(\Sel_\tau)=V^\star_\tau$ is exact on the
pointwise-admissible core $\Sel_\tau$, but transfers automatically neither
to the wider $\zeta$-neighborhood $\Fcal_{\tau,\zeta}(\fNL)$ nor to a class
of globally coherent readings: the former tolerates admissibility
violations on a set of small probability, while the latter constrains how
labels are selected jointly across inputs. These are genuinely distinct
decision problems and are treated separately.

\subsection{From pointwise selectors to coherent readings}
\label{ssec:coherent_bridge}

The defining condition of $\Fcal_{\tau,\zeta}(\fNL)$ constrains admissibility
pointwise, so a labeling in $\Fcal_{\tau,\zeta}(\fNL)$ may vary its
interpretation across inputs without corresponding to any single global
reading. We therefore separate labelings induced by one reading clause from
those assembled by pointwise selection. \Cref{tab:labeling_classes} summarizes
the three labeling classes used in this paper.

\begin{table}[t]
\centering
\small
\setlength{\tabcolsep}{4.5pt}
\begin{tabular}{@{}llll@{}}
\toprule
\textbf{Object} & \textbf{Construction} & \textbf{Interpretation} & \textbf{Governs} \\
\midrule
$\Sel_\tau$ (Def.~\ref{def:ambiguity_set}) & pointwise admissible ($P$-a.e.) & exact theoretical core & exact minimax value $V^\star_\tau$ \\
$\Fcal_{\tau,\zeta}(\fNL)$ (Thm.~\ref{thm:representation}) & within $\zeta$ of the core & tolerance class & $\zeta$-bracketed neighborhood minimax \\
$\Fcal^{\mathrm{read}}_{\tau,\zeta}(\fNL)$ (Def.~\ref{def:coherent_subclass}) & induced by a global reading clause & semantic candidate pool & coherent-reading bridge \\
\bottomrule
\end{tabular}
\caption{The three labeling classes and the result each governs.}
\label{tab:labeling_classes}
\end{table}

\begin{definition}[Coherent-reading subclass]
\label{def:coherent_subclass}
Let $\mathcal R_{\mathrm{read}}$ be a finite collection of natural-language
\emph{reading clauses}, each specifying an interpretation of $\fNL$. For
$a\in\mathcal R_{\mathrm{read}}$, let $\iota(\fNL,a)$ denote the description
obtained by appending clause $a$ to $\fNL$, and define its induced labeling by
\[
    f_a(x)
    :=\arg\max_{y\in\Ycal}
      p_{\LLM}\!\bigl(y\mid\Pi(\iota(\fNL,a),x)\bigr),
\]
with ties resolved by a fixed ordering of $\Ycal$. We extend \Cref{a:measurable}
to every appended clause: for each $a\in\mathcal R_{\mathrm{read}}$ and
$y\in\Ycal$, the map $x\mapsto p_{\LLM}(y\mid\Pi(\iota(\fNL,a),x))$ is
$\mathscr F_{\Xcal}$-measurable. With $\Ycal$ finite and the tie-break fixed,
each $f_a$ is then a measurable labeling, so membership in
$\Fcal_{\tau,\zeta}(\fNL)$ is well posed. The \emph{coherent-reading
subclass} consists of the induced labelings that satisfy the admissibility
criterion:
\[
    \Fcal^{\mathrm{read}}_{\tau,\zeta}(\fNL)
    :=\{f_a:a\in\mathcal R_{\mathrm{read}}\}
      \cap\Fcal_{\tau,\zeta}(\fNL).
\]
\end{definition}

The coherent-reading subclass is relative to the chosen clause collection and
need not exhaust all coherent interpretations of $\fNL$. Enlarging
$\mathcal R_{\mathrm{read}}$ can only enlarge
$\Fcal^{\mathrm{read}}_{\tau,\zeta}(\fNL)$.

\begin{remark}[What the diameter does and does not measure]
\label{rem:coherent_scope}
The inclusion
$\Fcal^{\mathrm{read}}_{\tau,\zeta}(\fNL)
\subseteq\Fcal_{\tau,\zeta}(\fNL)$ gives
\[
    \Diam_P\bigl(\Fcal^{\mathrm{read}}_{\tau,\zeta}(\fNL)\bigr)
    \;\le\;
    \Diam_{\tau,\zeta}(\fNL,P)
    \;\le\;
    D^\star_\tau+2\zeta.
\]
The identity $D^\star_\tau=\Diam_P(\Sel_\tau)$ concerns the pointwise-admissible
core. It does not imply that $D^\star_\tau$ bounds the coherent-reading
diameter, because members of the latter class may use their $\zeta$ coverage
slack. Conversely, the maximal-spread pair attaining $D^\star_\tau$ may switch
labels across inputs and need not arise from any single global reading. Thus
$D^\star_\tau$ measures pointwise model-admissible variation, not necessarily
the spread among coherent interpretations.
\end{remark}

We next compare the pointwise value $V^\star_\tau$ with the minimax value of a
finite coherent family. If a channel is target-blind on a finite target class
$\mathcal T$, then $R_m(\mathcal T)$ is independent of $m$; denote this common
value by $V_{\mathrm{blind}}(\mathcal T):=R_m(\mathcal T)$. For a prior
$\lambda$ on a coherent family $\mathcal C=\{f_a\}$, let
$\mu_x^\lambda(y):=\sum_{a:f_a(x)=y}\lambda_a$ denote the induced label
distribution at input $x$.

\begin{definition}[$\eta$-uniform coverage]
\label{def:eta_coverage}
Fix $\eta\ge0$. A finite family
$\mathcal C\subseteq\Fcal^{\mathrm{read}}_{\tau,\zeta}(\fNL)$ is
\emph{$\eta$-uniformly covering} for $\Sel_\tau$ if there exists a distribution
$\lambda^\star\in\Delta(\mathcal C)$ such that
\[
    \E_{x\sim P}\!\left[
        \max_{y\in\Ycal}\mu_x^{\lambda^\star}(y)-\frac{1}{k(x)}
    \right]
    \le \eta.
\]
\end{definition}

Here $\eta$ bounds the \emph{average} gap between the mixture's modal
concentration $\max_y\mu_x^{\lambda^\star}(y)$ and the uniform-admissible
benchmark $1/k(x)$. This per-input gap can be negative where a reading places
mass outside $A_\tau(x)$, so it is not a pointwise excess; only its average,
together with $\zeta$, controls the distance between the coherent and pointwise
blind values. Smaller $\eta$ indicates that the coherent family more closely
reproduces the local spread of the pointwise-admissible core.

\begin{theorem}[Coherent-reading bridge]
\label{thm:coherent_bridge}
Let $\mathcal C\subseteq\Fcal^{\mathrm{read}}_{\tau,\zeta}(\fNL)$ be finite
and nonempty, and let the supervision channel be target-blind on
$\mathcal C$. Then the coherent blind value never exceeds the core value by
more than $\zeta$,
\[
    V_{\mathrm{blind}}(\mathcal C)\le V^\star_\tau+\zeta,
\]
and if $\mathcal C$ is moreover $\eta$-uniformly covering for $\Sel_\tau$,
the two values agree up to $\max\{\eta,\zeta\}$:
\[
    V^\star_\tau-\eta
    \;\le\; V_{\mathrm{blind}}(\mathcal C)
    \;\le\; V^\star_\tau+\zeta,
    \qquad
    \bigl|V^\star_\tau-V_{\mathrm{blind}}(\mathcal C)\bigr|
    \;\le\; \max\{\eta,\zeta\}.
\]
\end{theorem}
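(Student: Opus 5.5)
Since the channel is target-blind on the finite family $\mathcal C$, the sample law $P_f^{\otimes m}$ is common to all $f\in\mathcal C$. Any learner therefore induces, after averaging over the sample and its internal randomness, a single Markov kernel $x\mapsto q_x\in\Delta(\Ycal)$. Its risk against $f_a$ is $\E_x[1-q_x(f_a(x))]$. The blind value is then the value of a finite game:
\[
V_{\mathrm{blind}}(\mathcal C)=\inf_{q}\max_{a}\E_x\bigl[1-q_x(f_a(x))\bigr].
\]
The first step is to justify this reduction, using Fubini and measurability of the kernel, exactly as in the lower-bound half of \Cref{thm:deployed_exact_minimax}. Data-independent kernels are themselves learners, so the reduction is an equality.

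\emph{Upper bound.} I will use the data-independent uniform admissible kernel $q_x=\mathrm{Uniform}(\bar A_\tau(x))$ from the achievability part of \Cref{thm:deployed_exact_minimax}. Fix $f_a\in\mathcal C\subseteq\Fcal_{\tau,\zeta}(\fNL)$. Where $f_a(x)\in A_\tau(x)$ the kernel errs with probability $1-1/k(x)$, and elsewhere it errs with probability at most $1$. Hence
\[
\E_x[1-q_x(f_a(x))]\le \E_x[1-1/k(x)]+\Pr_x[f_a(x)\notin A_\tau(x)]<V^\star_\tau+\zeta,
\]
because the violation set has mass below $\zeta$ by definition of $\Fcal_{\tau,\zeta}(\fNL)$. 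Taking the maximum over $a$ gives $V_{\mathrm{blind}}(\mathcal C)\le V^\star_\tau+\zeta$. (An alternative is to combine \Cref{thm:representation} with the core value, but the direct bound is cleaner.)

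\emph{Lower bound.} I will use the covering prior $\lambda^\star$ as a Bayes prior. For any kernel $q$, the maximum over $a$ dominates the $\lambda^\star$-average:
\[
\max_a\E_x[1-q_x(f_a(x))]\ge\sum_a\lambda^\star_a\E_x[1-q_x(f_a(x))]=\E_x\Bigl[1-\sum_y q_x(y)\mu^{\lambda^\star}_x(y)\Bigr].
\]
Next, $\sum_y q_x(y)\mu_x^{\lambda^\star}(y)\le\max_y\mu_x^{\lambda^\star}(y)$ pointwise, since $q_x$ is a probability vector. So the Bayes risk is at least $\E_x[1-\max_y\mu_x^{\lambda^\star}(y)]$. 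By $\eta$-uniform coverage,
\[
\E_x\bigl[\max_y\mu_x^{\lambda^\star}(y)\bigr]\le\E_x[1/k(x)]+\eta,
\]
and therefore the Bayes risk is at least $V^\star_\tau-\eta$. Taking the infimum over $q$ gives the lower bound. The two-sided estimate $|V^\star_\tau-V_{\mathrm{blind}}(\mathcal C)|\le\max\{\eta,\zeta\}$ then follows immediately.

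\emph{Main obstacle.} The only delicate point is the reduction step. One must check that averaging an arbitrary randomized, data-dependent learner over the common sample law yields a jointly measurable kernel $q_x$. One must also check that the interchange $\E_S\E_x=\E_x\E_S$ is valid. Both follow from Fubini, since $\hat h(S)(x)$ is jointly measurable and $\Ycal$ is finite. I would cite the argument already used for \Cref{thm:deployed_exact_minimax} rather than redo it. The rest of the proof is elementary, and no minimax theorem is needed, because each direction uses only a specific strategy or a specific prior.
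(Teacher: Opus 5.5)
Your proposal is correct and follows the paper's proof essentially step for step. Target blindness reduces the problem to data-independent randomized classifiers. The uniform-on-$\bar A_\tau(x)$ strategy, together with the sub-$\zeta$ admissibility-violation mass, gives the upper bound. The $\lambda^\star$-Bayes risk $\E_x[1-\max_y\mu^{\lambda^\star}_x(y)]$ combined with the coverage condition gives the lower bound. The only cosmetic difference is that you use just the inequality $\sum_y q_x(y)\mu^{\lambda^\star}_x(y)\le\max_y\mu^{\lambda^\star}_x(y)$. The paper states the exact Bayes-risk identity attained by a plurality classifier, but only that one direction is needed.
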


\begin{proof}[Proof sketch; full proof in \Cref{app:proofs}]
For the upper bound, consider the data-independent strategy that predicts
uniformly on $\bar A_\tau(x)$. Against any $f_a\in\mathcal C$, its pointwise
error is $1-1/k(x)$ wherever $f_a(x)\in A_\tau(x)$ and is at most one
elsewhere. Since each $f_a$ violates admissibility on a set of probability
less than $\zeta$, its risk is at most $V^\star_\tau+\zeta$.

For the lower bound, place the covering distribution $\lambda^\star$ on
$\mathcal C$. Target blindness makes the learner's observation law independent
of the sampled target. At input $x$, the smallest Bayes error against the
induced label distribution $\mu_x^{\lambda^\star}$ is
$1-\max_y\mu_x^{\lambda^\star}(y)$. Therefore
\[
    V_{\mathrm{blind}}(\mathcal C)
    \ge \E_x\!\left[1-\max_y\mu_x^{\lambda^\star}(y)\right]
    = V^\star_\tau
      -\E_x\!\left[\max_y\mu_x^{\lambda^\star}(y)-\frac1{k(x)}\right]
    \ge V^\star_\tau-\eta.
\]
\end{proof}

\begin{remark}[Reading the bridge]
\label{rem:bridge_reading}
The bound separates two approximation errors. The tolerance $\zeta$ measures
departures from pointwise admissibility, whereas $\eta$ measures how closely a
mixture of coherent readings matches the balanced label distribution of the
pointwise core. If both vanish, the coherent and pointwise blind values agree.
The theorem does not ensure that a given reading pool has small coverage
error. Both quantities are pool-relative and should be reported together: a
small value of $\eta$ is informative only when the corresponding admissibility
tolerance $\zeta$ is also small.
\end{remark}

The lower bound is operational only if the coverage error of the supplied
reading pool can be bounded from data. Sample splitting provides such a
certificate. On a fitting sample
$z^{\mathrm{fit}}_{1:N}$, choose
\[
    \widehat\lambda
    \in\arg\min_{\lambda\in\Delta(\mathcal C)}
    \frac1N\sum_{i=1}^N
    \left(
        \max_y\mu_{z_i^{\mathrm{fit}}}^{\lambda}(y)
        -\frac1{k(z_i^{\mathrm{fit}})}
    \right).
\]
Evaluate the same bounded loss at $\widehat\lambda$ on an independent audit
sample. Conditional on the fitting sample, $\widehat\lambda$ is fixed, so a
standard one-sided concentration bound yields a valid population upper bound
on its coverage error. Reusing the fitting sample would instead require a
uniform bound over $\lambda$.

\section{Finite-Sample Certification of Blind-Channel Values}
\label{sec:certificate}

We construct finite-sample certificates from held-out inputs drawn from the
deployment distribution $P$. Empirical admissible overlap certifies a pairwise
risk floor and brackets the model-admissible diameter, while admissible-set
multiplicities certify the exact core value $V^\star_\tau$. The latter can be
strictly sharper when more than two labels overlap. These guarantees are
specific to the audited deployment distribution.

The risk certificates are one-sided. A positive lower bound establishes a
minimax floor; a small or zero bound is inconclusive, because sampling
uncertainty may dominate the observed disagreement or the audited pair may not
witness the full class diameter.

The section builds several certificates from a single overlap audit, and their
scope varies along two axes: the \emph{access mode} (exposed decoding
probabilities versus sampled decoding) and the \emph{target quantity} (the
pairwise blind floor, the model-admissible diameter, or the exact core value
$V^\star_\tau$). The reader may use the following map. Under exposed
probabilities, \Cref{prop:master_statistic} certifies the diameter band and the
pairwise floor; \Cref{thm:end_to_end} packages the pairwise floor for the
canonical maximal-spread pair into the headline canonical-pair certificate; and
\Cref{prop:vstar_certificate} sharpens this to the exact value $V^\star_\tau$
using admissible-set multiplicities. \Cref{cor:finite_candidate} lifts the
pairwise floor to a finite family by a union bound. Under sampled decoding, the
same three quantities are certified with an enlarged plug-in radius
(\Cref{cor:plugin_dstar}; \Cref{prop:plugin_vstar_certificate}). \Cref{alg:certificate}
collects the exposed and sampled procedures into one audit.

\subsection{Estimating model-admissible overlap}
\label{ssec:master_statistic}

The admissible-overlap mass satisfies
$D^\star_\tau=\E[\mathbf 1\{|A_\tau(X)|\ge2\}]$. Its Bernoulli integrand records
whether the model admits multiple labels at an input and is observable from
the description and input alone. Consequently, $D^\star_\tau$ can be estimated
from unlabeled deployment inputs at the usual $N^{-1/2}$ rate.

\begin{proposition}[Master ambiguity statistic]
\label{prop:master_statistic}
Let $X_1,\ldots,X_N\overset{\mathrm{i.i.d.}}{\sim}P$, and suppose the
admissible sets $A_\tau(X_i)$ are available. Define
\[
    \widehat D^\star
    :=\frac1N\sum_{i=1}^N
      \mathbf 1\!\left\{|A_\tau(X_i)|\ge2\right\},
    \qquad
    \eps_N:=\sqrt{\frac{\log(2/\delta)}{2N}}.
\]
Then, with probability at least $1-\delta$,
\[
    \bigl|\widehat D^\star-D_\tau^\star\bigr|\le\eps_N.
\]
On the same event,
\[
    \Diam_{\tau,\zeta}(\fNL,P)
    \in
    \left[
        (\widehat D^\star-\eps_N)_+,
        \min\!\left\{1,\widehat D^\star+\eps_N+2\zeta\right\}
    \right].
\]
Under fixed-description target-blind supervision, half the lower endpoint is
also a valid one-sided lower confidence bound on the minimax risk floor
witnessed by the maximal-spread pair.
\end{proposition}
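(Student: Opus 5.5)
The plan is to treat $\widehat D^\star$ as an empirical mean of i.i.d.\ Bernoulli indicators and then transfer the resulting two-sided interval through the earlier deterministic results. Set $Z_i:=\mathbf 1\{|A_\tau(X_i)|\ge2\}$. By \Cref{a:measurable} and finiteness of $\Ycal$, each event $\{x:|A_\tau(x)|\ge2\}$ is a finite union of finite intersections of the measurable sets $\{x:\pi_{\LLM}(y\mid\fNL,x)\ge\tau\}$, so the $Z_i$ are well-defined random variables. They are i.i.d.\ because the $X_i$ are, take values in $[0,1]$, and have mean $\E Z_i=D^\star_\tau$ by \Cref{def:overlap_mass}. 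Applying the two-sided Hoeffding inequality gives
\[
\Pr\bigl(|\widehat D^\star-D^\star_\tau|>t\bigr)\le 2e^{-2Nt^2}.
\]
Choosing $t=\eps_N$ makes the right side exactly $\delta$, which proves the first claim. No use of the supervision channel is needed here, since the statistic depends only on the inputs and the admissibility channel.

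For the diameter bracket, I would work on the event $E:=\{|\widehat D^\star-D^\star_\tau|\le\eps_N\}$ and invoke \Cref{thm:representation}, which gives $D^\star_\tau\le\Diam_{\tau,\zeta}(\fNL,P)\le D^\star_\tau+2\zeta$.
\begin{itemize}
\item \emph{Lower endpoint.} On $E$ we have $D^\star_\tau\ge\widehat D^\star-\eps_N$. Since any diameter is nonnegative, we may take positive parts.
\item \emph{Upper endpoint.} On $E$ we have $D^\star_\tau+2\zeta\le\widehat D^\star+\eps_N+2\zeta$. The diameter is also trivially at most $1$, because $d_P$ is a probability, so the minimum with $1$ is valid.
\end{itemize}
Both bounds hold simultaneously on the single event $E$, so no extra union bound is required.

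For the final sentence, I would combine \Cref{lem:diameter_elicited} with \Cref{thm:semantic_floor}(b). The lemma gives $d_P(f^+_\tau,f^-_\tau)=D^\star_\tau$. Under the fixed-description channel, \Cref{prop:deployed_blindness} shows that this pair is unresolved in the sense of \eqref{eq:unresolved}. The theorem then yields the floor $D^\star_\tau/2$ for every $m$, and on $E$ this floor is at least $\tfrac12(\widehat D^\star-\eps_N)_+$.

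The main subtlety is the scope of the confidence statement rather than any computation. The certificate is over the draw of the held-out audit inputs, which must be independent of the learner's sample. The floor itself is deterministic, so the only randomness entering the coverage guarantee is $E$. I would state this explicitly: the admissible sets must be observed exactly, not estimated, to avoid the plug-in corrections of \Cref{ssec:plugin}.
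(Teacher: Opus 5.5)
Your proposal is correct and follows the paper's proof essentially step for step. You apply Hoeffding to the i.i.d.\ Bernoulli indicators $\mathbf 1\{|A_\tau(X_i)|\ge2\}$, then on that single event substitute into the bracket $D^\star_\tau\le\Diam_{\tau,\zeta}(\fNL,P)\le D^\star_\tau+2\zeta$ together with $\Diam_{\tau,\zeta}\le1$, and finally invoke the two-point floor $D^\star_\tau/2$ for the maximal-spread pair, exactly as the paper does.
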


\begin{proof}
Set $Z_i:=\mathbf 1\{|A_\tau(X_i)|\ge2\}$. The variables $Z_1,\ldots,Z_N$ are
i.i.d.\ Bernoulli with mean $D^\star_\tau$, and
$\widehat D^\star=N^{-1}\sum_i Z_i$. Hoeffding's inequality therefore gives,
with probability at least $1-\delta$,
\[
    (\widehat D^\star-\eps_N)_+
    \le D^\star_\tau
    \le \min\{1,\widehat D^\star+\eps_N\}.
\]
On this event, substituting these bounds into
$D^\star_\tau\le\Diam_{\tau,\zeta}(\fNL,P)
\le D^\star_\tau+2\zeta$ and using
$\Diam_{\tau,\zeta}(\fNL,P)\le1$ gives the stated confidence band.

Finally, the maximal-spread pair is admissible and has disagreement
$D^\star_\tau$. Under target-blind supervision its two-point minimax floor is
$D^\star_\tau/2$, which is at least
$(\widehat D^\star-\eps_N)_+/2$ on the same event.
\end{proof}

The geometry places the diameter in the partially identified interval
$ \left[D^\star_\tau,\ \min\{1,D^\star_\tau+2\zeta\}\right],$
where truncation reflects that a disagreement probability cannot exceed one
\citep{imbens2004confidence}. The audit estimates
$D^\star_\tau$, while $\zeta$ remains a fixed tolerance. Propagating a
confidence interval for $D^\star_\tau$ through these deterministic endpoints
therefore yields the diameter band directly; no additional correction for an
estimated, potentially collapsing identification slack is required
\citep{stoye2009more}.

\begin{remark}[Sharper radii via empirical Bernstein]
\label{rem:empirical_bernstein}
We use Hoeffding's inequality for a simple distribution-free certificate. At a
fixed sample size, an empirical-Bernstein bound \citep{maurer2009empirical}
replaces the Hoeffding radius
$\eps_N=\sqrt{\log(2/\delta)/(2N)}$ with a variance-adaptive radius of order
$\sqrt{\widehat D^\star\log(2/\delta)/N}+\log(2/\delta)/N$.
This can substantially tighten the certificate when overlap is rare; the
remaining arguments are unchanged after substituting any valid radius. If the
auditor chooses the sample size adaptively, a fixed-$N$ bound is insufficient.
Betting intervals \citep{waudbysmith2024estimating} or confidence sequences
\citep{howard2021time} provide variance-adaptive, anytime-valid alternatives.
\end{remark}

\subsection{The blind pairwise and finite-family floors}
\label{ssec:floor_certificate}

Target blindness is a structural property of the channel that we assume
throughout; it is not estimated by the audit. Under this assumption the audit
estimates only the disagreement mass of an independently selected admissible
pair, which suffices to certify a risk floor. Fix
$f_a,f_b\in\Fcal_{\tau,\zeta}(\fNL)$ and write $D_{ab}=P(f_a\neq f_b)$ for their
disagreement mass. For $u\in\{a,b\}$, let $P_u$ denote the one-example
observation law induced by target $f_u$ through the fixed-description base
channel. A measurable, possibly
randomized learner maps a sample $S\sim P_u^{\otimes m}$ to a classifier
$\hat h(S)$ of risk $d_P(\hat h(S),f_u)$. Independently of both $S$ and the
choice of $(f_a,f_b)$, draw $z_1,\ldots,z_N\sim P$ and estimate the disagreement
mass by
\[
    \widehat D_{ab}=\frac{1}{N}\sum_{i=1}^N
    \mathbf 1\!\left\{f_a(z_i)\neq f_b(z_i)\right\}.
\]

If the channel is target-blind on $\{f_a,f_b\}$ (so $P_a=P_b$ and
$P_a^{\otimes m}=P_b^{\otimes m}$ for every $m$), the two-point argument of
\Cref{thm:semantic_floor}(b) gives the population floor
$\max_{u\in\{a,b\}}\E_{S\sim P_u^{\otimes m}}[d_P(\hat h(S),f_u)]\ge D_{ab}/2$,
uniformly in $m$ and over all measurable learners. Its empirical version replaces
$D_{ab}$ by the one-sided Hoeffding lower bound
$D_{L,ab}:=[\widehat D_{ab}-\sqrt{\log(2/\delta)/2N}]_+$, valid with probability
at least $1-\delta$, so that the certified floor is $D_{L,ab}/2$; we record both
as \Cref{thm:floor_certificate} in \Cref{app:proofs}. The certificate is
algorithm-independent: $\hat h$ ranges over all measurable rules, so the floor is
a property of the supervision channel, not of any learner. Its only sampling
requirement is independence of the $N$ audit inputs; no repeated oracle call
enters. We report the two-sided radius $\eps_N=\sqrt{\log(2/\delta)/2N}$
throughout under a single confidence budget, which for a one-sided floor is
deliberately conservative.

Lifting the certificate from a single pair to a fixed finite family
$\{f^{(1)},\dots,f^{(K)}\}\subseteq\Fcal_{\tau,\zeta}(\fNL)$ costs only a union
bound over the $\binom K2$ pairs, widening each per-pair radius to
$\sqrt{\log(2\binom K2/\delta)/2N}$ and certifying
\[
    \inf_{\hat h}\max_k\E[d_P(\hat h(S),f^{(k)})]\ge\max_{a<b}D_{L,ab}/2
\]
(\Cref{cor:finite_candidate} in \Cref{app:proofs}). Once the admissible sets are
available, the certified floor is a property of the fixed labelings and the audit
inputs alone; no further reading-specific oracle calls are required.

\begin{remark}[Scope of the minimax certificate]
\label{rem:deployed_minimax_scope}
\Cref{cor:finite_candidate} bounds the worst-case risk over the retained
admissible readings, a maximin (or $\Gamma$-minimax) criterion in the sense of
the maxmin expected-utility model for a set of priors \citep{gilboa1989maxmin}.
It protects against every retained reading and, in particular, places no prior
over them, so it is the appropriate target when no defensible distribution over
intended readings is available. Should external evidence supply reading weights,
one may instead evaluate the prior-weighted average risk; the present
certificate concerns the worst-case quantity, not that average.
\end{remark}

This overlap statistic also characterizes the population zero-overlap regime
and upper-bounds unresolved overlap from data.

\begin{corollary}[Zero-overlap regime]
\label{cor:dstar_zero}
If $D^\star_\tau=0$, then $\Diam_P(\Sel_\tau)=0$ and
$\Diam_{\tau,\zeta}(\fNL,P)\le2\zeta$; conversely,
$\Diam_{\tau,\zeta}(\fNL,P)>2\zeta$ implies
$D^\star_\tau\ge\Diam_{\tau,\zeta}(\fNL,P)-2\zeta>0$. Consequently, the certified
upper bound $\widehat D^\star+\eps_N$ on $D^\star_\tau$ upper-bounds every
pairwise $D/2$ obstruction in $\Fcal_{\tau,\zeta}(\fNL)$ by
$\zeta+\tfrac12(\widehat D^\star+\eps_N)$.
\end{corollary}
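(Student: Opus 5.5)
The corollary follows by substituting into the bracket of \Cref{thm:representation}; no new construction is needed. There are three steps.

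\emph{Step 1: the zero-overlap case.} If $D^\star_\tau=0$, the identity $\Diam_P(\Sel_\tau)=D^\star_\tau$ immediately gives $\Diam_P(\Sel_\tau)=0$. The upper bracket $\Diam_{\tau,\zeta}(\fNL,P)\le D^\star_\tau+2\zeta$ then reduces to $\Diam_{\tau,\zeta}(\fNL,P)\le 2\zeta$.

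\emph{Step 2: the converse.} I rearrange the same upper bracket as $D^\star_\tau\ge\Diam_{\tau,\zeta}(\fNL,P)-2\zeta$. If $\Diam_{\tau,\zeta}(\fNL,P)>2\zeta$, the right-hand side is strictly positive, so $D^\star_\tau>0$. This is the contrapositive of Step 1, stated quantitatively.

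\emph{Step 3: the data-driven statement.} I work on the Hoeffding event of \Cref{prop:master_statistic}, which has probability at least $1-\delta$ and on which $D^\star_\tau\le\widehat D^\star+\eps_N$. Take any pair $f_1,f_2\in\Fcal_{\tau,\zeta}(\fNL)$. Its pairwise obstruction from \Cref{thm:semantic_floor}(b) is $d_P(f_1,f_2)/2$. By the definition of the diameter and the upper bracket,
\[
d_P(f_1,f_2)/2\le\tfrac12\Diam_{\tau,\zeta}(\fNL,P)\le\tfrac12\bigl(D^\star_\tau+2\zeta\bigr)=\zeta+\tfrac12 D^\star_\tau\le\zeta+\tfrac12\bigl(\widehat D^\star+\eps_N\bigr),
\]
where the last inequality holds on the Hoeffding event. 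The bound holds simultaneously for every pair, since the only random ingredient is the single event for $D^\star_\tau$.

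The main point needing care is this simultaneity in Step 3. The claim has to be read as holding on the confidence event of \Cref{prop:master_statistic}, not deterministically. That is harmless here: the bound passes through the deterministic diameter, so no union bound over pairs is required. Everything else is routine substitution.
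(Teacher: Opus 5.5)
Your proposal is correct and follows essentially the same route as the paper: both directions are read off the representation sandwich of \Cref{thm:representation}, and the data-driven bound comes from substituting the Hoeffding upper confidence bound $\widehat D^\star+\eps_N$ for $D^\star_\tau$. Your remark that the final claim holds on the single confidence event of \Cref{prop:master_statistic} is a useful clarification the paper leaves implicit; no union bound over pairs is needed, since the bound passes through the deterministic diameter.
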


\begin{proof}
Both directions follow from the representation sandwich of
\Cref{thm:representation}. If $D^\star_\tau=0$ then
$\Diam_P(\Sel_\tau)=0$, so $\Diam_{\tau,\zeta}\le2\zeta$; if instead
$\Diam_{\tau,\zeta}>2\zeta$ then $D^\star_\tau\ge\Diam_{\tau,\zeta}-2\zeta>0$.
Substituting the confidence bound $\widehat D^\star+\eps_N$ of
\Cref{prop:master_statistic} for $D^\star_\tau$ yields the stated simultaneous
upper bound on pairwise obstructions.
\end{proof}

The statistic $D^\star_\tau$ thus gives a population characterization of ambiguity
at resolution $2\zeta$, backed by a finite-sample upper confidence bound: an
overlap certified below the resolution means the residual disagreement is
coverage slack rather than substantive, while any diameter above it must register
in $D^\star_\tau$.

\subsection{Sampled decoding: the plug-in radius}
\label{ssec:plugin}

When the decoding law is accessible only through samples, the label
probabilities are not observed directly and the admissible sets must be
estimated. 

At each of $N$ i.i.d.\ held-out inputs $z_i\sim P$ the auditor draws
$r$ i.i.d.\ labels
$y_{i,1},\dots,y_{i,r}\sim\pi_{\LLM}(\cdot\mid\fNL,z_i)$, independently across
inputs, and forms the empirical decoding
$\widehat\pi_i(y)=\tfrac1r\sum_{t=1}^r\mathbf 1\{y_{i,t}=y\}$, the plug-in
admissible set $\widehat A_\tau(z_i)=\{y:\widehat\pi_i(y)\ge\tau\}$, and the
plug-in overlap statistic $\widehat D^\star_{\mathrm{plug}}
=\tfrac1N\sum_{i=1}^N\mathbf 1\{|\widehat A_\tau(z_i)|\ge2\}$.

Thresholding empirical frequencies rather than exact probabilities widens the
certificate radius by two terms. A finite-depth error arises from
misclassifying admissibility when a label sits near $\tau$, and it decays with
the number of decodings $r$. What does not decay is the mass of inputs that lie
near the threshold in the first place, measured by the \emph{threshold-margin
mass}
\[
    \kappa(\xi)\;:=\;\Pr_{x\sim P}\Bigl[\exists\,y\in\Ycal:\;
        \bigl|\pi_{\LLM}(y\mid\fNL,x)-\tau\bigr|\le\xi\Bigr],
    \qquad \xi>0,
\]
the probability that some label falls within $\xi$ of $\tau$; this term persists
at any sampling depth.

\begin{corollary}[Plug-in master statistic under a threshold margin]
\label{cor:plugin_dstar}
With $C=|\Ycal|$, for each fixed $\xi>0$ and with probability at least
$1-\delta$ over the audit,
\[
    \bigl|\widehat D^\star_{\mathrm{plug}}-D^\star_\tau\bigr|
    \;\le\;
    \eps_{N,r}(\xi)
    \;:=\;
    \underbrace{\sqrt{\frac{\log(2/\delta)}{2N}}}_{\text{sampling}}
    \;+\;\underbrace{\kappa(\xi)}_{\text{margin}}
    \;+\;\underbrace{2C\,e^{-2r\xi^2}}_{\text{depth}},
\]
and $[\,(\widehat D^\star_{\mathrm{plug}}-\eps_{N,r}(\xi))_+,\;
\min\{1,\widehat D^\star_{\mathrm{plug}}+\eps_{N,r}(\xi)+2\zeta\}\,]$ is a
two-sided confidence band for $\Diam_{\tau,\zeta}$.
\end{corollary}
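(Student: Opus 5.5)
The plan is to split the error into a sampling term and a plug-in bias term via the triangle inequality,
\[
    \bigl|\widehat D^\star_{\mathrm{plug}}-D^\star_\tau\bigr|
    \le
    \bigl|\widehat D^\star_{\mathrm{plug}}-\widehat D^\star\bigr|
    +\bigl|\widehat D^\star-D^\star_\tau\bigr|.
\]
Here $\widehat D^\star=N^{-1}\sum_i\mathbf 1\{|A_\tau(z_i)|\ge2\}$ is the oracle statistic built from the exact sets at the same audit inputs. \Cref{prop:master_statistic} already bounds the second term by $\sqrt{\log(2/\delta)/(2N)}$ on an event of probability $1-\delta$. The remaining task is to bound the first term, which is an average of mismatch indicators $M_i:=\mathbf 1\{\mathbf 1\{|\widehat A_\tau(z_i)|\ge2\}\neq\mathbf 1\{|A_\tau(z_i)|\ge2\}\}$, by $\kappa(\xi)+2Ce^{-2r\xi^2}$.

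\textbf{Per-input bound.} Set $M_i\le\mathbf 1\{\widehat A_\tau(z_i)\neq A_\tau(z_i)\}$ and split on the margin event $E_i:=\{\exists y:|\pi_{\LLM}(y\mid\fNL,z_i)-\tau|\le\xi\}$.
\begin{itemize}
    \item On $E_i$, bound $M_i\le1$.
    \item Off $E_i$, every label is at distance more than $\xi$ from $\tau$. The sets can then differ only if some $|\widehat\pi_i(y)-\pi(y)|>\xi$.
\end{itemize}
Conditional on $z_i$, Hoeffding for the $r$ i.i.d.\ indicator draws plus a union bound over $C$ labels give probability at most $2Ce^{-2r\xi^2}$. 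Taking expectations over $z_i$ yields
\[
    \E[M_i]\le\kappa(\xi)+2Ce^{-2r\xi^2}.
\]

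\textbf{Obstacle: stating the guarantee on the averaged mismatch.} The expectation bound above is not yet a high-probability bound on $N^{-1}\sum_i M_i$. Concentrating this average separately would need its own confidence radius, and that would overspend the single $\delta$ budget in the statement.

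\textbf{Fix: a single Hoeffding step.} Avoid the decomposition and apply Hoeffding directly to the i.i.d.\ Bernoulli variables $W_i:=\mathbf 1\{|\widehat A_\tau(z_i)|\ge2\}$. These are i.i.d.\ because the inputs and their decodings are independent across $i$. Their mean $\E W_i$ differs from $D^\star_\tau$ by at most $\E[M_i]\le\kappa(\xi)+2Ce^{-2r\xi^2}$, since
\[
    |\E W_i-D^\star_\tau|
    =\bigl|\E[\mathbf 1\{|\widehat A_\tau|\ge2\}-\mathbf 1\{|A_\tau|\ge2\}]\bigr|
    \le\E M_i.
\]
Hoeffding then gives $|\widehat D^\star_{\mathrm{plug}}-\E W_1|\le\sqrt{\log(2/\delta)/(2N)}$ with probability at least $1-\delta$. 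Adding the deterministic bias bound yields $\eps_{N,r}(\xi)$ in one step, with no extra confidence cost. The bias term holds for each fixed $\xi$ chosen independently of the data, as the statement requires.

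\textbf{Diameter band.} On the same event, $(\widehat D^\star_{\mathrm{plug}}-\eps_{N,r}(\xi))_+\le D^\star_\tau\le\widehat D^\star_{\mathrm{plug}}+\eps_{N,r}(\xi)$. Substituting into the bracket $D^\star_\tau\le\Diam_{\tau,\zeta}(\fNL,P)\le D^\star_\tau+2\zeta$ of \Cref{thm:representation} and truncating at $1$, exactly as in the proof of \Cref{prop:master_statistic}, gives the stated two-sided band.
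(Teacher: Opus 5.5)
Your final argument is correct and is essentially the paper's proof. It bounds the per-input mismatch probability by $\kappa(\xi)+2Ce^{-2r\xi^2}$ via the safe/unsafe split and a label-wise Hoeffding union bound, and turns that into a bias bound on $\E[\widehat D^\star_{\mathrm{plug}}]$; it then applies a single two-sided Hoeffding step to the i.i.d.\ plug-in indicators and transports the result through the bracket of \Cref{thm:representation}. The opening decomposition through the exact-set statistic $\widehat D^\star$ is unnecessary and can simply be deleted.
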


\begin{proof}[Proof sketch; full proof in \Cref{app:proofs}]
Call an input $\xi$-safe if every label probability lies more than $\xi$ from
$\tau$, so the unsafe mass is $\kappa(\xi)$. At a $\xi$-safe input, a
label-wise Hoeffding union bounds the chance of a wrong admissibility decision
by $2Ce^{-2r\xi^2}$; the plug-in overlap indicator can therefore differ from
the population one only on the unsafe mass or through this depth error, so their
means differ by at most $\kappa(\xi)+2Ce^{-2r\xi^2}$. An outer Hoeffding bound
over the $N$ i.i.d.\ indicators adds the sampling term $\sqrt{\log(2/\delta)/2N}$,
and the triangle inequality assembles the three into $\eps_{N,r}(\xi)$.
Transporting the bound to $\Diam_{\tau,\zeta}$ adds only the deterministic
$2\zeta$ at the upper end.
\end{proof}

The margin mass $\kappa(\xi)$ is the only term of $\eps_{N,r}(\xi)$ that no
sampling depth removes; the residual gap between $D^\star_\tau$ and the full
diameter is the identification tolerance $2\zeta$, fixed by the description
rather than incurred by search or finite samples. A positive threshold margin
is unavoidable for a uniform finite-depth certificate: labels whose
probabilities lie arbitrarily close to $\tau$ cannot be classified as admissible
or inadmissible with uniformly controlled error from finitely many decoding
draws (\Cref{rem:plugin_margin}).

\subsection{Canonical-Pair Certificate}
\label{ssec:endtoend}

We now assemble the preceding certificates into a single audit of the canonical
maximal-spread pair $(f_\tau^+,f_\tau^-)$. This pair is a deterministic
functional of $(\pi_{\LLM},\tau,\prec)$, fixed before any audit data are drawn.
Since the data neither select nor modify it, its disagreement is estimated
directly, with no sample splitting and no data-selection correction of the type
required for adaptively chosen targets \citep{berk2013valid}.
\Cref{alg:certificate} states the procedure.

\begin{algorithm}[t]
\caption{Blind-channel floor certificate.}
\label{alg:certificate}
\small
\raggedright
\textbf{Input:} description $\fNL$; frozen model $\LLM$; prompt template $\Pi$;
threshold $\tau\in(0,\tfrac12)$; coverage tolerance $\zeta$; confidence $\delta$;
held-out inputs $z_1,\dots,z_N\sim P$; and an access mode, either exposed
probabilities or sampled decoding. Under sampled decoding, also fix a margin
$\xi$ and split the budget as $\delta=\delta_{\mathrm{audit}}+\delta_\kappa$;
supply either a prespecified deterministic upper bound on $\kappa(\xi)$ (which
consumes no confidence, so $\delta_{\mathrm{audit}}=\delta$) or one certified at
level $\delta_\kappa$ on an independent split.\\
\textbf{Output:} a one-sided floor certificate valid with probability
$\ge1-\delta$.
\begin{enumerate}[label=\arabic*., leftmargin=2.2em, itemsep=1pt, topsep=3pt]
  \item \textbf{Admissible sets.} For each $z_i$, form
        $A_\tau(z_i)=\{y:\pi_{\LLM}(y\mid\fNL,z_i)\ge\tau\}$ from exposed
        probabilities, or the $r$-sample plug-in $\widehat A_\tau(z_i)$ under
        sampled decoding (\Cref{cor:plugin_dstar}).
  \item \textbf{Statistics.} Form the overlap
        $\widehat D^\star=\tfrac1N\sum_i\mathbf 1\{|A_\tau(z_i)|\ge2\}$ and the
        multiplicity statistic $\widehat V^\star=\tfrac1N\sum_i\phi(A_\tau(z_i))$,
        with $\phi(A)=1-1/|A|$ and $\phi(\varnothing)=0$ (\Cref{eq:phi_def}),
        using $\widehat A_\tau$ in place of $A_\tau$ under sampled decoding.
  \item \textbf{Radius.} Set $\eps=\sqrt{\log(2/\delta)/2N}$ under exposed
        probabilities, or the enlarged plug-in radius
        $\eps=\eps_{\mathrm{audit}}+\kappa(\xi)+2Ce^{-2r\xi^2}$ of
        \Cref{cor:plugin_dstar} under sampled decoding, with
        $\eps_{\mathrm{audit}}=\sqrt{\log(2/\delta_{\mathrm{audit}})/2N}$; a union
        bound then makes the certificate valid at the total level $\delta$.
  \item \textbf{Certificate.} Return the certified pairwise floor
        $\tfrac12(\widehat D^\star-\eps)_+$, and, for the exact core value,
        the certificate of \Cref{prop:vstar_certificate} (exposed) or
        \Cref{prop:plugin_vstar_certificate} (sampled). Report every value
        alongside the access mode, model, template, $N$, and
        $(\delta,\tau,\zeta)$.
\end{enumerate}
\textbf{Note.} The pair $(f_\tau^+,f_\tau^-)$ is a deterministic functional of
$(\pi_{\LLM},\tau,\prec)$; under sampled decoding the corrected statistics of
step~2 estimate its population overlap, and the pair induced by the sampled sets
is not treated as a population target.
\end{algorithm}

\begin{theorem}[Certificate from empirical overlap]
\label{thm:end_to_end}
Under \Cref{a:measurable,a:finite_Y}, fix $\fNL$, the frozen $\LLM$ and template
$\Pi$, a threshold $\tau\in(0,\tfrac12)$, a coverage tolerance
$\zeta\in(0,\tfrac12)$, and a confidence $\delta\in(0,1)$, and suppose the
supervision channel is target-blind. From $N$ i.i.d.\ held-out inputs $z_1,\dots,z_N\sim P$ with
observed admissible sets $A_\tau(z_i)$, form
\[
    \widehat D^\star=\frac1N\sum_{i=1}^N\mathbf 1\{|A_\tau(z_i)|\ge2\},
    \qquad
    \eps_N=\sqrt{\frac{\log(2/\delta)}{2N}} .
\]
Then, with probability at least $1-\delta$ over the audit and simultaneously for
every sample size $m$:
\begin{enumerate}[label=(\alph*), itemsep=2pt]
    \item \emph{(Certified blind-channel floor.)} Every learner $\hat h$ obeys
          \[
              \sup_{f\in\Fcal_{\tau,\zeta}(\fNL)}\;
              \E_{S\sim P_f^{\otimes m}}\bigl[d_P(\hat h(S), f)\bigr]
              \;\ge\;
              \tfrac12\bigl(\widehat D^\star-\eps_N\bigr)_+ ,
          \]
          so in particular the blind-channel minimax risk over the class is at
          least this floor.
    \item \emph{(Diameter bracket.)} The reported floor lies within $\eps_N$ of
          $\tfrac12 D^\star_\tau$, which satisfies
          $\tfrac12 D^\star_\tau\ge\tfrac12(\Diam_{\tau,\zeta}-2\zeta)$;
          consequently the gap between the reported floor and the largest
          pairwise $D/2$ obstruction in $\Fcal_{\tau,\zeta}(\fNL)$ is at most
          $\zeta+\eps_N$.
\end{enumerate}
\end{theorem}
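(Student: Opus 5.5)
The plan is to work on a single Hoeffding event and read off both parts deterministically from population identities proved earlier. Set $Z_i:=\mathbf 1\{|A_\tau(z_i)|\ge2\}$. These are i.i.d.\ Bernoulli with mean $D^\star_\tau$ (\Cref{def:overlap_mass}); measurability of the indicator follows from \Cref{a:measurable} and finiteness of $\Ycal$. By \Cref{prop:master_statistic}, the event
\[
    E:=\bigl\{|\widehat D^\star-D^\star_\tau|\le\eps_N\bigr\}
\]
has probability at least $1-\delta$. The audit inputs are independent of the learner's training sample. Also, every quantity in (a) and (b) other than $\widehat D^\star$ is a population functional that does not depend on $m$. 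So once $E$ holds, all claims hold simultaneously for every $m$ and every learner, and no union bound over $m$ is needed.

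For part (a), on $E$ we have $D^\star_\tau\ge(\widehat D^\star-\eps_N)_+$, so it suffices to show the population statement
\[
    \sup_{f\in\Fcal_{\tau,\zeta}}\E[d_P(\hat h(S),f)]\ge D^\star_\tau/2
\]
for every learner and every $m$. I will take the maximal-spread pair $f_\tau^\pm$.
\begin{itemize}
    \item By \Cref{lem:diameter_elicited}, $f_\tau^\pm\in\Sel_\tau\subseteq\Fcal_{\tau,\zeta}(\fNL)$. This uses $\zeta>0$, so that a.e.-admissible labelings satisfy the strict coverage inequality.
    \item Also by \Cref{lem:diameter_elicited}, $d_P(f_\tau^+,f_\tau^-)=D^\star_\tau$.
    \item Target blindness gives $P_{f^+}=P_{f^-}$. \Cref{thm:semantic_floor}(b) then yields $\max_\pm\E[d_P(\hat h,f_\tau^\pm)]\ge D^\star_\tau/2$ for each $m$.
    \item The supremum over the class dominates this two-point maximum.
\end{itemize}
Alternatively, one could invoke \Cref{cor:diameter_floor} together with $\Diam_{\tau,\zeta}\ge D^\star_\tau$ from \Cref{thm:representation}. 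The pair route is cleaner because it needs no near-attainment of a supremum.

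For part (b), on $E$ the truncation $(\cdot)_+$ is $1$-Lipschitz and $D^\star_\tau\ge0$. Hence
\[
    \bigl|\tfrac12(\widehat D^\star-\eps_N)_+-\tfrac12D^\star_\tau\bigr|\le\eps_N,
\]
in fact at most $\eps_N$ and one-sidedly $\tfrac12D^\star_\tau-\text{floor}\le\eps_N$. The inequality $D^\star_\tau\ge\Diam_{\tau,\zeta}-2\zeta$ is the upper bracket of \Cref{thm:representation}. Every pairwise obstruction in the class satisfies $D/2\le\Diam_{\tau,\zeta}/2\le D^\star_\tau/2+\zeta$. Subtracting the floor therefore gives
\[
    \sup\{D/2\}-\text{floor}\le\bigl(D^\star_\tau/2-\text{floor}\bigr)+\zeta\le\eps_N+\zeta .
\]
Here we use the one-sided bound $\tfrac12D^\star_\tau-\tfrac12(\widehat D^\star-\eps_N)_+\le\tfrac12\cdot2\eps_N=\eps_N$, which follows because $D^\star_\tau\le\widehat D^\star+\eps_N$.

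The main obstacle is bookkeeping rather than depth. First, uniformity in $m$ and in $\hat h$ must come from a single data event, which it does because the floor is a population fact. Second, the "within $\eps_N$" claim in (b) must be checked with the truncation: the worst case is $\widehat D^\star<\eps_N$, where the floor is $0$ and $D^\star_\tau\le2\eps_N$, so $D^\star_\tau/2\le\eps_N$ still holds. Third, the membership $\Sel_\tau\subseteq\Fcal_{\tau,\zeta}$ relies on $\zeta>0$.
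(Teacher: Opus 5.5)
Your proposal is correct and follows essentially the same route as the paper. You work on the single two-sided Hoeffding event for $\widehat D^\star$, certify (a) through the maximal-spread pair $(f_\tau^+,f_\tau^-)\in\Sel_\tau\subseteq\Fcal_{\tau,\zeta}(\fNL)$ and the blind two-point floor of \Cref{thm:semantic_floor}(b), and derive (b) from the representation bracket $\Diam_{\tau,\zeta}\le D^\star_\tau+2\zeta$. Your explicit treatment of the truncation $(\cdot)_+$ and of the need for $\zeta>0$ to get $\Sel_\tau\subseteq\Fcal_{\tau,\zeta}(\fNL)$ is slightly more careful than the paper's proof, which leaves both implicit.
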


\begin{proof}[Proof sketch; full proof in \Cref{app:proofs}]
Two-sided Hoeffding on the i.i.d.\ indicators $\mathbf 1\{|A_\tau(z_i)|\ge2\}$
gives $|\widehat D^\star-D^\star_\tau|\le\eps_N$ with probability at least
$1-\delta$; work on this event. For \emph{(a)}, the canonical pair
$(f_\tau^+,f_\tau^-)$ is admissible (\Cref{thm:representation}) and satisfies
$\{f_\tau^+\neq f_\tau^-\}=\{|A_\tau|\ge2\}$, so its disagreement mass equals
$D^\star_\tau\ge\widehat D^\star-\eps_N$. Target-blindness renders the two
targets indistinguishable through the channel, so by the two-point floor of
\Cref{thm:semantic_floor}(b) the worst-case risk is at
least half this mass, uniformly in $m$; and since the pair is a deterministic
functional of $(\pi_{\LLM},\tau,\prec)$, fixed independently of the audit, the
whole budget $\delta$ funds the single estimate $\widehat D^\star$ without
splitting (the $K=2$ case of \Cref{thm:floor_certificate}). For \emph{(b)}, the representation bracket gives
$D^\star_\tau\ge\Diam_{\tau,\zeta}-2\zeta$, while any admissible pair has floor
$\tfrac12 d_P\le\tfrac12\Diam_{\tau,\zeta}\le\tfrac12 D^\star_\tau+\zeta$;
together with $|\widehat D^\star-D^\star_\tau|\le\eps_N$ this places every such
floor within $\zeta+\eps_N$ of the reported one.
\end{proof}

\begin{remark}[Scope, sampled decoding, and reading a zero certificate]
\label{rem:end_to_end_scope}
The certificate is a minimax lower bound over $\Fcal_{\tau,\zeta}(\fNL)$: it
rules out any learner that does uniformly better across the class, and requires
no realizability. Should the operative target lie in $\Fcal_{\tau,\zeta}(\fNL)$,
equivalently $f(x)\in A_\tau(x)$ outside a set of $P$-measure below $\zeta$, the
floor applies to the class containing it; but absent further identification it
does not lower-bound risk at that particular target.

Under sampled decoding the plug-in radius of \Cref{cor:plugin_dstar} replaces
$\eps_N$, and validity requires an upper bound on the threshold-margin mass
$\kappa(\xi)$; without one the certificate is not computable from decoding
samples alone. If that bound is certified at level $\delta_1$ on an independent
split and the overlap estimate holds at level $\delta_2$, a union bound makes
the overall certificate valid at level $\delta_1+\delta_2$.

Finally, a zero lower certificate does not establish safety or the absence of
ambiguity. It can arise from genuinely small overlap at the audited resolution,
from insufficient sampling or decoding depth, or from a judge that assigns
negligible probability to admissible alternatives. Distinguishing these causes
requires the reported uncertainty terms, the access diagnostics, and comparisons
across prespecified configurations.
\end{remark}

\subsection{Exact Minimax-Value Certificate}
\label{ssec:exact_value_certificate}

The canonical-pair floor of \Cref{thm:end_to_end} is the headline certificate; this
subsection is an optional sharpening and may be skipped on a first reading. The
overlap floor uses only whether each admissible set is a singleton. Its full
multiplicity carries more information, and exploiting it sharpens the guarantee
from a floor on worst-case risk to a certificate for the exact blind-channel
value $V^\star_\tau$.

The audited admissible sets also yield the multiplicity estimator
\[ \widehat V^\star
   :=\frac1N\sum_{i=1}^N\left(1-\frac1{k(z_i)}\right)
   =\frac1N\sum_{i=1}^N\phi(A_\tau(z_i)) \]
of $V^\star_\tau$, where $k(z_i)=|A_\tau(z_i)|$ and
\begin{equation}\label{eq:phi_def}
    \phi(A)=1-\tfrac1{|A|}\ \ \text{for } A\neq\varnothing,
    \qquad \phi(\varnothing)=0 .
\end{equation}

The exact value $V^\star_\tau$ is pinned to the overlap mass $D^\star_\tau$ from
both sides, which is what lets the audit certify it.

\begin{corollary}[Placement of the exact value]
\label{cor:deployed_minimax_bounds}
With $C=|\Ycal|$, the exact blind-channel value satisfies
\[
    \tfrac12 D^\star_\tau \;\le\; V^\star_\tau \;\le\;
    \bigl(1-\tfrac1C\bigr)D^\star_\tau,
    \qquad
    D^\star_\tau-V^\star_\tau
    = \E_x\!\left[\frac{\mathbf 1\{k(x)\ge2\}}{k(x)}\right],
\]
the lower bound attaining equality if and only if $k(x)\le2$ for $P$-a.e.\ $x$.
Under target-blindness on $\Sel_\tau$ (\Cref{thm:deployed_exact_minimax}), the
neighborhood minimax is bracketed uniformly in $m$,
\[ V^\star_\tau \;\le\; R_m(\Fcal_{\tau,\zeta}) \;\le\; V^\star_\tau+\zeta. \]
\end{corollary}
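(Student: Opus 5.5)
The first three claims are pointwise identities integrated against $P$, so I would work input by input. By \Cref{a:finite_Y}, $k(x)=|A_\tau(x)|$ for $P$-a.e.\ $x$, with $1\le k(x)\le C$. I would then split according to $k(x)=1$ or $k(x)\ge2$.

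\emph{Pointwise inequalities.} Where $k(x)=1$ the integrand $1-1/k(x)$ vanishes, so
\[
    1-\tfrac1{k(x)}=\bigl(1-\tfrac1{k(x)}\bigr)\mathbf 1\{k(x)\ge2\}.
\]
On $\{k\ge2\}$ the map $k\mapsto 1-1/k$ is increasing, which gives $\tfrac12\le 1-1/k(x)\le 1-1/C$. Integrating yields $\tfrac12 D^\star_\tau\le V^\star_\tau\le(1-\tfrac1C)D^\star_\tau$.

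\emph{Difference identity.} I would write $\mathbf 1\{k\ge2\}-(1-1/k)\mathbf 1\{k\ge2\}=\mathbf 1\{k\ge2\}/k$ and take expectations.

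\emph{Equality case.} We have $V^\star_\tau-\tfrac12D^\star_\tau=\E\bigl[(\tfrac12-1/k)\mathbf 1\{k\ge2\}\bigr]$. The integrand is nonnegative and vanishes exactly when $k\le2$. A nonnegative integrand has zero integral iff it is zero $P$-a.e., so equality holds iff $k(x)\le2$ for $P$-a.e.\ $x$.

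\emph{Neighborhood bracket.} For the lower bound I would use monotonicity of $R_m$ under inclusion. Every $f\in\Sel_\tau$ satisfies $\Pr[f(x)\in A_\tau(x)]=1>1-\zeta$, so $\Sel_\tau\subseteq\Fcal_{\tau,\zeta}$. A supremum over a larger class is larger for each learner, hence $R_m(\Fcal_{\tau,\zeta})\ge R_m(\Sel_\tau)=V^\star_\tau$ by \Cref{thm:deployed_exact_minimax}.

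One subtlety here is that $R_m(\Fcal_{\tau,\zeta})$ requires observation laws $P_f$ for $f$ outside the core. I would read the statement, as in \Cref{cor:diameter_floor}, with the fixed-description channel supplying a common law to all of $\Fcal_{\tau,\zeta}$, which \Cref{prop:deployed_blindness} guarantees. The lower bound in fact needs only the restriction to $\Sel_\tau$.

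For the upper bound I would reuse the achievability half of \Cref{thm:deployed_exact_minimax}: the data-independent learner drawing uniformly from $\bar A_\tau(x)$. Its risk does not depend on the sample, so the law of $S$ is irrelevant. Against any $f\in\Fcal_{\tau,\zeta}$ I would split $\Xcal$ into two parts:
\begin{itemize}
    \item on $\{f(x)\in A_\tau(x)\}$ the pointwise error is exactly $1-1/k(x)$;
    \item on the complement, which has mass $<\zeta$ by definition of the class, the error is at most $1$.
\end{itemize}
Bounding the first part by the full integral over $\Xcal$ gives risk $\le V^\star_\tau+\zeta$ uniformly over $f$, hence $R_m(\Fcal_{\tau,\zeta})\le V^\star_\tau+\zeta$ for every $m$. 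This is the same argument as the upper bound in \Cref{thm:coherent_bridge}.

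The only delicate step is the bookkeeping about which channel law is used for targets outside $\Sel_\tau$. Everything else is elementary pointwise algebra plus citation of \Cref{thm:deployed_exact_minimax}.
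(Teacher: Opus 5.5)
Your proposal is correct and follows the paper's proof essentially step for step: the same pointwise sandwich and gap identity integrated against $P$, and the inclusion $\Sel_\tau\subseteq\Fcal_{\tau,\zeta}(\fNL)$ for the lower bracket. Your nonnegative-integrand argument for the equality case is a slightly more careful version of the paper's.

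The only variation is the upper bracket. The paper picks $f_0\in\Sel_\tau$ with $d_P(f,f_0)<\zeta$ via \Cref{thm:representation} and applies the triangle inequality to the uniform-admissible learner. You instead split directly on the admissibility event, as in \Cref{thm:coherent_bridge}. The two are equivalent, and yours makes explicit that the learner's data-independence is what makes the observation law of non-core targets irrelevant.
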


\begin{proof}
Write $k=k(x)$. The summand $1-1/k$ vanishes at $k=1$ and lies in
$[\tfrac12,1-\tfrac1C]$ for $2\le k\le C$, so $(1-1/k)\mathbf 1\{k\ge2\}$ is
squeezed between $\tfrac12\mathbf 1\{k\ge2\}$ and $(1-\tfrac1C)\mathbf 1\{k\ge2\}$;
taking expectations gives the sandwich. Subtracting termwise,
\[
    \mathbf 1\{k\ge2\}-\Bigl(1-\tfrac1k\Bigr)=\frac{\mathbf 1\{k\ge2\}}{k},
\]
and integrating gives the gap identity; the lower bound is tight exactly when the
factor $\tfrac12$ is attained a.e., that is, $k(x)\le2$.

For the neighborhood bracket, the lower bound is immediate from
$\Sel_\tau\subseteq\Fcal_{\tau,\zeta}(\fNL)$. For the upper bound, every
$f\in\Fcal_{\tau,\zeta}(\fNL)$ admits some $f_0\in\Sel_\tau$ with
$d_P(f,f_0)<\zeta$, so the exact-value-optimal learner $\hat h_\star$ satisfies
\[
    \E\bigl[d_P(\hat h_\star,f)\bigr]
    \le \E\bigl[d_P(\hat h_\star,f_0)\bigr]+d_P(f_0,f)
    < V^\star_\tau+\zeta ;
\]
taking the supremum over $f$ gives the bracket, uniformly in $m$.
\end{proof}

The overlap floor $\tfrac12 D^\star_\tau$ therefore equals $V^\star_\tau$ under
binary overlap and lower-bounds it otherwise, while the neighborhood minimax
stays within $\zeta$ of $V^\star_\tau$. Rendered finite-sample through
$\widehat D^\star$ alone, without the multiplicity estimator, this placement
already yields a two-sided interval for $V^\star_\tau$.

Concretely, for any finite admissible family $\mathcal F_K$ containing the
maximal-spread pair, the blind-channel minimax risk
$V_{\mathrm{blind}}(\mathcal F_K)$ is independent of $m$ and, with probability at
least $1-\delta$, lies in
$[\tfrac12(\widehat D^\star-\eps_N)_+,\ \widehat D^\star+\eps_N+2\zeta]$; the same
interval brackets $V^\star_\tau$ (\Cref{cor:deployed_interval}, proved in
\Cref{app:proofs}). The certificate below sharpens its lower endpoint using the
multiplicity estimator $\widehat V^\star$.

\begin{proposition}[Standalone certificate for the exact blind-channel value]
\label{prop:vstar_certificate}
Suppose the supervision channel is target-blind, and form $\widehat D^\star$ and
$\widehat V^\star$ from $N$ i.i.d.\ held-out inputs. With radius
$\eps_N=\sqrt{\log(2/\delta)/2N}$ and $C=|\Ycal|$, define
\[
    C_{\mathrm{joint}}
    :=
    \max\!\left\{
        \left(\widehat V^\star-\left(1-\tfrac1C\right)\eps_N\right)_+,
        \tfrac12\left(\widehat D^\star-\eps_N\right)_+
    \right\}.
\]
Then, with probability at least $1-\delta$,
\[
    V^\star_\tau\ge C_{\mathrm{joint}}
    \qquad\text{and}\qquad
    R_m(\Fcal_{\tau,\zeta})\ge C_{\mathrm{joint}}\ \text{ for every }m .
\]
The second branch of $C_{\mathrm{joint}}$ is the certified overlap floor, so the
certificate never falls below it; it is strictly larger precisely when
$(\widehat V^\star-(1-1/C)\eps_N)_+>\tfrac12(\widehat D^\star-\eps_N)_+$, which,
when neither branch is clipped at zero, reads
$\widehat V^\star-\tfrac12\widehat D^\star>(\tfrac12-\tfrac1C)\eps_N$.
\end{proposition}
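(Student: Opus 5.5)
Work on a single Hoeffding event, chosen so that the two branches of $C_{\mathrm{joint}}$ need no further splitting of $\delta$. The two statistics are deterministically linked. The per-input summand $\phi(A_\tau(z_i))$ vanishes when $|A_\tau(z_i)|\le1$ and lies in $[\tfrac12,1-\tfrac1C]$ otherwise, exactly as in the proof of \Cref{cor:deployed_minimax_bounds}. Hence the summands are i.i.d.\ in $[0,1-\tfrac1C]$, with mean $V^\star_\tau$ by \Cref{def:exact_blind_value}. This uses $\phi(A_\tau)=1-1/k$ $P$-a.e., which follows from \Cref{a:finite_Y} because $\bar A_\tau=A_\tau$ almost surely.

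Hoeffding's inequality for variables of range $1-\tfrac1C$ then gives, with probability at least $1-\delta/2$,
\[
V^\star_\tau\ge\widehat V^\star-(1-\tfrac1C)\sqrt{\log(2/\delta)/2N}.
\]
This is one-sided, so only $\delta/2$ is spent. Likewise, the one-sided Hoeffding bound on the overlap indicators gives $D^\star_\tau\ge\widehat D^\star-\eps_N$ with probability at least $1-\delta/2$. A union bound makes both hold simultaneously with probability at least $1-\delta$. This is exactly why the two-sided radius $\eps_N$ carries $\log(2/\delta)$: each of the two one-sided statements receives $\delta/2$.

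On this event, the first branch is immediate, using $V^\star_\tau\ge0$ to absorb the positive part. For the second branch, invoke the lower bracket $V^\star_\tau\ge\tfrac12D^\star_\tau$ of \Cref{cor:deployed_minimax_bounds}, which gives $V^\star_\tau\ge\tfrac12(\widehat D^\star-\eps_N)_+$. Together these yield $V^\star_\tau\ge C_{\mathrm{joint}}$.

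For the minimax claim, use target-blindness together with the neighborhood bracket $R_m(\Fcal_{\tau,\zeta})\ge R_m(\Sel_\tau)=V^\star_\tau$. This comes from \Cref{thm:deployed_exact_minimax} and the inclusion $\Sel_\tau\subseteq\Fcal_{\tau,\zeta}$. It holds for every $m$, and since it is deterministic it holds on the same event.

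The remaining assertions are algebraic. The certificate never falls below the overlap floor because $C_{\mathrm{joint}}$ is a maximum that includes that floor as a branch. It is strictly larger exactly when the first branch exceeds the second. When neither branch is clipped at zero, subtracting $\tfrac12\widehat D^\star$ and collecting the $\eps_N$ terms gives the condition $\widehat V^\star-\tfrac12\widehat D^\star>(1-\tfrac1C-\tfrac12)\eps_N=(\tfrac12-\tfrac1C)\eps_N$.

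The main point needing care is the confidence budget. Two-sided Hoeffding for each statistic at level $\delta$ would cost $2\delta$. The argument avoids this by using only the two lower tails, each at $\delta/2$, and checking that the range-scaled Hoeffding radius is $(1-\tfrac1C)\eps_N$ at that level.
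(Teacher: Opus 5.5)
Your proof is correct and follows the paper's argument essentially step for step. It uses two one-sided Hoeffding bounds at level $\delta/2$ each, with the one for $\widehat V^\star$ scaled by the summand range $1-\tfrac1C$, so the split costs no extra radius; then a union bound; then the placement $V^\star_\tau\ge\tfrac12 D^\star_\tau$ for the overlap branch; and finally the inclusion $\Sel_\tau\subseteq\Fcal_{\tau,\zeta}(\fNL)$ with \Cref{thm:deployed_exact_minimax} to transfer the bound to $R_m$ for every $m$, exactly as the paper does.

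Your explicit algebraic check of the strict-improvement condition is a harmless addition the paper leaves implicit. Your opening phrase about a ``single Hoeffding event'' needing ``no further splitting'' is slightly misleading, since you do split $\delta$; what you correctly mean is that the split incurs no extra radius.
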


\begin{proof}
Allocate failure probability $\delta/2$ to each of two one-sided Hoeffding
bounds. The first gives $D^\star_\tau\ge\widehat D^\star-\eps_N$: its one-sided
radius at level $\delta/2$ is $\sqrt{\log(1/(\delta/2))/(2N)}=\eps_N$, so the
split incurs no additional radius. The summands $1-1/k(z_i)$ of
$\widehat V^\star$ lie in $[0,1-\tfrac1C]$, so the second gives
$V^\star_\tau\ge\widehat V^\star-(1-\tfrac1C)\eps_N$. A union bound makes both
hold with probability at least $1-\delta$; work on that event. The second bound
is the first branch of $C_{\mathrm{joint}}$, and
$V^\star_\tau\ge\tfrac12 D^\star_\tau\ge\tfrac12(\widehat D^\star-\eps_N)$ is the
second, so $V^\star_\tau\ge C_{\mathrm{joint}}$. Finally
$\Sel_\tau\subseteq\Fcal_{\tau,\zeta}(\fNL)$ transfers the bound to the
neighborhood minimax, $R_m(\Fcal_{\tau,\zeta})\ge C_{\mathrm{joint}}$ for every
$m$.
\end{proof}

When the decoding probabilities are not exposed, the same certificate holds with
each branch widened by the plug-in error $q_r(\xi)$.

\begin{proposition}[Exact-value certificate under sampled decoding]
\label{prop:plugin_vstar_certificate}
Suppose the admissible sets are estimated by the $r$-sample plug-in
$\widehat A_\tau$ at margin $\xi$, and form the plug-in statistics
$\widehat D^\star_{\mathrm{plug}}$ and
$\widehat V^\star_{\mathrm{plug}}=\tfrac1N\sum_i\phi(\widehat A_\tau(z_i))$. With
$C=|\Ycal|$, $\eps_N=\sqrt{\log(2/\delta)/2N}$, and
$q_r(\xi)=\kappa(\xi)+2Ce^{-2r\xi^2}$, the certificate
\[
C_{\mathrm{joint}}^{\mathrm{plug}}
:=\max\!\left\{
 \bigl(\widehat V^\star_{\mathrm{plug}}-(1-\tfrac1C)(\eps_N+q_r(\xi))\bigr)_+,
 \tfrac12\bigl(\widehat D^\star_{\mathrm{plug}}-\eps_N-q_r(\xi)\bigr)_+
\right\}
\]
satisfies, with probability at least $1-\delta$,
\[
    V^\star_\tau\ge C_{\mathrm{joint}}^{\mathrm{plug}}
    \qquad\text{and}\qquad
    R_m(\Fcal_{\tau,\zeta})\ge C_{\mathrm{joint}}^{\mathrm{plug}}
    \ \text{ for every }m .
\]
\end{proposition}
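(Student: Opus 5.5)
The plan mirrors the proof of \Cref{prop:vstar_certificate}, with the exact-sets Hoeffding step replaced by the plug-in decomposition of \Cref{cor:plugin_dstar}. Write $\widehat V^\star_{\mathrm{plug}}-V^\star_\tau$ as a sampling term plus a bias term:
\[
\Bigl(\widehat V^\star_{\mathrm{plug}}-\E\bigl[\phi(\widehat A_\tau(z))\bigr]\Bigr)
+\Bigl(\E\bigl[\phi(\widehat A_\tau(z))\bigr]-V^\star_\tau\Bigr),
\]
where the expectation is over $z\sim P$ and the $r$ decoding draws. Handle $\widehat D^\star_{\mathrm{plug}}$ in the same way with $\phi$ replaced by the overlap indicator. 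By the standing assumption that the inputs are independent across $i$ and the draws are independent across inputs, the pairs $(z_i,\widehat A_\tau(z_i))$ are i.i.d. Hence the summands $\phi(\widehat A_\tau(z_i))\in[0,1-\tfrac1C]$ and $\mathbf 1\{|\widehat A_\tau(z_i)|\ge2\}\in[0,1]$ are i.i.d. and bounded.

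First, the sampling terms. Allocate $\delta/2$ to each of two one-sided Hoeffding bounds, exactly as in \Cref{prop:vstar_certificate}. This gives $\E[\phi(\widehat A_\tau)]\ge\widehat V^\star_{\mathrm{plug}}-(1-\tfrac1C)\eps_N$ and $\E[\mathbf 1\{|\widehat A_\tau|\ge2\}]\ge\widehat D^\star_{\mathrm{plug}}-\eps_N$. Because these are one-sided, the level-$\delta/2$ radius equals the two-sided $\eps_N$ and the split costs nothing. A union bound gives joint validity with probability at least $1-\delta$.

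Second, the deterministic bias terms. These carry the main content, and I would reuse the $\xi$-safe argument from the proof of \Cref{cor:plugin_dstar}. At a $\xi$-safe input, a labelwise Hoeffding bound and a union over the $C$ labels show that $\widehat A_\tau(z)=A_\tau(z)$ except with probability at most $2Ce^{-2r\xi^2}$. The unsafe inputs have mass $\kappa(\xi)$. So the event $\{\widehat A_\tau(z)\neq A_\tau(z)\}$ has total probability at most $q_r(\xi)$. Since $\phi$ takes values in $[0,1-\tfrac1C]$, a disagreement changes $\phi$ by at most $1-\tfrac1C$. This gives $|\E\phi(\widehat A_\tau)-V^\star_\tau|\le(1-\tfrac1C)q_r(\xi)$, and similarly $|\E\mathbf 1\{|\widehat A_\tau|\ge2\}-D^\star_\tau|\le q_r(\xi)$. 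One point needs care: empty plug-in sets can occur, but $\phi(\varnothing)=0$ keeps the range within $[0,1-\tfrac1C]$, and $A_\tau$ is nonempty $P$-a.e. by \Cref{a:finite_Y}, so the definition of $\phi$ covers this case. I expect this coupling step to be the main obstacle. The bias bound is an expectation over the decoding draws and holds deterministically, so it uses no confidence budget. Treating $\kappa(\xi)$ as known, or as a prespecified upper bound per \Cref{rem:end_to_end_scope}, is what keeps the total level at $\delta$.

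Combining the two steps on the good event gives $V^\star_\tau\ge\widehat V^\star_{\mathrm{plug}}-(1-\tfrac1C)(\eps_N+q_r(\xi))$, which is the first branch. It also gives $D^\star_\tau\ge\widehat D^\star_{\mathrm{plug}}-\eps_N-q_r(\xi)$, and then \Cref{cor:deployed_minimax_bounds} ($V^\star_\tau\ge\tfrac12D^\star_\tau$) yields the second branch. Since $V^\star_\tau\ge0$, the positive-part clipping is harmless, so $V^\star_\tau\ge C^{\mathrm{plug}}_{\mathrm{joint}}$. Finally, $\Sel_\tau\subseteq\Fcal_{\tau,\zeta}(\fNL)$ together with \Cref{thm:deployed_exact_minimax} gives $R_m(\Fcal_{\tau,\zeta})\ge R_m(\Sel_\tau)=V^\star_\tau$ for every $m$, which transfers the bound to the neighborhood minimax.
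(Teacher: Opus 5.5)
Your proposal is correct and follows essentially the same route as the paper. You split each plug-in statistic into an i.i.d.\ sampling term, controlled by one-sided Hoeffding bounds at level $\delta/2$ each (radius $\eps_N$, scaled by $1-\tfrac1C$ for $\phi$), plus a deterministic bias of at most $(1-\tfrac1C)q_r(\xi)$ or $q_r(\xi)$ from the $\xi$-safe coupling, and then conclude via $V^\star_\tau\ge\tfrac12D^\star_\tau$ and $\Sel_\tau\subseteq\Fcal_{\tau,\zeta}(\fNL)$.
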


\begin{proof}[Proof sketch; full proof in \Cref{app:proofs}]
Replacing exact admissible sets by their $r$-sample plug-ins introduces a
per-input error of probability at most $q_r(\xi)$. Because $\phi$ and
$\mathbf 1\{|\cdot|\ge2\}$ change only when the set changes and have ranges
$[0,1-\tfrac1C]$ and $[0,1]$, this shifts the two estimator means by at most
$(1-\tfrac1C)q_r(\xi)$ and $q_r(\xi)$. Each bias adds to the corresponding
one-sided Hoeffding radius, inflating the two branches by at most the
$q_r(\xi)$ terms above. The remaining steps are the $\delta/2$ split, the
placement $V^\star_\tau\ge\tfrac12 D^\star_\tau$, and the neighborhood transfer;
these repeat the proof of \Cref{prop:vstar_certificate}.
\end{proof}

\begin{remark}[Using the audit in practice]
\label{rem:practice}
\emph{(i)~Who can run it.} The correction-free audit needs the judge's
admissible-label probabilities, exposed only by a logits-serving backend
(open-weight or locally hosted judges); a sampling-only API instead supports the
sampled-decoding audit, valid under a threshold-margin bound and sufficient
decoding depth (\Cref{cor:plugin_dstar}). \emph{(ii)~Interpretation.} A certified
floor directs intervention at the information structure rather than at sample
size. \emph{(iii)~What to report.} Every certified quantity is
configuration-relative, so a floor is meaningful only with the tuple (model,
prompt template, $\tau$-sweep, $N$, $\delta$, access mode) that
\Cref{alg:certificate} returns alongside it.
\end{remark}

\section{Empirical Probes}
\label{sec:experiments}
\providecommand{\csModel}{qwen2.5:3b}
\providecommand{\csN}{100}
\providecommand{\csR}{3}
\providecommand{\csNkappa}{100}
\providecommand{\csTau}{0.2}
\providecommand{\csDelta}{0.1}
\providecommand{\csXi}{0.05}
\providecommand{\csDesignSeed}{20260703}
\providecommand{\csDhatPos}{0.60}
\providecommand{\csVhatPos}{0.300}
\providecommand{\csFloorPos}{0.0}
\providecommand{\csCjointPos}{0.0}
\providecommand{\csKappaU}{1.0}
\providecommand{\csEpsN}{0.1358}
\providecommand{\csDepthTerm}{1.0}
\providecommand{\csDhatCtrl}{0.05}
\providecommand{\csVhatCtrl}{0.025}
\providecommand{\csFloorCtrl}{0.0}
\providecommand{\csConstancyPos}{0.62}
\providecommand{\csUnvalidatedFrac}{0.0}
\providecommand{\csConfigs}{6}
\providecommand{\csDhatPosMean}{0.1933}
\providecommand{\csDhatPosMin}{0.0}
\providecommand{\csDhatPosMax}{0.60}
\providecommand{\csFloorPosMean}{0.0}
\providecommand{\csFloorPosMin}{0.0}
\providecommand{\csFloorPosMax}{0.0}
\providecommand{\csPositiveConfigs}{0}
\providecommand{\csDetectConfigs}{2}
\providecommand{\csControlFloorMax}{0.0}

\newcommand{\exTau}{0.20}
\newcommand{\exN}{100}
\newcommand{\exDelta}{0.10}
\newcommand{\exDhatPone}{0.29}
\newcommand{\exFloorPone}{0.0838}
\newcommand{\exJointPone}{0.0838}
\newcommand{\exEntropyPone}{0.291}
\newcommand{\exDhatPtwo}{0.10}
\newcommand{\exFloorPtwo}{0.0000}
\newcommand{\exJointPtwo}{0.0000}
\newcommand{\exEntropyPtwo}{0.087}
\newcommand{\exDhatControlPone}{0.00}
\newcommand{\exFloorControlPone}{0.0000}
\newcommand{\exDhatControlPtwo}{0.00}
\newcommand{\exFloorControlPtwo}{0.0000}
\newcommand{\exResultSha}{79c7e370537f}

\newcommand{\etaFitN}{100}
\newcommand{\etaHoldoutN}{100}
\newcommand{\etaLambdaNarrow}{0.50}
\newcommand{\etaHoldoutMean}{-0.28}
\newcommand{\etaUpper}{0.0000}
\newcommand{\etaNarrowViolation}{0.24}
\newcommand{\etaBroadViolation}{0.36}
\newcommand{\etaZetaUpper}{0.4904}
\newcommand{\etaGapUpper}{0.4904}
\newcommand{\etaHoldoutDisagreement}{0.76}
\newcommand{\etaResultSha}{434560247c62}

\providecommand{\cpShallowQ}{300}
\providecommand{\cpShallowRad}{1.0000}
\providecommand{\cpExposedQ}{100}
\providecommand{\cpExposedRad}{0.1224}

By construction, the pairwise certificate of \Cref{alg:certificate} is
positive exactly when the observed model-induced overlap clears its
finite-sample radius (the multiplicity-based exact-value certificate can be
positive through a separate branch), and any positive floor it returns is
sample-size independent. The empirical question is whether a
real judge exhibits such overlap where the specification is genuinely
underspecified and, critically, none where an exact rule leaves no room for
it. We test this on one frozen judge configuration with matched exact-rule
null controls, then delimit the resulting claim: does the pointwise
certificate transfer to supplied candidate reading clauses, and where do
implementation limits bind? These are probes, not deployment-scale benchmarks. A controlled pipeline
check on constructed cases with known disagreement, a model-free ChaosNLI
exact-value calculation, and all implementation details are collected in
\Cref{app:experiment_diagnostics}.

\paragraph{Correction-free exposed-probability audit.}
\label{ssec:qwen_exposed_audit}

We run \Cref{alg:certificate} in its exposed-probability mode on a frozen
Qwen~2.5--3B judge, where both plug-in corrections vanish and
\Cref{thm:end_to_end} applies with the correction-free radius
$\eps_N=\sqrt{\log(2/\delta)/2N}$. The audited channel is the judge's
declared-label conditional first-token law (the class-token probabilities
renormalized over the declared label set), formally defined, with all
tokenization and runtime details, in \Cref{appssec:exposed_channel}. Two prompt
paraphrases (P1, P2), two exact-rule control tasks, and the threshold sweep
$\tau\in\{0.10,\dots,0.40\}$ are fixed before any model call.

\begin{figure}[t]
\centering
\includegraphics[width=0.78\linewidth]{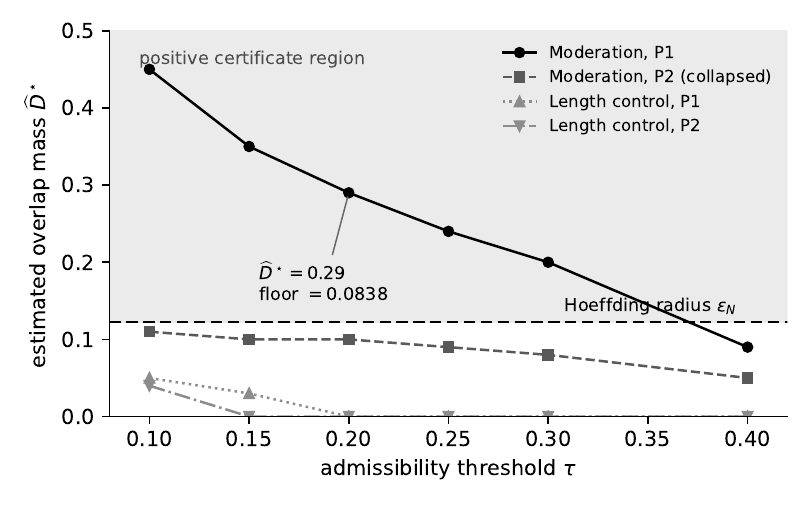}
\caption{Correction-free exposed-probability audit at $\delta=\exDelta$,
$N=\exN$, over the prespecified threshold sweep. The certificate is positive
exactly where the estimated overlap $\widehat D^\star$ clears the Hoeffding
radius $\eps_N$; the certified floor $\tfrac12(\widehat D^\star-\eps_N)_+$ is
valid uniformly over all learners and sample sizes $m$ under the
fixed-description target-blind channel (\Cref{prop:deployed_blindness}), with
blindness supplied by the deployment design, not established by the audit.
Only the P1 moderation prompt enters the positive region; the collapsed
paraphrase P2 and both exact-rule controls stay below the radius at every
$\tau$.}
\Description{Line plot of estimated overlap mass versus admissibility
threshold for four configurations, with a dashed horizontal line marking the
Hoeffding radius and shading above it; only the moderation P1 curve lies above
the radius at the swept thresholds up to 0.30, dropping below it by 0.40.}
\label{fig:qwen_exposed}
\end{figure}

\Cref{fig:qwen_exposed} reports the outcome across the sweep. For the
moderation task under P1, $\widehat D^\star=\exDhatPone$ clears the radius at
the comparison point $\tau=\exTau$ and the audit certifies a floor of
$\exFloorPone$; the certificate is positive at every swept
$\tau\le0.30$, and for the
binary label set the exact-value certificate of \Cref{prop:vstar_certificate}
coincides with this pairwise floor (numerical sweep values in
\Cref{appssec:exposed_sweep}). Both exact-rule controls certify
zero, as predicted: the procedure does not manufacture overlap where none
exists. Paraphrase P2 also certifies zero, but for a different reason: its
declared-label probability collapses onto one label (entropy diagnostic in
\Cref{appssec:exposed_channel}). Prompt sensitivity, not decoding depth, thus
gates the certificate for this pair, and a zero certificate does not establish
that the task is unambiguous (\Cref{rem:end_to_end_scope}).

\paragraph{Transfer boundary.}
\label{ssec:qwen_limits}
\label{ssec:qwen_eta_bridge}
\Cref{thm:coherent_bridge} bounds the gap between the pointwise value
$V^\star_\tau$ and the blind value over a supplied family of coherent readings,
provided the readings track the model's admissible geometry. We test it on two
supplied candidate reading clauses for the moderation task, written
independently of that geometry (they carry no independent human validation).
The fit--holdout coverage estimate is favorable ($\widehat\eta_U=\etaUpper$),
but the certified admissibility violation is large
($\widehat\zeta_U=\etaZetaUpper$). Consequently,
$|V^\star_\tau-V_{\rm blind}(\mathcal C)|\le\etaGapUpper$ is nearly the
largest bound the theorem permits. This is a direct test of the bridge, and
its failure is informative because it identifies which theorem condition
fails: the clauses match the aggregate mixture condition while violating the
separate admissibility condition, so no semantic transfer is certified. The
positive certificate above is therefore pointwise and model-relative;
this is a boundary of the claim, not evidence that the judge is unambiguous.
The clauses, split procedure, mixture optimization, confidence allocation,
and the preregistered external-reading protocol this motivates are in
\Cref{appssec:bridge_audit}.

\paragraph{Implementation boundaries.}
\label{ssec:qwen_audit}
Under sampling-only access at the audited breadth ($r=\csR$), the finite-depth
term of the plug-in radius (\Cref{cor:plugin_dstar}) saturates and every
certified floor is zero; a positive floor at margin $\xi=\csXi$ would require
roughly $r\approx877$ decodings per input, which is why the principal audit
exposes probabilities (design, per-configuration results, and cost--precision
trade-off in \Cref{appssec:sampled_design}). Separately, a controlled pipeline
check on constructed cases with known disagreement recovers the prescribed
certificates exactly: zero where two readings never disagree, and a
conservative floor of $0.166$ where they disagree by $D=0.4$
(\Cref{appssec:controlled_check}). Every audit fixes its design and split
before any model call; source code, frozen designs, versioned outputs, and
SHA-256 hashes are in the supplementary reproducibility archive
(\Cref{appssec:exposed_channel}).

\section{Discussion and Limitations}
\label{sec:discussion}

\paragraph{Summary.}
NL-PAC turns the risk induced by model-admissible ambiguity into an empirically
auditable quantity: for a fixed description, model, decoding threshold, and
deployment distribution, it certifies a finite-sample lower bound on the
worst-case risk over admissible labelings. Because this obstruction is a
property of the specification--channel pair rather than of any estimator, it
persists independently of the learner and its sample size.

\paragraph{Scope of the guarantees.}
The certified quantities are worst-case over admissible labelings and
conditional on the sampled deployment distribution $P$. They characterize the
frozen model--prompt--decoding configuration, not intrinsic human ambiguity, so
model updates, temperature changes, or prompt edits require re-auditing. The
audit requires that admissible sets be observable; under sampled decoding this
means a certified threshold-margin bound and a plug-in decoding depth
(\Cref{ssec:plugin}). The blind-channel floors rely only on target blindness and
the induced observation laws, not on repeated-call independence; the conditional
independence of \Cref{a:kernel} enters only where concentration is applied to
repeated oracle draws, namely the sampled-decoding analysis
(\Cref{ssec:plugin}).

The experiments establish one limited positive claim: the exposed-probability
audit gives a positive model-relative certificate for the prespecified P1
prompt, with zero certificates under the exact-rule controls
(\Cref{ssec:qwen_exposed_audit}). A controlled probe additionally verifies
that the certificate arithmetic recovers known zero and positive disagreement
cases (\Cref{appssec:controlled_check}).

Two negative results delimit that claim. The shallow sampled-decoding audit is
vacuous at the available depth, because the finite-depth correction saturates
(\Cref{ssec:qwen_audit}). The fit--holdout bridge is too loose to transfer the
positive exposed-probability certificate for P1 to the supplied two-reading pool
despite a zero estimated coverage gap: its admissibility bound is
$\widehat\zeta_U=\etaZetaUpper$, hence
$\max\{\eta_U,\zeta_U\}=\etaGapUpper$
(\Cref{ssec:qwen_eta_bridge}). That transfer remains open.

\paragraph{What the certificate licenses.}
A positive floor of $c$ rules out any worst-case guarantee below $c$ over the
retained model-admissible labelings, no matter how many additional labels are collected through
the same description--model--prompt channel. Lowering the floor therefore
requires changing the information structure: clarifying the specification,
introducing reading-revealing supervision, or changing the model or prompt. The
certificate quantifies the floor but does not rank these interventions.
Conversely, a zero lower certificate is not evidence of safety or of the absence
of human ambiguity (\Cref{rem:end_to_end_scope}).

The minimax criterion is the appropriate target when no defensible probability
distribution over the retained readings is available, or when the decision rule
must protect against every retained reading, as in minimax and minimax-regret
decisions under set identification \citep{manski2003partial,stoye2009minimax}.
When external evidence identifies credible reading weights, the corresponding
prior-weighted risk defines a different criterion, which the present certificate
does not estimate (\Cref{rem:deployed_minimax_scope}).

This also explains why common judge diagnostics are not substitutes for the
certificate. Response entropy and probability margins summarize concentration in the judge's
output distribution but do not yield a confidence-qualified lower bound on target
risk. Cross-judge and judge--human disagreement measure evaluator sensitivity,
yet common-mode collapse can drive these rates to zero, and they do not identify
which admissible targets the deployed channel leaves unresolved. NL-PAC supplies
the missing decision-theoretic step: for a prespecified channel, it maps the
admissible target set to a certified minimax risk floor.

\paragraph{Future work.}
Three extensions are directly exposed by the limitations above. First,
\emph{coherent-reading coverage}: a preregistered study should elicit readings
independently of model outputs, freeze the resulting clauses and a held-out
input split, and then test both bridge terms (mixture coverage and
admissibility) separately. A larger independently validated pool would show
whether \Cref{thm:coherent_bridge} becomes informative once held-out
admissibility error is small; the frozen protocol is specified in
\Cref{appssec:bridge_audit}.
Second, \emph{cross-model and natural-distribution replication}: replication
across model families and naturally occurring inputs, with a complete high-depth
sampled-decoding audit, would assess the stability of the exposed-probability
certificate and determine whether the threshold-margin or the finite-depth term
is the binding constraint (\Cref{ssec:qwen_audit}). Third, \emph{non-blind and
intervention channels}: embedding the blind channel as the zero-sensitivity
endpoint of a family of intervention channels would quantify how much
reading-revealing information the channel must expose to push risk below the
blind-channel floor (\Cref{thm:semantic_floor}).

\appendix
\section{Notation}
\label{app:notation}

\Cref{tab:notation} collects the recurring symbols.

\begin{table}[h]
\centering
\small
\begin{tabular}{@{}p{0.25\linewidth}p{0.21\linewidth}p{0.48\linewidth}@{}}
\toprule
\textbf{Symbol} & \textbf{Defined} & \textbf{Meaning} \\
\midrule
$\Xcal,\Ycal,P,f$ & \Cref{ssec:basic_objects} & instance space, finite label space, data distribution, target \\
$\fNL$ & \Cref{ssec:basic_objects} & natural-language task description (fixed string) \\
$\mathcal T,\ \mathcal T^*$ & \Cref{ssec:basic_objects} & token alphabet; string space under the discrete $\sigma$-algebra \\
$\SNL,\ \JNL$ & \Cref{ssec:basic_objects} & measurable string subspaces of $\mathcal T^*$ \\
$\mathscr F_{\SNL},\ \mathscr F_{\JNL}$ & \Cref{ssec:basic_objects} & their induced subspace $\sigma$-algebras \\
$h,\ \hat h$ & \Cref{ssec:basic_objects} & measurable classifier; learner (measurable rule) \\
$g,\ \rho,\ j$ & \Cref{def:oracle,def:decoder} & oracle kernel; justification decoder; justification \\
$\pi_{\LLM}(y\mid\fNL,x)$ & \Cref{ssec:basic_objects} & admissibility score (description-only decoding probability) \\
$\tau,\ \zeta$ & \Cref{def:ambiguity_set} & admissibility threshold; coverage tolerance \\
$\Fcal_{\tau,\zeta}(\fNL)$ & \Cref{def:ambiguity_set} & model-admissible labeling class \\
$\Fcal^{\mathrm{read}}_{\tau,\zeta}(\fNL)$ & \Cref{def:coherent_subclass} & coherent-reading subclass \\
$\Diam_{\tau,\zeta}(\fNL,P)$ & \Cref{def:ambiguity_set} & model-admissible diameter \\
$d_P$ & \Cref{def:ambiguity_set} & disagreement pseudometric $d_P(f,g)=\Pr_P[f\neq g]$ \\
$D,\ B$ & \Cref{thm:semantic_floor} & pairwise disagreement mass; disagreement set \\
$A_\tau(x)$ & \Cref{def:admissible_set} & admissible label set at $x$ \\
$D^\star_\tau$ & \Cref{def:overlap_mass} & admissible-overlap mass $\Pr_x[|A_\tau(x)|\ge2]$ \\
$V^\star_\tau$ & \Cref{def:exact_blind_value} & exact blind-channel minimax on $\Sel_\tau$ (\Cref{thm:deployed_exact_minimax}); expected non-modal admissible mass \\
$V_{\mathrm{blind}}(\cdot)$ & \Cref{thm:coherent_bridge} & blind-channel minimax of a finite target family \\
$\mathcal C$ & \Cref{thm:coherent_bridge} & finite coherent-reading family (distinct from $C=|\Ycal|$) \\
$(f_\tau^+,f_\tau^-)$ & \Cref{def:maximal_spread} & maximal-spread pair \\
$\Sel_\tau$ & \Cref{def:ambiguity_set} & almost-everywhere admissible selection class; $\Fcal_{\tau,\zeta}$ is its open $\zeta$-neighborhood \\
$k(x)$ & \Cref{ssec:exact_blind_values} & number of admissible labels, $|\bar A_\tau(x)|$ \\
$\kappa(\xi)$ & \Cref{cor:plugin_dstar} & threshold-margin mass \\
$\eta$ & \Cref{def:eta_coverage} & coherent-family coverage gap \\
\midrule
$N$ & \Cref{prop:master_statistic} & held-out audit inputs (Hoeffding sample size) \\
$m$ & \Cref{def:oracle_sample} & learner's oracle-labeled sample size \\
$r$ & \Cref{cor:plugin_dstar} & decoding depth (samples per input) \\
$\delta$ & \Cref{prop:master_statistic} & audit confidence level \\
$\eps_N$ & \Cref{prop:master_statistic} & Hoeffding sampling slack $\sqrt{\log(2/\delta)/2N}$ \\
$\xi$ & \Cref{cor:plugin_dstar} & threshold-margin half-width \\
$C$ & \Cref{cor:plugin_dstar} & label-space size $|\Ycal|$ \\
\bottomrule
\end{tabular}
\caption{Recurring notation, with the defining location.}
\label{tab:notation}
\end{table}

\section{Deferred Proofs and Supporting Statements}
\label{app:proofs}

This appendix collects the proofs deferred from the main text, grouped to mirror
the body and ordered within each group by appearance.

\subsection{Proofs for model-admissible geometry}
\label{appssec:geometry_proofs}

\begin{proof}[Proof of \Cref{thm:representation}]
\emph{($\supseteq$).} Suppose $d_P(u,v)<\zeta$ for some
$v\in\Sel_\tau$. Since $v(x)\in A_\tau(x)$ for $P$-a.e.\ $x$,
the event $\{u(x)\in A_\tau(x)\}$ contains $\{u=v\}$ up to a null set, and
disagreement with $v$ has mass $d_P(u,v)$, so
$\Pr_P[\pi_{\LLM}(u(x)\mid\fNL,x)\ge\tau]
=\Pr_P[u(x)\in A_\tau(x)]\ge\Pr_P[u=v]=1-d_P(u,v)>1-\zeta$,
which is membership in $\Fcal_{\tau,\zeta}(\fNL)$ (\Cref{def:ambiguity_set}).
\emph{($\subseteq$).} Let $u\in\Fcal_{\tau,\zeta}(\fNL)$ and set
$G:=\{x:u(x)\in A_\tau(x)\}$; $G$ is measurable because
$x\mapsto\pi_{\LLM}(u(x)\mid\fNL,x)
=\sum_{y\in\Ycal}\mathbf 1\{u(x)=y\}\,\pi_{\LLM}(y\mid\fNL,x)$ is measurable
under \Cref{a:measurable}, and membership gives $\Pr_P[G]>1-\zeta$. Define
$v:=u$ on $G$, $v:=y_{(1)}(x)$ (the top-ranked admissible label of
\Cref{def:maximal_spread}) on $G^{c}\cap\{A_\tau(x)\neq\varnothing\}$, and
$v:=y_0$ on the remaining set $G^{c}\cap\{A_\tau(x)=\varnothing\}$, which is
$P$-null by the nonemptiness hypothesis; $v$ is then a measurable total map into
$\Ycal$. Then $v\in\Sel_\tau$ and
$d_P(u,v)\le\Pr_P[G^{c}]<\zeta$, so $d_P(u,\Sel_\tau)<\zeta$.
\emph{(Diameter of the core.)} Two selections can disagree at $x$ only if $A_\tau(x)$ contains at least two labels, so $d_P(u,v)\le D^\star_\tau$ for all $u,v\in\Sel_\tau$. The two selections taking the top and a second admissible label wherever $|A_\tau(x)|\ge2$ disagree on all of that set, so the bound is attained and $\Diam_P(\Sel_\tau)=D^\star_\tau$.
\emph{(Sandwich.)} Since $\Sel_\tau\subseteq\Fcal_{\tau,\zeta}(\fNL)$, the previous step gives $D^\star_\tau\le\Diam_{\tau,\zeta}$. For the upper bound, take
$u,v\in\Fcal_{\tau,\zeta}(\fNL)$ with selections $\tilde u,\tilde v\in\Sel_\tau$ as in
($\subseteq$); the triangle inequality for $d_P$ gives
$d_P(u,v)\le d_P(\tilde u,\tilde v)
+d_P(u,\tilde u)+d_P(v,\tilde v)<D^\star_\tau+2\zeta$,
and taking the supremum yields $\Diam_{\tau,\zeta}\le D^\star_\tau+2\zeta$.
\end{proof}

\begin{proof}[Proof of \Cref{prop:bracket_tightness}]
\emph{(a)} The upper bound $\Diam_{\tau,\zeta}\le D^\star_\tau+2\zeta$ is
\Cref{thm:representation}; it remains to match it. Fix $\eps\in(0,\zeta)$. Off
the overlap region $B^\star=\{x:|A_\tau(x)|\ge2\}$ the admissible set is a
singleton $\{a(x)\}$, with $x\mapsto a(x)$ measurable (the top admissible label
of \Cref{def:maximal_spread}) and nonempty $P$-a.e.\ by \Cref{a:finite_Y}. Since
$P$ is nonatomic on $\{x:|A_\tau(x)|=1\}$, which has mass at least
$2\zeta>2(\zeta-\eps)$, choose disjoint measurable subsets $S_1,S_2$ of it with
$P(S_1)=P(S_2)=\zeta-\eps$. Let $(f_\tau^+,f_\tau^-)$ be the maximal-spread pair,
and let $b(x)$ be the $\prec$-smallest label distinct from $a(x)$ (measurable,
and well defined since $|\Ycal|\ge2$). Define
\[
    f_1(x)=\begin{cases} b(x) & x\in S_1,\\ f_\tau^+(x) & \text{else},\end{cases}
    \qquad
    f_2(x)=\begin{cases} b(x) & x\in S_2,\\ f_\tau^-(x) & \text{else}.\end{cases}
\]
Each $f_j$ deviates from an almost-everywhere admissible selection only on a set
of mass $\zeta-\eps<\zeta$, so $f_1,f_2\in\Fcal_{\tau,\zeta}(\fNL)$ by
\Cref{thm:representation}. Off $B^\star$ both selections equal $a(x)$, so
$f_1$ and $f_2$ disagree on all of $S_1\cup S_2$ (on each $S_j$ exactly one takes
the value $b(x)\neq a(x)$) and, disjointly, on the overlap event
$\{f_\tau^+\neq f_\tau^-\}=B^\star$; hence
$d_P(f_1,f_2)\ge D^\star_\tau+2(\zeta-\eps)$. Letting $\eps\downarrow0$ gives
$\Diam_{\tau,\zeta}\ge D^\star_\tau+2\zeta$, so with the upper bound the diameter
equals $D^\star_\tau+2\zeta$. The supremum is not attained by any single pair,
since each admissible labeling deviates on mass strictly below $\zeta$, whence
every pairwise disagreement is strictly below $D^\star_\tau+2\zeta$.

\emph{(b)} Let $f'\in\Fcal_{\tau,\zeta}(\fNL)$ with coverage complement
$N=\{x:\pi_{\LLM}(f'(x)\mid\fNL,x)<\tau\}$, so $P(N)<\zeta$. If every atom of the
purely atomic $P$ has mass at least $\zeta$, then $N$ contains no atom and hence
$P(N)=0$, so $f'\in\Sel_\tau$. With the immediate reverse inclusion
$\Sel_\tau\subseteq\Fcal_{\tau,\zeta}(\fNL)$, the two classes agree up to
$P$-null sets, and $\Diam_{\tau,\zeta}=\Diam_P(\Sel_\tau)=D^\star_\tau$ by
\Cref{thm:representation}.
\end{proof}

\subsection{Proofs for blind-channel floors and exact blind values}
\label{appssec:floor_proofs}

\begin{proof}[Proof of \Cref{thm:semantic_floor}]
\emph{(a)} This is Le Cam's two-point method with the error rate as
semi-distance~\citep{tsybakov2009introduction}. For any classifier $h$ and
any $x$, the indicators satisfy
$\mathbf{1}[h(x) \neq f_1(x)] + \mathbf{1}[h(x) \neq f_2(x)] \geq
\mathbf{1}[f_1(x) \neq f_2(x)]$, since $h(x)$ can equal at most one of two
distinct labels; taking $\mathbb{E}_{x \sim P}$ gives
$d_P(h, f_1) + d_P(h, f_2) \geq D$ for
every fixed $h$. Hence, writing $p_j$ for densities with respect to a common
dominating measure (e.g.\ $P_1^{\otimes m}+P_2^{\otimes m}$),
\begin{align*}
    \mathbb{E}_{P_1^{\otimes m}}[d_P(\hat h, f_1)] + \mathbb{E}_{P_2^{\otimes m}}[d_P(\hat h, f_2)]
    &\;\geq\; \int \bigl(d_P(\hat h, f_1) + d_P(\hat h, f_2)\bigr) \min(p_1, p_2) \\
    &\;\geq\; D \int \min(p_1, p_2)
    \;=\; D\,\bigl(1 - \mathrm{TV}(P_1^{\otimes m}, P_2^{\otimes m})\bigr).
\end{align*}
Since the maximum is at least the
average, dividing by two yields part~(a).

\emph{(b)} By \Cref{eq:unresolved}, the conditional law of the decoded label
$\rho(J_j,x)$ given $x$ agrees under $f_1$ and $f_2$ for $P$-almost every $x$
(off $B$ this is immediate, since the targets share the label there). The input
marginal $P$ is common to both experiments, so the one-example observation laws
coincide, $P_1 = P_2$; hence $P_1^{\otimes m} = P_2^{\otimes m}$ and
$\mathrm{TV}(P_1^{\otimes m}, P_2^{\otimes m}) = 0$ for every $m$. Substituting
into part~(a) gives $\max_j \mathbb{E}[d_P(\hat h, f_j)] \geq D/2$, uniformly in
$m$. For the fixed-description base channel the hypothesis is automatic, since
$J_j \sim g(\cdot\mid\fNL,x)$ does not depend on the target
(\Cref{prop:deployed_blindness}).
\end{proof}

\begin{corollary}[Tightness and exact minimax value of the two-point floor]
\label{cor:floor_tightness}
In the two-experiment construction of \Cref{thm:semantic_floor}, the minimax mediated
risk over the target pair $\{f_1,f_2\}$ is exactly the least-favorable-prior Bayes risk of the induced binary test~\citep{wald1950statistical,ferguson1967decision,blackwell1954games}; because this is a finite two-action test against a two-point prior, the equality of its minimax and least-favorable-prior values is elementary and requires no general minimax theorem. For every sample size $m$,
\[
    \inf_{\hat h} \max_{j\in\{1,2\}}
    \mathbb{E}_{S\sim P_j^{\otimes m}}\bigl[d_P(\hat h(S), f_j)\bigr]
    \;=\;
    D \sup_{q\in[0,1]}
    \int \min\bigl(q\,\mathrm{d}P_1^{\otimes m},(1-q)\,\mathrm{d}P_2^{\otimes m}\bigr).
\]
Consequently:
\begin{enumerate}[label=(\alph*), itemsep=2pt]
    \item \emph{(Blind tightness.)} In the unresolved case of \Cref{thm:semantic_floor}(b),
          the minimax risk equals $D/2$ for every sample size $m$, attained by the
          data-independent learner that returns $f_1$ or $f_2$ by a fair coin flip.
    \item \emph{(Symmetric tightness.)} If the least-favorable prior is uniform,
          $q^\star=\tfrac12$, then the Le Cam lower bound is tight,
          \[
              \inf_{\hat h} \max_{j\in\{1,2\}}
              \mathbb{E}_{S\sim P_j^{\otimes m}}\bigl[d_P(\hat h(S), f_j)\bigr]
              \;=\;
              \frac{D}{2}\Bigl(1-\mathrm{TV}\bigl(P_1^{\otimes m},P_2^{\otimes m}\bigr)\Bigr).
          \]
    \item \emph{(Total-variation sandwich.)} For general channels,
          \[
              \frac{D}{2}\Bigl(1-\mathrm{TV}\bigl(P_1^{\otimes m},P_2^{\otimes m}\bigr)\Bigr)
              \;\leq\;
              \inf_{\hat h} \max_{j\in\{1,2\}}
              \mathbb{E}_{S\sim P_j^{\otimes m}}\bigl[d_P(\hat h(S), f_j)\bigr]
              \;\leq\;
              D\Bigl(1-\mathrm{TV}\bigl(P_1^{\otimes m},P_2^{\otimes m}\bigr)\Bigr).
          \]
\end{enumerate}
\end{corollary}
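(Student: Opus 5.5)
The plan is to reduce the two-target problem to a binary Bayes test and compute both the upper and the lower bound pointwise in $x$. For a fixed classifier $h$, the risk against $f_j$ splits as the disagreement off $B$, where $f_1=f_2$, plus the disagreement on $B$. Off $B$ any learner can copy the common label, since a learner that outputs labels in $\{f_1(x),f_2(x)\}$ is only ever hurt on $B$. On $B$, write $a_j(h):=\Pr_P[x\in B,\ h(x)\neq f_j(x)]$. Then $a_1+a_2\ge D$, with equality when $h(x)\in\{f_1(x),f_2(x)\}$ on $B$. The minimax problem therefore reduces, with no loss, to learners whose output lies in the segment spanned by $f_1$ and $f_2$. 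For such a learner, let $t(S)\in[0,1]$ be the mass fraction of $B$ on which $\hat h(S)$ agrees with $f_2$, averaged over internal randomization. The risks are then $D\,\E_{P_1^{\otimes m}}[t]$ and $D\,\E_{P_2^{\otimes m}}[1-t]$, so the problem becomes $D\cdot\inf_{t}\max\{\E_1 t,\E_2(1-t)\}$ over measurable $t\colon S\mapsto[0,1]$.

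For the lower bound I will use that the maximum dominates every convex combination. For any $q$,
\[
\max\{\E_1 t,\E_2(1-t)\}\ge q\E_1 t+(1-q)\E_2(1-t)\ge\int\min(q\,p_1,(1-q)p_2),
\]
where the second inequality holds pointwise in $S$. For the upper bound, the infimum over $t$ of the mixed risk is attained by the Bayes test, so the inf–max and sup–min coincide. This is the finite-game step: since $t$ ranges over a convex set and the risk pair is linear in $t$, the set of achievable risk pairs $\{(\E_1t,\E_2(1-t))\}$ is convex and compact in $\mathbb R^2$. The minimax over this set equals the sup over supporting directions $q$, by an elementary planar separating-hyperplane argument: the point on the diagonal nearest the set is supported by some $q^\star$. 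This step is the main obstacle, and I would state it as the two-dimensional separation fact rather than invoke Sion. The realizability of fractions $t$ uses randomization, for instance returning $f_2$ with probability $t(S)$ and $f_1$ otherwise.

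The consequences then follow directly. For (a), $P_1=P_2$ gives $\int\min(q p,(1-q)p)=\min(q,1-q)$, which is maximized at $q=\tfrac12$ with value $D/2$; the fair coin has $t\equiv\tfrac12$. For (b), $q^\star=\tfrac12$ gives $\tfrac D2\int\min(p_1,p_2)=\tfrac D2(1-\mathrm{TV})$. For (c), the lower bound is the case $q=\tfrac12$. The upper bound follows because $\min(qp_1,(1-q)p_2)\le\min(p_1,p_2)$ for every $q$, so the supremum is at most $D(1-\mathrm{TV})$.
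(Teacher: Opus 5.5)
Your proof is correct, but its middle step differs from the paper's. After the same pointwise projection onto $\{f_1(x),f_2(x)\}$, the paper shows that global rules suffice through a Bayes argument. Under any two-point prior the posterior weight $q(S)$ does not depend on $x$, so the Bayes rule selects the same target at every $x\in B$. The paper then treats the problem as a two-action test and cites the elementary finite minimax identity for the equality of the minimax and least-favorable Bayes values. You instead note that for a pointwise-valued rule the risk pair depends on the sample only through the scalar $t(S)$, and that every measurable $t$ is realized by the randomized global test returning $f_2$ with probability $t(S)$. This identifies the whole risk set of pointwise rules with that of randomized tests, not merely their Bayes risks. As a result you need no sufficiency argument for Bayes rules, and you prove the inf--sup equality by planar separation instead of citing it. The paper's route explains why per-input mixing cannot help: the posterior is constant in $x$. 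Yours is more self-contained and makes the role of randomization explicit. Two small repairs are needed. First, justify the reduction by componentwise domination: replacing $h(x)\notin\{f_1(x),f_2(x)\}$ on $B$ by $f_1(x)$ lowers $a_1$ and leaves $a_2$ unchanged. The sum inequality $a_1+a_2\ge D$ alone does not give the reduction. Second, the separation step needs only convexity of the risk set, not the compactness you assert without proof. Separating the risk set from the open quadrant $\{s:s_1<v,\ s_2<v\}$, with $v$ the minimax value, gives a nonzero normal with nonnegative entries. Normalized, it is a $q^\star$ whose Bayes value is at least $v$. Compactness would matter only for attainment of the infimum, which the statement asserts only in part~(a), where the fair coin is explicit.
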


\begin{proof}[Proof of \Cref{cor:floor_tightness}]
\emph{Reduction to a two-point test.} Off $B$ the two targets coincide, and any
label outside $\{f_1(x),f_2(x)\}$ errs against both targets at $x$; projecting a
rule pointwise onto the common label off $B$ and onto whichever of
$f_1(x),f_2(x)$ it already matches on $B$ (breaking ties arbitrarily) therefore
weakly lowers $d_P(\cdot,f_1)$ and $d_P(\cdot,f_2)$ simultaneously. Hence we may
restrict to rules $\hat h$ valued pointwise in $\{f_1(x),f_2(x)\}$. Such rules
need not be global (they may follow $f_1$ on part of $B$ and $f_2$ elsewhere), so
we do \emph{not} assert this reduction collapses onto the two functions
$f_1,f_2$; instead we identify the minimax value through the least-favorable
prior.

We first show that \emph{global} rules form a sufficient class. Fix a prior with
weight $q\in[0,1]$ on $f_1$, and conditional on the sample $S$ write $q(S)$ for
the posterior weight on $f_1$. A Bayes rule minimizes the posterior risk
$q(S)\,d_P(\hat h,f_1)+(1-q(S))\,d_P(\hat h,f_2)$, which pointwise on $B$ is
minimized by $f_1(x)$ when $q(S)\ge\tfrac12$ and by $f_2(x)$ otherwise. This is
the \emph{same} choice at every $x$, since $q(S)$ does not vary with $x$. Hence the
Bayes rule is the global test $\hat\jmath\colon S\mapsto\{1,2\}$ returning
$f_{\hat\jmath(S)}$, so global tests are sufficient even though the feasible class
is larger. The problem therefore reduces to a two-action test of $f_1$ against
$f_2$; being a finite test against a two-point prior, its minimax risk equals the
least-favorable-prior Bayes risk by the elementary finite minimax identity
\citep{wald1950statistical,ferguson1967decision}, with no general minimax theorem
required. Such a test errs only on $B$ and only when $\hat\jmath(S)\neq j$, so
$d_P(f_{\hat\jmath(S)},f_j)=D\,\mathbf 1[\hat\jmath(S)\neq j]$. Hence
\[
    \inf_{\hat h}\max_{j\in\{1,2\}}\mathbb{E}_{P_j^{\otimes m}}[d_P(\hat h,f_j)]
    = D\,\inf_{\hat\jmath}\max_{j\in\{1,2\}}\Pr_{P_j^{\otimes m}}[\hat\jmath\neq j]
    = D\sup_{q\in[0,1]}\int\min\bigl(q\,\mathrm dP_1^{\otimes m},(1-q)\,\mathrm dP_2^{\otimes m}\bigr),
\]
the last equality being the minimax (least-favorable-prior) value of the Bayes test between $P_1^{\otimes m}$ and $P_2^{\otimes m}$~\citep{wald1950statistical,tsybakov2009introduction}.
\emph{(a)} If $P_1=P_2$ the integrand is $\min(q,1-q)\,\mathrm dP_1^{\otimes m}$, with
supremum $\tfrac12$ over $q$, so the value is $D/2$; the fair-coin learner returning
$f_1$ or $f_2$ has risk $\tfrac12\cdot0+\tfrac12\cdot D=D/2$ against either target and
attains it. \emph{(b)} When the least-favorable prior is uniform, evaluating at
$q=\tfrac12$ gives $\tfrac12\int\min(\mathrm dP_1^{\otimes m},\mathrm dP_2^{\otimes m})
=\tfrac12\bigl(1-\mathrm{TV}(P_1^{\otimes m},P_2^{\otimes m})\bigr)$, so the value is
$\tfrac{D}{2}(1-\mathrm{TV})$. \emph{(c)} Evaluating the supremum at $q=\tfrac12$ gives the
lower bound; $\min(qa,(1-q)b)\le\min(a,b)$ pointwise gives
$\sup_q\int\min\le\int\min(\mathrm dP_1^{\otimes m},\mathrm dP_2^{\otimes m})=1-\mathrm{TV}$,
the upper bound.
\end{proof}

\begin{remark}[Average oracle accuracy is compatible with localized unresolvedness]
\label{rem:floor_vs_epssys}
The unresolved floor concerns only the portion of the ambiguity the oracle
cannot separate, not the full diameter, so a nonzero floor coexists with high
average oracle accuracy. Call the oracle \emph{$\gamma$-approximately correct},
for a fixed $\gamma\in(0,\tfrac12]$, when its aggregate error $\egora$ lies a
margin $\gamma$ below uniform labeling, $\egora\le1-\tfrac1{|\Ycal|}-\gamma$
(in the binary case $\egora\le\tfrac12-\gamma$). Writing $\egora^{\mathrm{off}}$
for the error rate conditional on $x\notin B$, a binary oracle that errs at the
chance rate $\tfrac12$ on $B$ has average error
\[
    \egora = \frac{D}{2} + (1-D)\,\egora^{\mathrm{off}},
\]
which satisfies the margin condition exactly when
$\tfrac{D}{2}+(1-D)\,\egora^{\mathrm{off}}\le\tfrac12-\gamma$; for an oracle
accurate off $B$ this holds for all sufficiently small $D$. Thus an oracle can
be $\gamma$-approximately correct on average while remaining uninformative
precisely where the candidate targets disagree.
\end{remark}

\begin{proof}[Proof of \Cref{thm:deployed_exact_minimax}]
By hypothesis the channel is target-blind on $\Sel_\tau$, so the oracle-labeled
sample has the same law for every $f\in\Sel_\tau$; any randomized learner
therefore induces a target-independent randomization over measurable classifiers.

Let $C=|\Ycal|$ and $L=\operatorname{lcm}(1,\ldots,C)$. Enumerate the repaired
set $\bar A_\tau(x)=\{a_1(x),\ldots,a_{k(x)}(x)\}$ in the fixed order $\prec$.
For each $j\le C$, extend $a_j(x)$ by the default value $y_0$ on
$\{x:k(x)<j\}$. The maps $k$ and these total maps $a_j$ are measurable because
$\Ycal$ is finite and each label's admissibility event is measurable under
\Cref{a:measurable}. For
$i=1,\ldots,L$, define
\[
    f^{(i)}(x)
    :=a_{1+((i-1)\bmod k(x))}(x).
\]
Each $f^{(i)}$ is measurable and belongs to $\Sel_\tau$ (the set on which
$\bar A_\tau$ repairs $A_\tau$ is $P$-null). Since $k(x)$ divides $L$, a uniform choice
of $i$ makes $f^{(i)}(x)$ uniform on $\bar A_\tau(x)$ for every $x$.

\emph{Achievability (upper bound).} Let $\hat h_\star$ ignore the sample, draw
$i$ uniformly, and output $f^{(i)}$. For every $f\in\Sel_\tau$,
\[
    \E\!\left[d_P(\hat h_\star,f)\right]
    =\E_x\!\left[\Pr_i\{f^{(i)}(x)\ne f(x)\}\right]
    =\E_x\!\left[1-\frac1{k(x)}\right]
    =V^\star_\tau,
\]
where the equality is unaffected by the repaired null set. Hence the minimax is
at most $V^\star_\tau$.

\emph{Lower bound.} Regard $\{f^{(i)}\}_{i=1}^L$ as an indexed family
(repetitions are harmless, since a supremum dominates any weighted average).
Fix any randomized learner and write
$H=\hat h(S,W)$ for its output, including its internal random seed $W$. Draw a uniform random index $i$ independently of $(X,S,W)$ by construction, and
note that by target blindness the sample law of $S$ does not depend on $i$;
these two facts together make $H=\hat h(S,W)$ independent of $i$, which
justifies interchanging the expectation over $(x,S,W)$ with the probability
over $i$ below. Therefore
\begin{align*}
    \sup_{f\in\Sel_\tau}\E\!\left[d_P(H,f)\right]
    &\ge \frac1L\sum_{i=1}^L
        \E\!\left[d_P(H,f^{(i)})\right] \\
    &=\E_{x,S,W}\!\left[
        1-\Pr_i\{f^{(i)}(x)=H(x)\mid x,H\}
      \right] \\
    &\ge \E_x\!\left[1-\frac1{k(x)}\right]
     =V^\star_\tau.
\end{align*}
Indeed, the conditional match probability is $1/k(x)$ when
$H(x)\in\bar A_\tau(x)$ and zero otherwise. Taking the infimum over learners
gives the lower bound. The two bounds coincide, and $\hat h_\star$ attains the
value for every sample size $m$.
\end{proof}

\begin{proof}[Proof of \Cref{thm:coherent_bridge}]
Target blindness on $\mathcal C$ makes the sample law common to all
$f\in\mathcal C$. Averaging a randomized learner over this common law produces
a data-independent randomized classifier with the same risk against every
target in $\mathcal C$. It therefore suffices to optimize over randomized
classifiers.

For any distribution $\lambda$ on the finite family $\mathcal C$, a plurality
classifier minimizes the $\lambda$-average risk:
\[
    \inf_{\hat h}\E_{f\sim\lambda}[d_P(\hat h,f)]
    =\inf_{\hat h}\E_x\bigl[1-\mu^\lambda_x(\hat h(x))\bigr]
    =\E_x\bigl[1-\max_y\mu^\lambda_x(y)\bigr],
\]
the minimizer $h^\star(x)\in\arg\max_y\mu^\lambda_x(y)$ being measurable ($\Ycal$
finite, $\mu^\lambda_x$ measurable in $x$), and randomization not helping since
$1-\mu^\lambda_x(y)\ge1-\max_{y'}\mu^\lambda_x(y')$ for every $y$.

\emph{Upper bound.}
Consider the data-independent randomized classifier that, at each $x$, draws
uniformly from $\bar A_\tau(x)$. This kernel is measurable because $\Ycal$ is
finite and $x\mapsto\bar A_\tau(x)$ is measurable. For
$f_a\in\mathcal C$, let
$N_a:=\{x:f_a(x)\notin A_\tau(x)\}$. Since
$f_a\in\Fcal_{\tau,\zeta}(\fNL)$, $P(N_a)<\zeta$. On $N_a^c$ the classifier
errs with probability $1-1/k(x)$, while on $N_a$ its error is at most one.
Consequently,
\[
\begin{aligned}
    \E[d_P(\hat h,f_a)]
    &\le \E_x\!\left[1-\frac1{k(x)}\right]+P(N_a) \\
    &< V^\star_\tau+\zeta.
\end{aligned}
\]
Taking the supremum over $a$ and then the infimum over learners yields
$V_{\mathrm{blind}}(\mathcal C)\le V^\star_\tau+\zeta$.

\emph{Lower bound.}
Suppose that $\mathcal C$ is $\eta$-uniformly covering, and fix the covering
distribution $\lambda^\star$ from
\Cref{def:eta_coverage}. For every learner $\hat h$, a maximum dominates an
average, $\sup_{f\in\mathcal C}\E[d_P(\hat h,f)]\ge\E_{f\sim\lambda^\star}[d_P(\hat h,f)]$,
so taking $\inf_{\hat h}$ and using the display,
\[
\begin{aligned}
    V_{\mathrm{blind}}(\mathcal C)
    &=\inf_{\hat h}\sup_{f\in\mathcal C}\E[d_P(\hat h,f)] \\
    &\ge\inf_{\hat h}\E_{f\sim\lambda^\star}[d_P(\hat h,f)]
    &=\E_x\bigl[1-\max_y\mu^{\lambda^\star}_x(y)\bigr] \\
    &=V^\star_\tau
      -\E_x\bigl[\max_y\mu^{\lambda^\star}_x(y)-\tfrac1{k(x)}\bigr]
    \;\ge\;V^\star_\tau-\eta,
\end{aligned}
\]
using $V^\star_\tau=\E_x[1-1/k(x)]$ (\Cref{thm:deployed_exact_minimax}) and the
$\eta$-coverage condition.
\end{proof}

We now record the finite-sample pairwise and finite-family floor certificates
deferred from \Cref{ssec:floor_certificate}. Throughout, fix
$f_a,f_b\in\Fcal_{\tau,\zeta}(\fNL)$ with disagreement mass $D_{ab}=P(f_a\neq
f_b)$; for $u\in\{a,b\}$ let $P_u$ be the one-example observation law induced by
target $f_u$ through the fixed-description base channel; and, independently of the
learner's sample and of the choice of pair, draw $z_1,\dots,z_N\sim P$ and set
$\widehat D_{ab}=\frac1N\sum_{i=1}^N\mathbf 1\{f_a(z_i)\neq f_b(z_i)\}$.

\begin{theorem}[Finite-sample certificate for the blind-channel floor]
\label{thm:floor_certificate}
Fix an admissible pair $(a,b)$, chosen independently of the audit sample, and a
confidence level $\delta\in(0,1)$. Suppose the supervision channel is
target-blind on $\{f_a,f_b\}$, that is, $P_a=P_b$, so that the two observation
laws coincide, $P_a^{\otimes m}=P_b^{\otimes m}$, for every $m$.
\begin{enumerate}[label=(\alph*), itemsep=2pt]
    \item \emph{(Population certificate.)} For every $m$ and every measurable,
          possibly randomized learner $\hat h$ receiving $m$ examples,
          \begin{equation}
          \label{eq:cert_population}
              \max_{u\in\{a,b\}}\;
              \mathbb{E}_{S\sim P_u^{\otimes m}}\bigl[d_P(\hat h(S), f_u)\bigr]
              \;\geq\;
              \frac{D_{ab}}{2}.
          \end{equation}
          The bound is independent of $m$: no amount of supervision through the
          blind channel can distinguish $f_a$ from $f_b$, and the worst of the
          two risks is at least half their disagreement mass.
    \item \emph{(Empirical certificate.)} With the audit estimator
          $\widehat D_{ab}$ above, with probability at least $1-\delta$ over the
          audit sample,
          \[
              D_{ab} \;\geq\; D_{L,ab}
              \;:=\;
              \Bigl[\widehat D_{ab} - \sqrt{\tfrac{\log(2/\delta)}{2N}}\Bigr]_+,
              \qquad\text{so}\qquad
              \max_{u\in\{a,b\}}\;
              \mathbb{E}\bigl[d_P(\hat h(S), f_u)\bigr]
              \;\geq\;
              \frac{D_{L,ab}}{2}
          \]
          for every $m$.
\end{enumerate}
\end{theorem}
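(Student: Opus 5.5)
Part (a) will be obtained as a direct specialization of \Cref{thm:semantic_floor}. I will take $f_1=f_a$ and $f_2=f_b$, so that the disagreement region is $B=\{f_a\neq f_b\}$ and $D=D_{ab}$. The hypothesis $P_a=P_b$ gives $P_a^{\otimes m}=P_b^{\otimes m}$ for every $m$, hence $\mathrm{TV}(P_a^{\otimes m},P_b^{\otimes m})=0$. Part~(a) of \Cref{thm:semantic_floor} then yields $\max_u\E[d_P(\hat h(S),f_u)]\ge D_{ab}/2$ for every $m$ and every measurable randomized learner. Invoking part~(a) rather than part~(b) is deliberate: the hypothesis here is equality of the full one-example laws, which is exactly what part~(a) consumes, so I do not need the pointwise condition \Cref{eq:unresolved}. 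Because the $D/2$ bound holds for every fixed $h$ pointwise in the integrand, the learner's internal randomization is absorbed by averaging.

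For part (b), I will set $Z_i:=\mathbf 1\{f_a(z_i)\neq f_b(z_i)\}$. Since $(a,b)$ is fixed independently of the audit, and $f_a,f_b$ are measurable so the event $B$ is measurable, the $Z_i$ are i.i.d.\ Bernoulli with mean $D_{ab}$. Hoeffding's inequality (one-sided suffices, and the two-sided radius is conservative) gives $D_{ab}\ge\widehat D_{ab}-\sqrt{\log(2/\delta)/(2N)}$ with probability at least $1-\delta$. Combining this with $D_{ab}\ge0$ gives $D_{ab}\ge D_{L,ab}$ on that event.

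To finish, I will intersect with part~(a). Part~(a) is deterministic: it involves no audit randomness and holds for all $m$ and all learners. The audit sample is independent of the learner's sample $S$, so on the Hoeffding event the chain $\max_u\E[d_P(\hat h(S),f_u)]\ge D_{ab}/2\ge D_{L,ab}/2$ holds simultaneously for every $m$ and every $\hat h$. No union bound over $m$ or over learners is needed, because the only random quantity is $\widehat D_{ab}$.

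The main point requiring care is the independence and pre-specification of the pair. If $(a,b)$ were chosen using the audit inputs, the $Z_i$ would no longer be i.i.d.\ with mean $D_{ab}$, and the Hoeffding step would fail. I will therefore state explicitly that the pair is fixed before the audit, which the hypothesis guarantees. Everything else is routine.
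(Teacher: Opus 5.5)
Your proposal is correct and essentially matches the paper's proof. For (a), the paper argues directly from the triangle inequality $d_P(h,f_a)+d_P(h,f_b)\ge D_{ab}$ under the single common law $P_a^{\otimes m}=P_b^{\otimes m}$, which is exactly the $\mathrm{TV}=0$ case of the Le Cam bound in \Cref{thm:semantic_floor}(a) that you cite; for (b), it uses the same one-sided Hoeffding step, with failure probability $e^{-2Nt^2}=\delta/2\le\delta$, and substitutes the result into (a), while your explicit appeal to $D_{ab}\ge 0$ to handle the positive part is a detail the paper leaves implicit.
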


\begin{proof}
Write $R_u:=\mathbb E_{S\sim P_u^{\otimes m}}[d_P(\hat h(S),f_u)]$ for
$u\in\{a,b\}$.

\emph{(a)} Since $d_P(f,g)=P(f\neq g)$ is a pseudometric on labelings, it obeys
the triangle inequality $d_P(h,f_a)+d_P(h,f_b)\ge d_P(f_a,f_b)$ for every $h$;
for randomized $\hat h$ the inequality holds conditionally on its realized
output and hence in expectation. The hypothesis $P_a^{\otimes m}=P_b^{\otimes m}$
lets both risks be written under the single law $P_a^{\otimes m}$, so
\[
    R_a+R_b
    =\mathbb E_{S\sim P_a^{\otimes m}}
        \bigl[d_P(\hat h(S),f_a)+d_P(\hat h(S),f_b)\bigr]
    \;\ge\; d_P(f_a,f_b)
    \;=\; D_{ab}.
\]
Therefore $\max\{R_a,R_b\}\ge\tfrac12(R_a+R_b)\ge D_{ab}/2$, which is
\Cref{eq:cert_population}. The argument uses no property of $m$, so the bound
holds for every $m$.

\emph{(b)} The audit indicators $\mathbf 1\{f_a(z_i)\neq f_b(z_i)\}$ are i.i.d.\
Bernoulli with mean $D_{ab}$, so the one-sided Hoeffding inequality gives
$\Pr[\widehat D_{ab}-D_{ab}\ge t]\le e^{-2Nt^2}$. Setting
$t=\sqrt{\log(2/\delta)/(2N)}$ yields $D_{ab}\ge D_{L,ab}$ with probability at
least $1-\delta$, and substituting this bound into part~(a) certifies the floor
$D_{L,ab}/2$.
\end{proof}

\begin{corollary}[Blind finite-family floor]
\label{cor:finite_candidate}
Let $\{f^{(1)},\dots,f^{(K)}\}\subseteq\Fcal_{\tau,\zeta}(\fNL)$, $K\ge2$, be a
finite family of admissible labelings chosen independently of the audit sample,
and let the supervision channel be target-blind on this family, so that the
observation laws satisfy $P_a=P_b$ for every pair. For each pair $a<b$ set
$\widehat D_{ab}=\frac1N\sum_{i=1}^N\mathbf 1\{f^{(a)}(z_i)\neq f^{(b)}(z_i)\}$
on the $N$ i.i.d.\ held-out audit inputs $z_1,\dots,z_N$, and define the
per-pair floor
\[
    D_{L,ab}
    :=\Bigl[\widehat D_{ab}-\sqrt{\tfrac{\log(2\binom{K}{2}/\delta)}{2N}}\Bigr]_+ .
\]
Then, with probability at least $1-\delta$ over the audit sample and for every
sample size $m$,
\[
    \inf_{\hat h}\;
    \max_{1\le k\le K}\;
    \mathbb{E}_{S\sim P_k^{\otimes m}}\bigl[d_P(\hat h(S), f^{(k)})\bigr]
    \;\geq\;
    \max_{1\le a<b\le K}\;\frac{D_{L,ab}}{2} ,
\]
where the infimum is over all measurable, possibly randomized learners.
\end{corollary}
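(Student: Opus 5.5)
The plan is a union bound over pairs combined with the pairwise certificate of \Cref{thm:floor_certificate}. For each pair $a<b$, the audit indicators $\mathbf 1\{f^{(a)}(z_i)\neq f^{(b)}(z_i)\}$ are i.i.d.\ Bernoulli with mean $D_{ab}=d_P(f^{(a)},f^{(b)})$. This holds because the family was fixed independently of the audit sample, so each labeling is a deterministic function as far as the $z_i$ are concerned. The one-sided Hoeffding inequality at level $\delta/\binom K2$ then gives
\[
D_{ab}\ge \Bigl[\widehat D_{ab}-\sqrt{\log(\tbinom K2/\delta)/(2N)}\Bigr]_+,
\]
with failure probability at most $\delta/\binom K2$. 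Using the (more conservative) radius with $2\binom K2$ inside the logarithm only enlarges the radius, so $D_{ab}\ge D_{L,ab}$ as well. A union bound over the $\binom K2$ pairs shows that, with probability at least $1-\delta$, the inequality $D_{ab}\ge D_{L,ab}$ holds simultaneously for every $a<b$. I will work on this event.

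On this event, I will fix any learner $\hat h$ and any $m$. For each pair $a<b$, target blindness on the family gives $P_a=P_b$. Hence \Cref{thm:floor_certificate}(a), or equivalently \Cref{thm:semantic_floor}(b), yields
\[
\max_{u\in\{a,b\}}\E_{S\sim P_u^{\otimes m}}[d_P(\hat h(S),f^{(u)})]\ge D_{ab}/2\ge D_{L,ab}/2 .
\]
Since the maximum over all $k\le K$ dominates the maximum over $\{a,b\}$, taking the maximum over pairs gives $\max_k\E[d_P(\hat h,f^{(k)})]\ge\max_{a<b}D_{L,ab}/2$.

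The right-hand side depends only on the audit sample, not on $\hat h$ or $m$. I can therefore take the infimum over learners while staying on the same event, and the bound holds uniformly in $m$.

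The only delicate point is the order of quantifiers. The high-probability event must be defined through the audit sample alone, before any learner or $m$ is chosen. Only then is the infimum over all learners, and the simultaneity over all $m$, legitimate without further union bounds. This requires the audit inputs to be independent of the choice of family, which is why the independence hypothesis is needed. It also requires the population floor to be deterministic, which is why the pairwise bound is applied at the population level $D_{ab}$ before the empirical substitution.
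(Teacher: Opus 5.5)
Your proposal is correct and matches the paper's proof step for step: a one-sided Hoeffding bound for each pair, a union bound over the $\binom{K}{2}$ pairs, and then the two-point floor applied on that audit-only event, taking the infimum over learners and the uniformity in $m$ afterwards. You are also right that the factor $2$ inside the logarithm is conservative for a one-sided bound; the paper keeps it for the same reason it keeps the two-sided radius $\eps_N$ elsewhere.
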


\begin{proof}
Fix a pair $a<b$. The audit indicators $\mathbf 1\{f^{(a)}(z_i)\neq f^{(b)}(z_i)\}$
are i.i.d.\ Bernoulli with mean $D_{ab}$, so the one-sided Hoeffding inequality
with radius $\sqrt{\log(2\binom{K}{2}/\delta)/(2N)}$ gives $D_{ab}<D_{L,ab}$ with
probability at most $\delta/\binom{K}{2}$. A union bound over the $\binom{K}{2}$
pairs then makes the event
\[
    \mathcal E:=\bigl\{\,D_{ab}\ge D_{L,ab}\ \text{for all }a<b\,\bigr\}
\]
hold with probability at least $1-\delta$.

Work on $\mathcal E$, and fix any learner $\hat h$, any $m$, and any pair
$a<b$. Since $\{f^{(a)},f^{(b)}\}$ is a subset of the family, retaining only
these two targets can only decrease the maximum,
\[
    \max_{1\le k\le K}\mathbb E\bigl[d_P(\hat h(S),f^{(k)})\bigr]
    \;\ge\;\max\{R_a,R_b\}.
\]
Target-blindness on $\{f^{(a)},f^{(b)}\}$ gives $\max\{R_a,R_b\}\ge D_{ab}/2$ by
the two-point argument of \Cref{thm:floor_certificate}(a), and $D_{ab}\ge D_{L,ab}$
on $\mathcal E$, so the left side is at least $D_{L,ab}/2$. The bound holds for
every $\hat h$ and every $m$; taking the infimum over $\hat h$ and the maximum
over pairs yields the claim.
\end{proof}

\subsection{Proofs for the plug-in overlap statistic}
\label{appssec:search_proofs}

We first record why the threshold margin cannot be removed, then prove
\Cref{cor:plugin_dstar}.

\begin{remark}[Necessity of the threshold margin]
\label{rem:plugin_margin}
Without a threshold margin the plug-in statistic is not certifiable at any
finite depth. Deciding whether $\pi_{\LLM}(y\mid\fNL,x)$ lies above or below
$\tau$ at an input where it sits within $\xi$ of the threshold requires
$\Omega(\xi^{-2})$ draws. Thus no finite depth uniformly resolves inputs
arbitrarily close to $\tau$. The term $\kappa(\xi)$ bounds the deployment mass
of these inputs. The audit must fix $\xi$ in advance; optimizing it over a
data-dependent grid would require a union bound over the grid. When the API exposes the decoding probabilities exactly, $r$ is
irrelevant and \Cref{prop:master_statistic} applies as stated.
The margin mass $\kappa(\xi)$ is the analogue, at the admissibility threshold
$\tau$, of a Mammen--Tsybakov low-noise/margin quantity
\citep{mammen1999smooth}: it measures the deployment mass sitting where the
plug-in decision is statistically hard, and the same margin geometry governs
the fast rates available for plug-in classifiers
\citep{audibert2007fast}.
\end{remark}

\begin{proof}[Proof of \Cref{cor:plugin_dstar}]
Fix $\xi>0$. Call an input $x$ \emph{$\xi$-safe} if every label satisfies
$|\pi_{\LLM}(y\mid\fNL,x)-\tau|>\xi$; by definition the unsafe inputs carry mass
$\kappa(\xi)$.

\emph{Step 1 (per-input decision error).} At a $\xi$-safe input, each empirical
frequency $\widehat\pi_i(y)$ concentrates around $\pi_{\LLM}(y\mid\fNL,x)$, so a
two-sided Hoeffding bound over the $r$ decoding draws, taken in union across the
$C=|\Ycal|$ labels, gives
$\Pr[\widehat A_\tau(x)\neq A_\tau(x)]\le 2Ce^{-2r\xi^2}$; on the complementary
event the plug-in overlap indicator agrees with the population one. For the
joint draw of $z\sim P$ and its decoding samples, the indicators therefore
disagree only on the unsafe mass or through this depth error,
\[
    \Pr\bigl[\mathbf 1\{|\widehat A_\tau(z)|\ge2\}
        \neq\mathbf 1\{|A_\tau(z)|\ge2\}\bigr]
    \;\le\;\kappa(\xi)+2Ce^{-2r\xi^2},
\]
so their means satisfy
$|\E[\widehat D^\star_{\mathrm{plug}}]-D^\star_\tau|\le\kappa(\xi)+2Ce^{-2r\xi^2}$.

\emph{Step 2 (aggregation).} The $N$ plug-in indicators are i.i.d.\
$\{0,1\}$-valued, each a function of the independent pair (input, decoding
draws), so a two-sided Hoeffding bound gives
$|\widehat D^\star_{\mathrm{plug}}-\E[\widehat D^\star_{\mathrm{plug}}]|
\le\sqrt{\log(2/\delta)/2N}$ with probability at least $1-\delta$. Combining the
two steps by the triangle inequality yields the stated deviation bound
$|\widehat D^\star_{\mathrm{plug}}-D^\star_\tau|\le\eps_{N,r}(\xi)$, and
transporting it to $\Diam_{\tau,\zeta}$ through \Cref{thm:representation} adds
only the deterministic $2\zeta$ at the upper end, giving the confidence band.

\emph{Step 3 (depth prescription).} When $\kappa(\xi)>0$, taking
$r\ge\ln(2C/\kappa(\xi))/(2\xi^2)$ forces $2Ce^{-2r\xi^2}\le\kappa(\xi)$, so the
band inflates over the observed-$A_\tau$ band of \Cref{prop:master_statistic} by
at most $2\kappa(\xi)$; when $\kappa(\xi)=0$, taking
$r\ge\ln(2C/b)/(2\xi^2)$ makes the depth term at most any fixed budget $b>0$.
\end{proof}

\subsection{Proofs for the end-to-end audit}
\label{appssec:endtoend_proofs}

\begin{proof}[Proof of \Cref{thm:end_to_end}]
Two-sided Hoeffding on the i.i.d.\ indicators $\mathbf 1\{|A_\tau(z_i)|\ge2\}$
gives $|\widehat D^\star - D^\star_\tau| \le \eps_N$ with probability at least
$1-\delta$. We argue on this event.

\emph{(a)} The maximal-spread pair $(f_\tau^+, f_\tau^-)$ consists of
almost-everywhere admissible selections, so both members lie in
$\Fcal_{\tau,\zeta}(\fNL)$ (\Cref{thm:representation}). As a deterministic
functional of $(\pi_{\LLM}, \tau, \prec)$, the pair is fixed independently of
the audit inputs, so no sample splitting is required and the full budget
$\delta$ funds the single estimate $\widehat D^\star$; this is the $K=2$ case
of \Cref{thm:floor_certificate}. Since
$\{f_\tau^+(z) \neq f_\tau^-(z)\} = \{|A_\tau(z)| \ge 2\}$, the pair's
disagreement mass is $D_{+-} = D^\star_\tau \ge \widehat D^\star - \eps_N$.
Target-blindness (\Cref{prop:deployed_blindness}) makes the two observation
laws coincide, so by the two-point floor of \Cref{thm:semantic_floor}(b) the
worst-case risk is at least $D_{+-}/2 \ge \tfrac12(\widehat D^\star - \eps_N)$,
uniformly in $m$; since risk is nonnegative, it is therefore at least the
positive part $\tfrac12(\widehat D^\star - \eps_N)_+$ reported by the theorem.

\emph{(b)} The representation bracket (\Cref{thm:representation}) gives
$D^\star_\tau \ge \Diam_{\tau,\zeta} - 2\zeta$. Since the reported floor
$\tfrac12(\widehat D^\star - \eps_N)_+$ is at least the untruncated quantity
$\tfrac12(\widehat D^\star - \eps_N)$, it suffices to bound the latter:
\[
    \tfrac12(\widehat D^\star - \eps_N)
    \;\ge\; \tfrac12 D^\star_\tau - \eps_N
    \;\ge\; \tfrac12(\Diam_{\tau,\zeta} - 2\zeta) - \eps_N .
\]
In the other direction, any pair $f_1, f_2 \in \Fcal_{\tau,\zeta}(\fNL)$ has
floor $\tfrac12 d_P(f_1, f_2) \le \tfrac12 \Diam_{\tau,\zeta}
\le \tfrac12 D^\star_\tau + \zeta$, which exceeds the reported floor by at most
$\zeta + \eps_N$. This is the diameter-bracket consequence of part~(b); it
bounds every admissible pair's floor at once and involves no data-dependent
search over pairs.
\end{proof}

\begin{corollary}[Two-sided certified interval for the blind-channel value]
\label{cor:deployed_interval}
Let $\mathcal F_K\subseteq\Fcal_{\tau,\zeta}(\fNL)$ be a finite admissible family
containing the maximal-spread pair, with blind-channel minimax risk
$V_{\mathrm{blind}}(\mathcal F_K)$. Then $V_{\mathrm{blind}}(\mathcal F_K)$ is
independent of $m$ and
\[
    \tfrac12 D^\star_\tau
    \;\le\;V_{\mathrm{blind}}(\mathcal F_K)\;\le\;
    D^\star_\tau+2\zeta ;
\]
consequently, with probability at least $1-\delta$,
\[
    \tfrac12\bigl(\widehat D^\star-\eps_N\bigr)_+
    \;\le\;V_{\mathrm{blind}}(\mathcal F_K)\;\le\;
    \widehat D^\star+\eps_N+2\zeta .
\]
\end{corollary}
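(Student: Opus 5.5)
The plan is to prove $m$-independence first, then the population sandwich, and finally to plug in the Hoeffding event of \Cref{prop:master_statistic}.

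\emph{Independence of $m$.} The family $\mathcal F_K$ is finite and lies in $\Fcal_{\tau,\zeta}(\fNL)$. On the fixed-description base channel, \Cref{prop:deployed_blindness} makes the one-example law $P_f$ common to every $f\in\mathcal F_K$, so $P_f^{\otimes m}$ is one law for all targets. As in the proof of \Cref{thm:coherent_bridge}, averaging any randomized learner over this common law gives a data-independent randomized classifier with the same risk against every target. Conversely, every data-independent classifier is itself a learner at any $m$. Hence $R_m(\mathcal F_K)$ equals the infimum over data-independent randomized classifiers, which does not involve $m$. This justifies writing $V_{\mathrm{blind}}(\mathcal F_K)$.

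\emph{Lower bound.} Since $(f_\tau^+,f_\tau^-)\subseteq\mathcal F_K$, the supremum over $\mathcal F_K$ dominates the maximum over this pair. \Cref{thm:semantic_floor}(b), or equivalently \Cref{thm:floor_certificate}(a), then gives risk at least $\tfrac12 d_P(f_\tau^+,f_\tau^-)$. By \Cref{lem:diameter_elicited} this equals $\tfrac12 D^\star_\tau$.

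\emph{Upper bound.} Exhibit one data-independent learner: fix any $f_0\in\mathcal F_K$, for instance $f_\tau^+$, and always output it. Its risk against any $f\in\mathcal F_K$ is $d_P(f_0,f)\le\Diam_{\tau,\zeta}(\fNL,P)\le D^\star_\tau+2\zeta$ by \Cref{thm:representation}.

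A sharper alternative would use the uniform-$\bar A_\tau$ learner from the proof of \Cref{thm:coherent_bridge}. Its risk against $f$ is at most $V^\star_\tau+P(f\notin A_\tau)<V^\star_\tau+\zeta$, and $V^\star_\tau\le(1-1/C)D^\star_\tau\le D^\star_\tau$ by \Cref{cor:deployed_minimax_bounds}. This also lands inside the stated bound, and I would mention it as the sharper route.

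\emph{Finite-sample version and the bracketing of $V^\star_\tau$.} On the event $|\widehat D^\star-D^\star_\tau|\le\eps_N$, which has probability at least $1-\delta$ by \Cref{prop:master_statistic}, substitute $D^\star_\tau\ge\widehat D^\star-\eps_N$ into the lower bound and take positive parts, using nonnegativity of risk. Substitute $D^\star_\tau\le\widehat D^\star+\eps_N$ into the upper bound. The claim in the main text that the same interval brackets $V^\star_\tau$ follows from the sandwich $\tfrac12D^\star_\tau\le V^\star_\tau\le D^\star_\tau$ of \Cref{cor:deployed_minimax_bounds}, on the same event.

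The only delicate step is the $m$-independence. It needs care that the averaging over the common sample law produces a measurable randomized classifier. I would handle this exactly as the paper does in \Cref{thm:coherent_bridge}, treating a randomized learner with a common sample law as a fixed mixture over classifiers. Everything else is direct invocation of earlier results.
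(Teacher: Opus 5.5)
Your proposal is correct and matches the paper's proof. The paper likewise takes $m$-independence from target blindness and gets the lower bound from the two-point floor on the maximal-spread pair, phrased as $\Diam_P(\mathcal F_K)\ge D^\star_\tau$. It gets the upper bound from the data-free learner returning $f_\tau^+$ together with the representation bracket, then substitutes the Hoeffding event. Your explicit averaging argument for $m$-independence and your sharper bound $V^\star_\tau+\zeta\le D^\star_\tau+\zeta$ via the uniform-$\bar A_\tau$ learner are valid refinements, not departures.
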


\begin{proof}
Write $D:=\Diam_P(\mathcal F_K)$. Target-blindness makes the observation law the
same for every target in $\mathcal F_K$, so $V_{\mathrm{blind}}(\mathcal F_K)$ is
independent of $m$; the two-point floor gives
$V_{\mathrm{blind}}(\mathcal F_K)\ge\tfrac12 D$, while the data-free learner
returning $f_\tau^+$ has risk at most $D$. Hence
$\tfrac12 D\le V_{\mathrm{blind}}(\mathcal F_K)\le D$. The maximal-spread pair
lies in $\mathcal F_K$, so $D\ge d_P(f_\tau^+,f_\tau^-)=D^\star_\tau$, and the
representation bracket gives $D\le D^\star_\tau+2\zeta$; combining with the
previous display yields the population interval
$\tfrac12 D^\star_\tau\le V_{\mathrm{blind}}(\mathcal F_K)\le D^\star_\tau+2\zeta$.
The empirical interval follows by substituting
$|D^\star_\tau-\widehat D^\star|\le\eps_N$, which holds with probability at least
$1-\delta$.
\end{proof}

\begin{proof}[Proof of \Cref{prop:plugin_vstar_certificate}]
At each input the plug-in and population admissible sets differ with probability
at most $q_r(\xi)$, by the argument proving \Cref{cor:plugin_dstar}. Both
$\phi(A)=1-1/|A|$ (with $\phi(\varnothing)=0$), of range $[0,1-\tfrac1C]$, and
$\mathbf 1\{|A|\ge2\}$, of range $[0,1]$, change only when the set changes, so
the plug-in means are biased by
\[
    \bigl|\E\phi(\widehat A_\tau(Z))-V^\star_\tau\bigr|\le(1-\tfrac1C)q_r(\xi),
    \qquad
    \bigl|\E\mathbf 1\{|\widehat A_\tau(Z)|\ge2\}-D^\star_\tau\bigr|\le q_r(\xi).
\]
The per-input decoding experiments are independent, so the plug-in summands are
i.i.d.; one-sided Hoeffding bounds at level $\delta/2$ each, combined with these
biases, give
\[
    V^\star_\tau\ge\widehat V^\star_{\mathrm{plug}}-(1-\tfrac1C)(\eps_N+q_r(\xi)),
    \qquad
    D^\star_\tau\ge\widehat D^\star_{\mathrm{plug}}-\eps_N-q_r(\xi),
\]
simultaneously with probability at least $1-\delta$ by a union bound. On this
event $V^\star_\tau\ge\tfrac12 D^\star_\tau$ (\Cref{cor:deployed_minimax_bounds})
dominates the second branch and the first display the first, so
$V^\star_\tau\ge C^{\mathrm{plug}}_{\mathrm{joint}}$ after taking positive parts
and the maximum; the same corollary transfers the bound to
$R_m(\Fcal_{\tau,\zeta})$.
\end{proof}

\section{Additional Experimental Diagnostics}
\label{app:experiment_diagnostics}

This appendix collects diagnostics that support interpretation and
reproducibility but do not enter the certified floors reported in
\Cref{sec:experiments}: the controlled pipeline check, a
threshold-sensitivity example, the model-free
ChaosNLI check and its complete table, the sampled-decoding audit design and
cost--precision trade-off, the margin-mass estimation procedure, the
per-configuration Qwen results, the exposed-probability threshold sweep, the
coherent-reading bridge audit design and results, and the exposed-probability
channel definition and audit metadata.

\subsection{Controlled pipeline check}
\label{appssec:controlled_check}
\label{ssec:controlled_audits}

This probe checks the certificate arithmetic on constructed cases; it does not
validate the model-induced admissibility construct. On $6{,}000$ binary inputs we
fix two pairs of known disagreement: a divisibility control whose two readings
never disagree ($D=0$), and a pedagogy pair with two coherent readings
(immediate performance versus long-term retention) disagreeing by $D=0.4$. On the
fixed-description target-blind channel the observed law is independent of the
operative reading (\Cref{prop:deployed_blindness}), so the certified floor is
$D_L/2$ (\Cref{thm:floor_certificate}; $N=400$, $\delta=0.10$).
\Cref{tab:audit_certificate} recovers the prescribed outcome: a zero certificate
for the control and a conservative positive floor of $0.166$ for the pedagogy
pair, valid at every learner sample size $m$. A threshold sweep on a related
controlled construction follows in \Cref{appssec:threshold_sensitivity}.

\begin{table}[t]
\centering
\small
\begin{tabular}{@{}llrrr@{}}
\toprule
\textbf{Task} & \textbf{Pair} & $D_{\mathrm{pop}}$ & $D_L$ &
\textbf{certified floor } $D_L/2$ \\
\midrule
Divisibility (control) & $6$ vs.\ $2{\wedge}3$
    & $0.000$ & $0.000$ & $0.000$ \\
Pedagogy (ambiguous) & $f_1$ vs.\ $f_2$
    & $0.400$ & $0.332$ & $0.166$ \\
\bottomrule
\end{tabular}
\caption{Controlled arithmetic and pipeline check of the finite-sample
blind-channel floor certificate ($\tau=0.20$, $\delta=0.10$). $D_{\mathrm{pop}}$
is the known population disagreement fixed by construction, not an empirical
estimate; $D_L$ is the audited finite-sample lower bound.}
\label{tab:audit_certificate}
\end{table}

\subsection{Threshold sensitivity}
\label{appssec:threshold_sensitivity}

Because the overlap mass $D^\star_\tau=\Pr_x[|A_\tau(x)|\ge2]$ is a threshold
functional of the decoding law, the certified floor depends on the admissibility
threshold $\tau$, and the two should be reported together. On a controlled
population in which a $0.40$ fraction of inputs are two-way ambiguous by construction
(decoding law $(0.45,0.45,0.10)$) and the rest near-confident
$(0.85,0.10,0.05)$, sweeping $\tau$ at $N=4000$, $\delta=0.10$ yields a stable
plateau: the certified floor is $0.49$ for $\tau\le0.10$ (where even the
near-confident second label clears the threshold and the count over-reads), holds
at $0.19$ across $\tau\in[0.12,0.44]$ (where it tracks exactly the
constructed ambiguous mass of $0.40$), and falls to $0$ for $\tau\ge0.50$ (where no second
label clears). The body's choice $\tau=0.20$ lies in this plateau. The
construction illustrates the threshold dependence established in the body; it
does not imply that empirical decoding laws generally exhibit a comparable
plateau.

\subsection{Model-free arithmetic check on ChaosNLI}
\label{ssec:construct_validity}

As a check against ambiguity documented independently of any model, we use
ChaosNLI~\citep{nie2020chaosnli}, which re-annotates SNLI and MNLI with
${\approx}100$ human labels per item, designed to characterize persistent
human disagreement~\citep{pavlick2019inherent,plank2022problem}. We form a
human-label analogue $q_{\mathrm{human}}(\cdot\mid x)$ from each item's empirical
label distribution and apply the exact-value functional to it in place of
$\pi_{\LLM}$; this exercises the certificate arithmetic on documented human
disagreement rather than a model-induced estimand. Under $q_{\mathrm{human}}$ the
overlap statistic and human entropy agree in ordering: MNLI is more ambiguous
than SNLI. The three-label
distributions also exercise the $k\ge3$ branch of the exact-value certificate,
absent in binary problems; at $\delta=0.10$ it raises the certified floor by
$0.005$ on MNLI and leaves it unchanged on SNLI. This model-free analogue is
conditional on the published label distributions; it neither validates the
model-relative admissibility construct nor constitutes a deployment study.

\subsection{ChaosNLI multiclass check: full table}
\label{appssec:chaosnli_full}

\Cref{tab:chaosnli} gives the complete multiclass calculation, computed from
the public ${\approx}100$-annotation ChaosNLI label
distributions~\citep{nie2020chaosnli} at the prespecified $\tau=0.20$,
$\delta=0.10$. Three labels are admissible on $1.3\%$ of SNLI items and $5.9\%$
of MNLI-matched items, yielding $V^\star_\tau=0.263$ and $0.386$, respectively.
The corresponding half-overlap values are $0.261$ and $0.376$. At confidence
$1-\delta=0.90$, the exact-value branch certifies $0.242$ on SNLI and $0.366$ on
MNLI, while the half-overlap branch certifies $0.245$ and $0.361$. Sampling
uncertainty therefore favors the half-overlap branch on SNLI, whereas the
greater three-way admissibility on MNLI makes the exact-value certificate larger
by $0.005$.

\begin{table}[h]
\centering
\small
\begin{tabular}{@{}lrrrrrrr@{}}
\toprule
\textbf{Split} & $n$ & $\bar H$ & $\Pr(k=3)$ & $\tfrac12D^\star$ &
$V^\star$ & $C_{\rm half}$ & $C_V$ \\
\midrule
SNLI          & $1514$ & $0.798$ & $0.013$ & $0.261$ & $0.263$ & $0.245$ & $0.242$ \\
MNLI-matched  & $1599$ & $1.072$ & $0.059$ & $0.376$ & $0.386$ & $0.361$ & $0.366$ \\
\bottomrule
\end{tabular}
\caption{ChaosNLI multiclass check at $\tau=0.20$ and $\delta=0.10$, computed
from the public ${\approx}100$-annotation label distributions (no model).
$C_{\rm half}=\tfrac12(\widehat D^\star-\eps_N)_+$ and
$C_V=(\widehat V^\star-\tfrac23\eps_N)_+$ are the two branches of
\Cref{prop:vstar_certificate}. MNLI has enough three-way admissibility for the
exact-value branch to improve the certified floor; SNLI does not. The comparison
is conditional on the published human label distributions and is not a model
calibration study.}
\label{tab:chaosnli}
\end{table}

\subsection{Sampled-decoding audit: design and cost}
\label{appssec:sampled_design}

This appendix gives the full design of the sampled-decoding depth-barrier
diagnostic summarized in \Cref{ssec:qwen_audit}. The audit fixes Qwen~2.5--3B,
two prompt paraphrases, $\tau=\csTau$, and $\delta=\csDelta$. Each paraphrase is
evaluated under three decoding seeds, giving $\csConfigs$ configurations per
task. In each, the model receives the description and input but neither candidate
reading; we draw $r=\csR$ labels at each of $N=\csN$ audit inputs and set
\[ \widehat A_\tau(x)=\{y:\widehat\pi(y\mid\fNL,x)\ge\tau\}. \]
The model's sampled decoding law thus determines admissibility: the experiment
scores no researcher-supplied label set.

\emph{Task.} The candidate-positive task is borderline content moderation under
two prespecified coherent readings (a narrow one limited to explicit threats,
slurs, or targeted abuse, and a broader one covering likely escalation of
personal harm), and the control is the exact rule ``LONG iff length exceeds 10.''

\emph{Sampling design.} The $N+N_\kappa=\csN+\csNkappa$ moderation inputs are
programmatically constructed borderline scenarios. The split was fixed
independently of model outputs and subsequent analysis choices: a recorded
design seed partitions the declared finite input population into disjoint audit
and margin subsets before any model call. Because the inputs are drawn without
replacement from this finite population, the Hoeffding radius stays valid, and
is indeed conservative %
\citep[\S6]{hoeffding1963probability,bardenet2015concentration}, and every
estimand is read as a finite-population mean over the synthetic, deliberately
borderline distribution, whose overlap mass plausibly exceeds that of natural
moderation traffic. Exact per-label Clopper--Pearson intervals on the $r$
decodings at each of the disjoint $N_\kappa$ margin inputs give the margin bound
$\kappa_U=\csKappaU$, constructed in \Cref{appssec:margin_estimator}.

\emph{Confidence allocation.} Following \Cref{rem:end_to_end_scope}, the
confidence budget $\delta=\csDelta$ splits as $\delta_1=\delta_2=\delta/2$
between the margin-mass bound and the overlap estimate, and the reported
certificate is valid at their union $\delta$.

\begin{table}[t]
\centering
\small
\begin{tabular}{@{}lrrrrrr@{}}
\toprule
\textbf{Task} & $\widehat D^\star$ & $\widehat V^\star$ &
$\eps_N$ & $\kappa_U$ & $2Ce^{-2r\xi^2}$ & \textbf{certified floor} \\
\midrule
Moderation (representative) & \csDhatPos & \csVhatPos & \csEpsN & \csKappaU & \csDepthTerm & \csFloorPos \\
Length control (matched) & \csDhatCtrl & \csVhatCtrl & \csEpsN & \csKappaU & \csDepthTerm & \csFloorCtrl \\
\bottomrule
\end{tabular}
\caption{Model-relative sampled-decoding audit of \csModel\ ($N=\csN$ audit and
$N_\kappa=\csNkappa$ disjoint margin inputs, $r=\csR$, $\tau=\csTau$,
$\xi=\csXi$, $\delta=\csDelta$), one matched prompt--seed configuration
(paraphrase P1, seed~$0$); all twelve per-configuration cells are in
\Cref{tab:qwen_per_config}. The certified floor is zero in every
configuration because the finite-depth term is capped at $1$ at $r=\csR$.}
\label{tab:qwen_model_relative}
\end{table}

The certified floor is zero in every configuration
(\Cref{tab:qwen_model_relative}) because at $r=\csR$ the finite-depth term
$2Ce^{-2r\xi^2}$ of the radius in \Cref{cor:plugin_dstar} caps at $1$; these
zeros are the vacuous-correction case of \Cref{rem:end_to_end_scope}, not
evidence that the task or judge is benign.
A positive floor at $\xi=\csXi$ requires $r\ge\ln(2C/b)/(2\xi^2)$ decodings
for depth budget $b$, about $r\approx877$ at $b=0.05$, and the margin
estimator is depth-limited in the same way. The audit does yield observed
plug-in overlap values: $\widehat D^\star$ clears
$\eps_N=\csEpsN$ in $\csDetectConfigs$ of the $\csConfigs$ moderation
configurations, while no control configuration does; the positive cells appear in
\Cref{tab:qwen_per_config}. These are prompt-sensitive point estimates, not
certificates; the
per-configuration cells, the constant-output check, and the paraphrase-collapse
pattern appear in \Cref{appssec:per_config}.

At $N=\csN$ the sampling slack $\eps_N=\csEpsN$ prevents the audit from
resolving overlap masses below roughly one eighth of the deployment
distribution, regardless of decoding depth: under the confidence split
(\Cref{rem:end_to_end_scope}), slack $\eps$ requires
$N\ge\log(4/\delta)/(2\eps^2)$, so $\eps=0.05$ needs about $738$ inputs.
\Cref{fig:cost_precision} summarizes the resulting cost--precision trade-off:
for each total call budget $Q$ it plots the smallest certificate radius
achievable by any breadth--depth split, alongside the exposed-probability
radius, which needs one call per input. The shallow audit has a capped, hence
vacuous, radius at $Q=\cpShallowQ$, whereas the exposed-probability audit
reaches radius $\cpExposedRad$ with $Q=\cpExposedQ$ calls; matching it by sampled
decoding at $\xi=0.05$ takes roughly three orders of magnitude more calls.
Exposed admissible-label probabilities remove both plug-in corrections
(\Cref{prop:master_statistic}), giving the more call-efficient access mode for
open-weight judges with logits-exposing backends.

\begin{figure}[t]
\centering
\pgfplotstableread[col sep=comma]{%
Q,exposed,sampled_xi0.05,N_xi0.05,r_xi0.05,sampled_xi0.15,N_xi0.15,r_xi0.15
100,0.12238734153404082,1.0,1,100,1.0,2,50
121,0.11126121957640075,1.0,1,121,1.0,2,60
147,0.10094337797154447,1.0,1,147,1.0,2,73
178,0.09173319081172296,1.0,1,178,1.0,2,89
215,0.08346747414905542,1.0,1,215,0.9479663772040172,3,71
261,0.07575586258251352,1.0,1,261,0.8638624702822847,3,87
316,0.06884825859322519,1.0,1,316,0.7933762575354346,4,79
383,0.0625370101756983,1.0,1,383,0.7346989527460603,4,95
464,0.05681689693688514,1.0,1,464,0.671052867926352,5,92
562,0.05162598411902672,1.0,1,562,0.6153314458451513,6,93
681,0.04689893117795282,1.0,1,681,0.5641727443909349,7,97
825,0.04260980306579049,1.0,1,825,0.5189858238133588,8,103
1000,0.038702275602049495,1.0,2,500,0.4739053944997068,10,100
1212,0.03515484531548869,1.0,2,606,0.4345308287148518,12,101
1468,0.031942836126756836,1.0,2,734,0.4000839532882429,14,104
1778,0.029024892702428917,0.9913759443905887,3,592,0.3665040555312013,17,104
2154,0.026370213852634994,0.8944935973926826,3,718,0.3361096192657634,20,107
2610,0.023956107187146813,0.832604349940518,4,652,0.3079392901459041,23,113
3162,0.021764844489536302,0.7560695649718574,4,790,0.28183089257587984,29,109
3831,0.019773357811404258,0.6941999244276994,5,766,0.25804444491577117,33,116
4642,0.017963209426103105,0.6382943118256958,6,773,0.23636402067913262,40,116
5623,0.016321214011657748,0.5854859418929367,7,803,0.2167026410382027,48,117
6813,0.014827478630133983,0.536933758642271,8,851,0.19860566846357552,54,126
8254,0.01347113749935963,0.49351550439786845,9,917,0.1815969930178675,66,125
10000,0.012238734153404082,0.4519637049490621,11,909,0.16626661802113923,80,125
12115,0.011119232006339855,0.41472524160540586,13,931,0.15234816346867244,93,130
14678,0.010101899872364158,0.3803403780861502,16,917,0.13934123629474648,112,131
17783,0.009177702738940663,0.348869440890226,18,987,0.12782719126636566,133,133
21544,0.008338219643992414,0.31948385293513487,22,979,0.11677209405399819,156,138
26102,0.007575296022584397,0.2928961485412067,26,1003,0.1068242990200207,189,138
31623,0.006882321672525712,0.26830921917967726,31,1020,0.0977639854058669,224,141
38312,0.006252721555659168,0.24589538150144102,37,1035,0.08940578934877129,266,144
46416,0.0056807103452896695,0.22526542111219058,43,1079,0.08178545932670976,320,145
56234,0.005161037480029973,0.20631101192350945,52,1081,0.07482245196698559,385,146
68129,0.004688894854377528,0.18897734278273454,61,1116,0.06842238080914335,454,150
82540,0.004259947717128148,0.17302382651073436,73,1130,0.06256212809839495,543,152
100000,0.0038702275602049492,0.15839870406734496,87,1149,0.05721506043010864,645,155
121153,0.0035161663625869243,0.14499685971800108,103,1176,0.052328305144250284,776,156
146780,0.003194501229163498,0.1327240681834556,123,1193,0.0478262541209449,923,159
177828,0.0029022607547703303,0.1214589168339311,146,1218,0.04372898899849941,1104,161
215443,0.002636758212224531,0.11115418867925762,175,1231,0.03997442845554394,1313,164
261016,0.002395537293409507,0.10171599963199268,206,1267,0.03653329277290331,1572,166
316228,0.0021763880898177717,0.09305401167231524,248,1275,0.03338912927049791,1882,168
383119,0.001977286749578891,0.08513031246609032,294,1303,0.030515519813882685,2240,171
464159,0.0017964002769896162,0.07787421263763857,354,1311,0.027883057909622328,2683,173
562341,0.0016320619015052368,0.07122225722258044,419,1342,0.025479967306588112,3213,175
681292,0.0014827565684936346,0.0651374679193256,502,1357,0.02328027773702403,3849,177
825404,0.0013471104858001409,0.05956804268897503,598,1380,0.021270256971701304,4611,179
1000000,0.0012238734153404082,0.054472308117440874,712,1404,0.019432289530596794,5494,182
1211528,0.001111910351558477,0.049796826488150016,849,1427,0.017751522884199858,6584,184
1467799,0.0010101903313536331,0.04552512028707482,1020,1439,0.01621540919857645,7891,186
1778279,0.0009177756929309138,0.041616657640583817,1213,1466,0.014811604902640673,9408,189
2154435,0.0008338151914199496,0.038043035478630145,1443,1493,0.013527946908496772,11279,191
2610157,0.0007575358420437749,0.03476934191888531,1732,1507,0.012354714414560514,13524,193
3162278,0.0006882345612674784,0.03177732815964549,2079,1521,0.011283049019784878,16134,196
3831187,0.0006252732164030839,0.02904011433128697,2478,1546,0.010303569806823332,19349,198
4641589,0.0005680717076581128,0.026536542603411584,2962,1567,0.009408667652810947,23207,200
5623413,0.0005161031514477912,0.024247478799660296,3550,1584,0.008590942612986152,27838,202
6812921,0.0004688887627898032,0.02215402748121302,4229,1611,0.007843968021212941,33396,204
8254042,0.0004259936878921138,0.020239954832765236,5070,1628,0.007161480369695928,39874,207
10000000,0.0003870227560204949,0.018490206354401555,6079,1645,0.006538070864790368,47846,209
}\costprecisiontable
\begin{tikzpicture}
\begin{loglogaxis}[
    width=0.78\textwidth,
    height=6.2cm,
    xlabel={Total model calls $Q$},
    ylabel={Certificate radius (smaller certifies more)},
    xmin=90, xmax=1.2e7,
    ymin=3e-4, ymax=1.6,
    grid=both,
    grid style={line width=.1pt, draw=gray!10},
    major grid style={line width=.2pt, draw=gray!25},
    legend cell align={left},
    legend columns=2,
    legend style={font=\small,
      at={(0.5,-0.28)}, anchor=north, /tikz/every even column/.append style={column sep=0.4cm}},
    axis lines=left,
]
\addplot[color=blue, thick]
  table [x=Q, y expr=\thisrow{sampled_xi0.05}, col sep=comma]
  {\costprecisiontable};
\addlegendentry{Sampled decoding, $\xi=0.05$ (best $N{\cdot}r$ split)}
\addplot[color=blue!50, thick, dashed]
  table [x=Q, y expr=\thisrow{sampled_xi0.15}, col sep=comma]
  {\costprecisiontable};
\addlegendentry{Sampled decoding, $\xi=0.15$ (margin term omitted)}
\addplot[color=red, thick]
  table [x=Q, y=exposed, col sep=comma]
  {\costprecisiontable};
\addlegendentry{Exposed probabilities}
\addplot[only marks, mark=square*, color=blue, mark size=2.5pt]
  coordinates {(\cpShallowQ, 1.0)};
\addlegendentry{Shallow audit ($N{=}100$, $r{=}3$; vacuous)}
\addplot[only marks, mark=*, color=red, mark size=2.5pt]
  coordinates {(\cpExposedQ, \cpExposedRad)};
\addlegendentry{Exposed audit run ($N{=}100$)}
\end{loglogaxis}
\end{tikzpicture}
\Description{A log-log line plot of certificate radius versus total model
calls. The exposed-probability curve decreases steadily from about 0.12 at
100 calls to below 0.001 at ten million calls. The two sampled-decoding
curves stay at the vacuous value 1 until several hundred calls, then decrease
but remain one to three orders of magnitude above the exposed-probability
curve at every budget. Two markers show the manuscript's actual runs: the
shallow sampled audit at 300 calls with a vacuous radius, and the exposed
audit at 100 calls with radius 0.1224.}
\caption{Cost--precision trade-off of the audit at $\delta=0.10$ for a binary
label set ($C=2$). For each total call budget $Q$ the sampled-decoding curves
plot the smallest radius of \Cref{cor:plugin_dstar} over breadth--depth splits;
the margin term $\kappa(\xi)$ and the disjoint margin-split calls are
task-dependent and omitted, so these curves are \emph{optimistic} for sampled
decoding. The exposed-probability curve plots the correction-free radius of
\Cref{prop:master_statistic}, which needs one call per input. Markers show the
two audits actually run in this section.}
\label{fig:cost_precision}
\end{figure}

\subsection{Margin-mass estimator}
\label{appssec:margin_estimator}

The margin bound $\kappa_U$ reported in \Cref{ssec:qwen_audit} is estimated on
the disjoint $N_\kappa=\csNkappa$ margin inputs, with the confidence budget
split between it and the overlap estimate. The audit implementation bounds the
margin event using exact per-label Clopper--Pearson intervals. An input is
classified as potentially within the margin if any label-probability interval
intersects $[\tau-\xi,\tau+\xi]$. A union bound allocates the noncoverage budget
across labels. Alternatively, the auditor may supply an externally justified
upper bound on $\kappa(\xi)$. At
$r=\csR$, the exact three-draw intervals classify nearly every input as
potentially within the margin, yielding the near-vacuous upper bound
$\kappa_U=\csKappaU$ reported in the body. The partial high-depth cache is
incomplete and therefore cannot support a reproducible margin-mass estimate.
Determining whether the margin or finite-depth term dominates at
certificate-valid depth requires a completed high-depth run.

\subsection{Per-configuration Qwen audit results}
\label{appssec:per_config}

\Cref{tab:qwen_per_config} reports every cell of the two-prompt,
three-seed Qwen~2.5--3B audit of \Cref{ssec:qwen_audit}: all six moderation
configurations and the six matched length-control configurations, with
certificates recomputed under the full radius of \Cref{cor:plugin_dstar}. As
elsewhere, the finite-depth correction is vacuous at $r=\csR$
(\Cref{ssec:qwen_audit}), so the informative columns are the plug-in statistics
rather than the (zero) certified floor. The configurations show two patterns.
First, moderation produces large plug-in
overlap under P1 with seeds $0$ and $2$. P1 with seed $1$ has overlap $0.01$,
while P2 produces constant output under all three seeds (majority fraction
$1.00$). The estimated admissible sets are therefore sensitive to both prompt
paraphrase and shallow-decoding seed. Second, the control has small nonzero
plug-in overlap in five of six configurations, with $\widehat D^\star\le0.08$,
and never approaches the two large moderation signals. These control values are
consistent with finite-decoding noise at $r=\csR$. At $r=\csR$ decodings per input,
per-cell plug-in probabilities are coarse (multiples of $1/\csR$), so the
seed-to-seed variation within P1 may largely reflect Monte Carlo resolution
rather than decoding-law instability.

\begin{table}[t]
\centering
\small

\caption{Per-configuration results for the model-relative Qwen~2.5--3B audit
with $N=\csN$, $N_\kappa=\csNkappa$, $r=\csR$, $\tau=\csTau$, $\xi=\csXi$, and
$\delta=\csDelta$. ``Majority frac.''\ denotes the pooled majority-label
fraction; a value of $1.00$ denotes constant output. Every certified floor is
zero, as explained in the text, and the independently estimated bound $\kappa_U$
enters the same radius.}
\label{tab:qwen_per_config}
\end{table}

\subsection{Exposed-probability threshold sweep}
\label{appssec:exposed_sweep}

The exposed-probability audit of \Cref{ssec:qwen_exposed_audit} plots the
full sweep in \Cref{fig:qwen_exposed} and reports its correction-free
certified floor at the comparison point $\tau=\exTau$; this subsection records
the numerical values. The
probabilities swept here are the declared-label conditional probabilities
obtained by renormalizing the raw next-token log-probabilities over the
prespecified class tokens, so every floor below is relative to that constrained
first-token channel (\Cref{ssec:qwen_exposed_audit}), not an unrestricted
continuation channel. The full prespecified sweep for the moderation prompt P1 is
monotone decreasing in $\tau$. Across $\tau\in\{0.10,0.15,0.20,0.25,0.30,0.40\}$ the P1 floor is
$0.1638$, $0.1138$, $0.0838$, $0.0588$, $0.0388$, and $0$, respectively. At
$N=\exN$ the Hoeffding radius is $\eps_N\approx0.12$, so the positive floors
arise only where $\widehat D^\star$ clears that margin; the floor vanishes once
$\tau$ is large enough that fewer inputs place both class tokens above
threshold.

\subsection{Coherent-reading bridge audit: design and results}
\label{appssec:bridge_audit}

This subsection gives the full design behind the transfer boundary of
\Cref{ssec:qwen_eta_bridge}, which tests the two conditions of
\Cref{thm:coherent_bridge} separately on the moderation task.

\emph{Readings.} Two policy clauses are prespecified. The \emph{narrow}
reading: ``Remove a comment only when it contains an explicit threat, slur, or
targeted abusive statement. Allow it otherwise.'' The \emph{broad} reading:
``Remove a comment when it is likely to escalate personal harm, including
veiled or implicit targeted harassment even without explicit prohibited words.
Allow it otherwise.'' Each clause is inserted into a fixed reading template
(``Apply this single policy reading consistently to every input: \dots''), and
the induced labeling $f_a(x)$ is the argmax of the exposed declared-label law
under that clause. The base admissible sets $A_\tau(x)$ come from the frozen
exposed-probability P1 audit of \Cref{ssec:qwen_exposed_audit} at
$\tau=\exTau$.

\emph{Split and optimization.} A design-seeded permutation of the declared
input population assigns $\etaFitN$ fit and $\etaHoldoutN$ held-out inputs; the
fit inputs are verified (by hash) to coincide with the frozen P1 audit inputs.
The mixture weight is fitted on the fit split alone by a grid search over
$\lambda\in\{0,10^{-4},\dots,1\}$ minimizing the empirical coverage gap
$\E_x[\max_y\mu^\lambda_x(y)-1/k(x)]$, yielding
$\widehat\lambda_{\rm narrow}=\etaLambdaNarrow$. All reported quantities are
then evaluated on the held-out split only.

\emph{Confidence allocation.} The budget $\delta=\exDelta$ funds three
one-sided Hoeffding bounds (one for the coverage gap and one per-reading
admissibility violation) via the shared radius
$\sqrt{\log(3/\delta)/(2\,\etaHoldoutN)}\approx0.1304$.

\emph{Results.} The two conditions of the bridge come apart. The held-out
coverage gap is $\etaHoldoutMean$, giving the clipped upper bound
$\widehat\eta_U=\etaUpper$: the fitted mixture \emph{satisfies} the
$\eta$-uniform-coverage condition (\Cref{def:eta_coverage}), helped by the
large held-out reading disagreement ($\etaHoldoutDisagreement$). The
admissibility condition fails: the held-out violation rates are
$\etaNarrowViolation$ (narrow) and $\etaBroadViolation$ (broad), giving
$\widehat\zeta_U=\etaZetaUpper$ and hence the nearly vacuous gap bound
$|V^\star_\tau-V_{\rm blind}(\mathcal C)|\le\etaGapUpper$. The obstruction is
thus not mixture coverage but that each clause, applied globally, frequently
labels outside the model's admissible set; the clauses were written
independently of the model's admissible geometry and carry no independent
human validation.

\emph{Preregistered external-reading validation (next-step design).} The
protocol this result motivates is fixed as follows. (i)~Elicit a pool of
$K\ge5$ candidate policy readings from annotators or policy documents,
independently of any model output. (ii)~Freeze the clauses, the prompt
template, the split sizes, $\tau$, $\delta$, and the analysis code, and
register hashes before any model call. (iii)~On the fit split, fit the mixture
weights; on the held-out split, report $\widehat\eta_U$ and per-reading
$\widehat\zeta_U$ at radius $\sqrt{\log((K{+}1)/\delta)/(2N_{\rm holdout})}$.
(iv)~The bridge is informative iff both bounds are small; readings with large
violation rates are reported, not discarded post hoc. This design tests
whether \Cref{thm:coherent_bridge} becomes informative once held-out
admissibility error is small, as anticipated in the discussion.

\subsection{Exposed-probability channel and audit metadata}
\label{appssec:exposed_channel}

The exposed-probability audit of \Cref{ssec:qwen_exposed_audit} serves a frozen
Qwen~2.5--3B (\texttt{llama.cpp} b9870) and instantiates the verbalizer $V$ of
\Cref{ssec:basic_objects} as the \emph{constrained} first-token label kernel:
$V(\cdot\mid s)$ places all mass on the label whose class token leads $s$,
conditioned on that token lying in the declared class-token set
$\{v_y\}_{y\in\Ycal}$. Under this $V$ the admissibility channel is exactly the
constrained first-token law
\[
    \pi_{\LLM}(y\mid\fNL,x)
    \;=\;
    \frac{\Pr(T_1=v_y\mid u)}{\sum_{y'\in\Ycal}\Pr(T_1=v_{y'}\mid u)},
    \qquad u=\Pi(\fNL,x),
\]
i.e.\ the raw class-token probabilities renormalized over the declared label
set, not an unrestricted continuation probability. The certificate is stated
relative to this declared-label channel and needs neither the finite-depth nor
the threshold-margin correction.

As a collapse diagnostic, not itself a certified quantity, we report the mean
binary entropy (nats) of the declared-label law: $\exEntropyPone$ for
paraphrase P1 against $\exEntropyPtwo$ for P2. The order-of-magnitude gap shows
that P2's zero certificate arises from the declared-label probability
concentrating on one label (\emph{prompt collapse}), not from small estimated
overlap at comparable spread.

Each model-relative audit fixes its design and split before any model call and
records the seeds, prompts, per-input probability vectors, label-token
identifiers, model hash, and platform and runtime revision. The recorded
vectors are the raw next-token class-token probabilities renormalized over the
declared label set (the constrained first-token channel above), not
full-vocabulary softmax values. The supplementary reproducibility archive
provides the source code, frozen designs, versioned outputs, and SHA-256 hashes
for the exposed-probability and bridge result artifacts.

\section{Further Related Work}
\label{app:related}

\paragraph{Decision-theoretic status of the exact blind-channel value.}
Once the supervision experiment is target-independent,
the reduction to a randomized rule and a least-favorable prior is classical
\citep{wald1950statistical,blackwell1951comparison,blackwell1954games,ferguson1967decision,sion1958minimax}.
Modern adversarial and minimax-risk classifiers solve the analogous robust
$0$--$1$ decision problem for broad uncertainty sets of joint distributions
\citep{fathony2016adversarial,mazuelas2023minimax}. Most notably, the adversarial zero-one loss of
\citet[Theorem~1, Eq.~(4)]{fathony2016adversarial} is the maximum over
nonempty label subsets $S$ of hyperplanes whose constant term is
$(|S|-1)/|S|$. The same expression arises here as the value of the finite
zero--one game on $A_\tau(x)$, namely $1-1/|A_\tau(x)|$. Candidate-label learning is closer
to our pointwise set-valued object: partial-label learning asks when training
candidate sets can be disambiguated~\citep{cour2011partial}, but it is not
formulated as the target-blind, sample-flat selector-class identity used here. The NL-PAC-specific
step is the cyclic finite family whose marginal at each $x$ is uniform on
$A_\tau(x)$; it exposes the least-favorable value as
$\E[1-1/|A_\tau(X)|]$ and makes that value estimable from the same held-out
admissibility audit.

\paragraph{Superset learning and conformal prediction sets.}
The admissible set $A_\tau(x)$ is formally a candidate-label (superset) set, so
the superset/imprecise-label line is a near neighbor of our selection-class
view. Beyond the average-loss formulation of \citet{cour2011partial}, the
generalized-loss and data-disambiguation program of
\citet{huellermeier2014imprecise} learns by minimizing a loss over the set of
observation-consistent labelings, the same set-valued object we analyze. Its aim
is to recover a single sharp model, whereas NL-PAC asks whether the
mediated channel can distinguish the set's members at all. A superficially
similar set, the conformal prediction set, is a threshold set of a score
function and hence formally resembles $A_\tau(x)$
\citep{vovk2005algorithmic,angelopoulos2023conformal}; the two are conceptually
opposite, however. A conformal set is a property of the \emph{predictor} and
drives a coverage guarantee, while $A_\tau(x)$ is a property of the
\emph{supervision channel} and drives an identification floor: the former
certifies that the truth is contained, the latter certifies that the truth
cannot be pinned down.

\paragraph{Relation to non-identifiable noise and partial monitoring.}
The blind-channel floor (\Cref{thm:semantic_floor}) is an LLM-mediated instantiation of information-theoretic noise-floor lower bounds, such as those in the malicious-error~\citep{kearnsli1993malicious} and nasty-noise~\citep{bshouty2002nasty} models. However, while those floors are budgeted (vanishing as the corruption rate $\eta\to0$), the NL-PAC floor is set by the indistinguishability of two admissible readings under the oracle channel (\Cref{eq:unresolved}).

Instance-dependent noise (IDN) similarly makes identifiability depend on
structural assumptions: bounded instance- and label-dependent noise can be
learned under additional conditions~\citep{cheng2020bounded}, whereas the
unrestricted target--noise decomposition is not identified by labels alone. The
systematic component of the NL-PAC oracle error is a structured instance of IDN
where the noise is generated by an LLM interpreting an ambiguous prompt. This
also distinguishes NL-PAC from weak-supervision frameworks like
Snorkel~\citep{ratner2017snorkel}, which aggregate multiple heuristics by
estimating their accuracies, whereas we analyze whether the labeling channel can
distinguish the admissible readings at all.

A supervision-channel separation between direct and mediated labels (a natural follow-up outside this paper's scope) has a direct precedent in partial monitoring, where "hopeless" games with linear regret arise when feedback cannot separate target outcomes~\citep{bartok2011minimax}; NL-PAC transfers this logic to an offline specification setting.

\paragraph{Harm specification and LMaaS opacity.}
The blind-channel floor has an independent information-theoretic counterpart on the alignment side: Young~\citeyearpar{young2025harm} argues that no specification $I$ can pin down an external target $O$ if the conditional entropy $H(O\mid I)>0$, so an information gap forces a residual error. Our ambiguity diameter $\Diam_{\tau,\zeta}(\fNL,P)>0$ is the operational witness of this gap, and the master statistic (\Cref{prop:master_statistic}) estimates it from data. Judge-mediated supervision is increasingly delivered through proprietary, API-only models, the Language-Models-as-a-Service (LMaaS) paradigm, whose opacity is documented as a systematic obstacle to evaluation, reproducibility, reliability, and trustworthiness~\citep{lamalfa2024lmaas}; the operative reading is then often unobservable by construction, which is exactly the reading-blind regime the floor and its audit are designed to certify.

\paragraph{Rating indeterminacy and judge bias.}
Within the LLM-as-a-judge literature (e.g., G-Eval~\citealp{liu2023geval}, Prometheus~\citealp{kim2023prometheus}, and surveys like \citealp{gu2024survey}), NL-PAC treats judge-mediated evaluation as a partially identified target-selection problem rather than a judge-alignment problem. Recent large-scale audits report that judge reliability and validity can come apart: high test--retest consistency can coexist with severe positional bias~\citep{norman2026reliability}. Instrument-centric diagnostics likewise treat judge reliability as a property of the measurement instrument, not of individual outputs~\citep{choi2026irtjudge}. This parallels NL-PAC's treatment of the floor as a property of the tuple $(\LLM,\Pi,\tau,P)$, not of any single judgment.

Guerdan et al.~\citeyearpar{guerdan2025rating} document \emph{rating indeterminacy}, in which rating criteria admit multiple valid interpretations, and show that forced-choice judge validation is systematically biased relative to multi-label response sets; their contribution is a theoretical and empirical validation framework for judge systems, whereas NL-PAC proves and certifies the learnability floor that such indeterminacy imposes. Dorner et al.~\citeyearpar{dorner2025limits} prove that, when the judge is no more accurate than the evaluated model, judge-based debiasing cannot reduce the required ground-truth sample size by more than half, and Feuer et al.~\citeyearpar{feuer2026biasbounded} study bias-bounded guarantees for measurable judge bias. Both quantify \emph{exogenous} judge error; NL-PAC measures the endogenous ambiguity limit generated by the model's own interpretation.

\paragraph{Learning from disagreement.}
The JAIR survey of \citet{uma2021disagreement} organizes methods that retain
multiple human judgments rather than reducing them to one gold label, and
\citet{baan2022calibration} show why majority-label calibration is problematic
when human disagreement is inherent. Perspectivist NLP develops the same
contention into annotator- and distribution-aware modeling and
evaluation~\citep{xu2026perspectivist}. These approaches learn from or evaluate against
human judgment variation. NL-PAC instead asks what can be guaranteed when the
supervision channel does not reveal which retained reading is operative. Its
validated reading family is therefore conceptually distinct from the
model-admissible selector set: agreement between them is an empirical bridge
condition, not a definition.

\paragraph{Prompt underspecification.}
Systems work on prompt sensitivity shows that prompts omit requirements that models fill with fragile defaults: underspecified prompts regress markedly across model or prompt changes~\citep{yang2025underspec}, and even subjective evaluations can vary across prompt formulations. This continues the broader observation that underspecified pipelines admit many equally predictive solutions that diverge under deployment shift~\citep{damour2020underspecification}. Active task disambiguation~\citep{kobalczyk2025active} and elicitation~\citep{li2023eliciting} clarify intent by asking questions or generating edge cases. NL-PAC supplies the complementary accounting layer by pricing the remaining ambiguity via $D^\star_\tau$.

\paragraph{Partial concept classes.}
Partial-concept PAC learning allows a concept to be undefined away from the
support of its source distribution and studies which such classes remain
learnable~\citep{alon2022partial}. NL-PAC has a different partiality: each
admissible reading is a total target on the deployment domain, but the
natural-language specification identifies a set of such targets and the mediated
channel may fail to distinguish them. Thus the blind-channel floor is a
channel-induced identification obstruction, not a consequence of undefined
off-support labels.

\paragraph{Privileged information, semantic uncertainty, and partial-identification inference.}
Vapnik's learning-using-privileged-information (LUPI) program treats teacher comments as training-only signals that accelerate learning~\citep{vapnik2009new}. While it asks how much trusted explanations help, our question is dual: how much error remains when the explanation channel is systematically ambiguous. The semantic-uncertainty literature (e.g., \citealp{kuhn2023semantic}) measures uncertainty over meanings by clustering answers; such concentration signals yield only one-sided diameter diagnostics, and the sharp handle for floors remains the master statistic $D^\star_\tau$. Finally, the certificate in \Cref{thm:floor_certificate} draws on classical inference for partially identified parameters~\citep{imbens2004confidence} to certify the Le Cam floor from finite data; simultaneous-inference corrections over searched families in the sense of \citet{berk2013valid} are deliberately unnecessary here, because the canonical candidate pair is a deterministic functional of the frozen model rather than a data-selected object.


\begin{thebibliography}{64}
\providecommand{\natexlab}[1]{#1}
\providecommand{\url}[1]{\texttt{#1}}
\expandafter\ifx\csname urlstyle\endcsname\relax
  \providecommand{\doi}[1]{doi: #1}\else
  \providecommand{\doi}{doi: \begingroup \urlstyle{rm}\Url}\fi

\bibitem[Alon et~al.(2022)Alon, Hanneke, Holzman, and Moran]{alon2022partial}
Noga Alon, Steve Hanneke, Ron Holzman, and Shay Moran.
\newblock A theory of {PAC} learnability of partial concept classes.
\newblock In \emph{2021 {IEEE} 62nd Annual Symposium on Foundations of Computer
  Science ({FOCS})}, pages 658--671. IEEE, 2022.
\newblock \doi{10.1109/FOCS52979.2021.00070}.

\bibitem[Angelopoulos and Bates(2023)]{angelopoulos2023conformal}
Anastasios~N. Angelopoulos and Stephen Bates.
\newblock Conformal prediction: A gentle introduction.
\newblock \emph{Foundations and Trends in Machine Learning}, 16\penalty0
  (4):\penalty0 494--591, 2023.
\newblock \doi{10.1561/2200000101}.

\bibitem[Audibert and Tsybakov(2007)]{audibert2007fast}
Jean-Yves Audibert and Alexandre~B. Tsybakov.
\newblock Fast learning rates for plug-in classifiers.
\newblock \emph{The Annals of Statistics}, 35\penalty0 (2):\penalty0 608--633,
  2007.
\newblock \doi{10.1214/009053606000001217}.

\bibitem[Baan et~al.(2022)Baan, Aziz, Plank, and
  Fern{\'a}ndez]{baan2022calibration}
Joris Baan, Wilker Aziz, Barbara Plank, and Raquel Fern{\'a}ndez.
\newblock Stop measuring calibration when humans disagree.
\newblock In \emph{Proceedings of the 2022 Conference on Empirical Methods in
  Natural Language Processing}, pages 1892--1915, 2022.
\newblock \doi{10.18653/v1/2022.emnlp-main.124}.

\bibitem[Bardenet and Maillard(2015)]{bardenet2015concentration}
R{\'e}mi Bardenet and Odalric-Ambrym Maillard.
\newblock Concentration inequalities for sampling without replacement.
\newblock \emph{Bernoulli}, 21\penalty0 (3):\penalty0 1361--1385, 2015.
\newblock \doi{10.3150/14-BEJ605}.

\bibitem[Bart{\'o}k et~al.(2011)Bart{\'o}k, P{\'a}l, and
  Szepesv{\'a}ri]{bartok2011minimax}
G{\'a}bor Bart{\'o}k, D{\'a}vid P{\'a}l, and Csaba Szepesv{\'a}ri.
\newblock Minimax regret of finite partial-monitoring games in stochastic
  environments.
\newblock In Sham~M. Kakade and Ulrike von Luxburg, editors, \emph{Proceedings
  of the 24th Annual Conference on Learning Theory}, volume~19 of
  \emph{Proceedings of Machine Learning Research}, pages 133--154, Budapest,
  Hungary, 2011. PMLR.
\newblock URL \url{https://proceedings.mlr.press/v19/bartok11a.html}.

\bibitem[Berk et~al.(2013)Berk, Brown, Buja, Zhang, and Zhao]{berk2013valid}
Richard Berk, Lawrence Brown, Andreas Buja, Kai Zhang, and Linda Zhao.
\newblock Valid post-selection inference.
\newblock \emph{The Annals of Statistics}, 41\penalty0 (2):\penalty0 802--837,
  2013.
\newblock \doi{10.1214/12-aos1077}.

\bibitem[Blackwell(1951)]{blackwell1951comparison}
David Blackwell.
\newblock Comparison of experiments.
\newblock In Jerzy Neyman, editor, \emph{Proceedings of the Second Berkeley
  Symposium on Mathematical Statistics and Probability}, pages 93--102,
  Berkeley, 1951. University of California Press.
\newblock \doi{10.1525/9780520411586-009}.

\bibitem[Blackwell and Girshick(1954)]{blackwell1954games}
David Blackwell and M.~A. Girshick.
\newblock \emph{Theory of Games and Statistical Decisions}.
\newblock Wiley, New York, 1954.

\bibitem[Bshouty et~al.(2002)Bshouty, Eiron, and Kushilevitz]{bshouty2002nasty}
Nader~H. Bshouty, Nadav Eiron, and Eyal Kushilevitz.
\newblock {PAC} learning with nasty noise.
\newblock \emph{Theoretical Computer Science}, 288\penalty0 (2):\penalty0
  255--275, 2002.
\newblock \doi{10.1016/S0304-3975(01)00403-0}.

\bibitem[Cheng et~al.(2020)Cheng, Liu, Ramamohanarao, and
  Tao]{cheng2020bounded}
Jiacheng Cheng, Tongliang Liu, Kotagiri Ramamohanarao, and Dacheng Tao.
\newblock Learning with bounded instance and label-dependent label noise.
\newblock In Hal Daum{\'e}~III and Aarti Singh, editors, \emph{Proceedings of
  the 37th International Conference on Machine Learning}, volume 119 of
  \emph{Proceedings of Machine Learning Research}, pages 1789--1799. PMLR,
  2020.
\newblock URL \url{https://proceedings.mlr.press/v119/cheng20c.html}.

\bibitem[Choi et~al.(2026)Choi, Park, Cho, Park, and Kim]{choi2026irtjudge}
Junhyuk Choi, Sohhyung Park, Chanhee Cho, Hyeonchu Park, and Bugeun Kim.
\newblock Diagnosing the reliability of {LLM}-as-a-judge via item response
  theory, 2026.

\bibitem[Cour et~al.(2011)Cour, Sapp, and Taskar]{cour2011partial}
Timothee Cour, Ben Sapp, and Ben Taskar.
\newblock Learning from partial labels.
\newblock \emph{Journal of Machine Learning Research}, 12\penalty0
  (42):\penalty0 1501--1536, 2011.
\newblock URL \url{http://jmlr.org/papers/v12/cour11a.html}.

\bibitem[D'Amour et~al.(2022)D'Amour, Heller, Moldovan, Adlam, Alipanahi,
  Beutel, et~al.]{damour2020underspecification}
Alexander D'Amour, Katherine Heller, Dan Moldovan, Ben Adlam, Babak Alipanahi,
  Alex Beutel, et~al.
\newblock Underspecification presents challenges for credibility in modern
  machine learning.
\newblock \emph{Journal of Machine Learning Research}, 23\penalty0
  (226):\penalty0 1--61, 2022.
\newblock URL \url{https://jmlr.org/papers/v23/20-1335.html}.

\bibitem[Dawid and Skene(1979)]{dawid1979maximum}
A.~P. Dawid and A.~M. Skene.
\newblock Maximum likelihood estimation of observer error-rates using the {EM}
  algorithm.
\newblock \emph{Journal of the Royal Statistical Society. Series C (Applied
  Statistics)}, 28\penalty0 (1):\penalty0 20--28, 1979.
\newblock \doi{10.2307/2346806}.

\bibitem[Dorner et~al.(2025)Dorner, Nastl, and Hardt]{dorner2025limits}
Florian~Eddie Dorner, Vivian Nastl, and Moritz Hardt.
\newblock Limits to scalable evaluation at the frontier: {LLM} as judge won't
  beat twice the data.
\newblock In \emph{International Conference on Learning Representations}, pages
  26467--26491, 2025.
\newblock URL
  \url{https://proceedings.iclr.cc/paper_files/paper/2025/hash/4264ee4376776907c0b87ed70b959585-Abstract-Conference.html}.

\bibitem[Fathony et~al.(2016)Fathony, Liu, Asif, and
  Ziebart]{fathony2016adversarial}
Rizal Fathony, Anqi Liu, Kaiser Asif, and Brian Ziebart.
\newblock Adversarial multiclass classification: A risk minimization
  perspective.
\newblock In D.~Lee, M.~Sugiyama, U.~von Luxburg, I.~Guyon, and R.~Garnett,
  editors, \emph{Advances in Neural Information Processing Systems}, volume~29.
  Curran Associates, Inc., 2016.
\newblock URL
  \url{https://proceedings.neurips.cc/paper_files/paper/2016/file/ad13a2a07ca4b7642959dc0c4c740ab6-Paper.pdf}.

\bibitem[Ferguson(1967)]{ferguson1967decision}
Thomas~S. Ferguson.
\newblock \emph{Mathematical Statistics: A Decision Theoretic Approach}.
\newblock Academic Press, New York, 1967.

\bibitem[Feuer et~al.(2026)Feuer, Rosenblatt, and
  Elachqar]{feuer2026biasbounded}
Benjamin Feuer, Lucas Rosenblatt, and Oussama Elachqar.
\newblock Towards provably unbiased llm judges via bias-bounded evaluation,
  2026.

\bibitem[Gilboa and Schmeidler(1989)]{gilboa1989maxmin}
Itzhak Gilboa and David Schmeidler.
\newblock Maxmin expected utility with non-unique prior.
\newblock \emph{Journal of Mathematical Economics}, 18\penalty0 (2):\penalty0
  141--153, 1989.
\newblock \doi{10.1016/0304-4068(89)90018-9}.

\bibitem[Gu et~al.(2026)Gu, Jiang, Shi, Tan, Zhai, Xu, Li, Shen, Ma, Liu, Wang,
  Zhang, Lin, Zhang, Ni, Gao, Wang, and Guo]{gu2024survey}
Jiawei Gu, Xuhui Jiang, Zhichao Shi, Hexiang Tan, Xuehao Zhai, Chengjin Xu, Wei
  Li, Yinghan Shen, Shengjie Ma, Honghao Liu, Saizhuo Wang, Kun Zhang, Zhouchi
  Lin, Bowen Zhang, Lionel Ni, Wen Gao, Yuanzhuo Wang, and Jian Guo.
\newblock A survey on {LLM}-as-a-judge.
\newblock \emph{The Innovation}, 7\penalty0 (6):\penalty0 101253, 2026.
\newblock \doi{10.1016/j.xinn.2025.101253}.

\bibitem[Guerdan et~al.(2025)Guerdan, Barocas, Holstein, Wallach, Wu, and
  Chouldechova]{guerdan2025rating}
Luke Guerdan, Solon Barocas, Kenneth Holstein, Hanna Wallach, Steven Wu, and
  Alexandra Chouldechova.
\newblock Validating {LLM}-as-a-judge systems under rating indeterminacy.
\newblock In D.~Belgrave, C.~Zhang, H.~Lin, R.~Pascanu, P.~Koniusz,
  M.~Ghassemi, and N.~Chen, editors, \emph{Advances in Neural Information
  Processing Systems}, volume~38, pages 112282--112350. Curran Associates,
  Inc., 2025.
\newblock URL
  \url{https://proceedings.neurips.cc/paper_files/paper/2025/file/a309239c11a28c597d050bd4a1752d32-Paper-Conference.pdf}.

\bibitem[Hanneke(2014)]{hanneke2014theory}
Steve Hanneke.
\newblock Theory of disagreement-based active learning.
\newblock \emph{Foundations and Trends in Machine Learning}, 7\penalty0
  (2--3):\penalty0 131--309, 2014.
\newblock \doi{10.1561/2200000037}.

\bibitem[Hoeffding(1963)]{hoeffding1963probability}
Wassily Hoeffding.
\newblock Probability inequalities for sums of bounded random variables.
\newblock \emph{Journal of the American Statistical Association}, 58\penalty0
  (301):\penalty0 13--30, 1963.
\newblock \doi{10.1080/01621459.1963.10500830}.

\bibitem[Howard et~al.(2021)Howard, Ramdas, McAuliffe, and
  Sekhon]{howard2021time}
Steven~R. Howard, Aaditya Ramdas, Jon McAuliffe, and Jasjeet Sekhon.
\newblock Time-uniform, nonparametric, nonasymptotic confidence sequences.
\newblock \emph{The Annals of Statistics}, 49\penalty0 (2):\penalty0
  1055--1080, 2021.
\newblock \doi{10.1214/20-aos1991}.

\bibitem[H{\"u}llermeier(2014)]{huellermeier2014imprecise}
Eyke H{\"u}llermeier.
\newblock Learning from imprecise and fuzzy observations: Data disambiguation
  through generalized loss minimization.
\newblock \emph{International Journal of Approximate Reasoning}, 55\penalty0
  (7):\penalty0 1519--1534, 2014.
\newblock \doi{10.1016/j.ijar.2013.09.003}.

\bibitem[Imbens and Manski(2004)]{imbens2004confidence}
Guido~W. Imbens and Charles~F. Manski.
\newblock Confidence intervals for partially identified parameters.
\newblock \emph{Econometrica}, 72\penalty0 (6):\penalty0 1845--1857, 2004.
\newblock \doi{10.1111/j.1468-0262.2004.00555.x}.

\bibitem[Kearns and Li(1993)]{kearnsli1993malicious}
Michael Kearns and Ming Li.
\newblock Learning in the presence of malicious errors.
\newblock \emph{SIAM Journal on Computing}, 22\penalty0 (4):\penalty0 807--837,
  1993.
\newblock \doi{10.1137/0222052}.

\bibitem[Kearns and Vazirani(1994)]{kearns1994introduction}
Michael~J. Kearns and Umesh~V. Vazirani.
\newblock \emph{An Introduction to Computational Learning Theory}.
\newblock MIT Press, 1994.

\bibitem[Kim et~al.(2024)Kim, Shin, cho, Jang, Longpre, Lee, Yun, Shin, Kim,
  Thorne, and Seo]{kim2023prometheus}
Seungone Kim, Jay Shin, yejin cho, Joel Jang, Shayne Longpre, Hwaran Lee,
  Sangdoo Yun, S~Shin, Ryan, Sungdong Kim, James Thorne, and Minjoon Seo.
\newblock Prometheus: Inducing fine-grained evaluation capability in language
  models.
\newblock In B.~Kim, Y.~Yue, S.~Chaudhuri, K.~Fragkiadaki, M.~Khan, and Y.~Sun,
  editors, \emph{International Conference on Learning Representations}, volume
  2024, pages 29927--29962, 2024.
\newblock URL
  \url{https://proceedings.iclr.cc/paper_files/paper/2024/file/803485352e61e3ebf41221e4776c9fd4-Paper-Conference.pdf}.

\bibitem[Kobalczyk et~al.(2025)Kobalczyk, Astorga, Liu, and van~der
  Schaar]{kobalczyk2025active}
Katarzyna Kobalczyk, Nicol{\'a}s Astorga, Tennison Liu, and Mihaela van~der
  Schaar.
\newblock Active task disambiguation with {LLM}s.
\newblock In Y.~Yue, A.~Garg, N.~Peng, F.~Sha, and R.~Yu, editors,
  \emph{International Conference on Learning Representations}, volume 2025,
  pages 37823--37847, 2025.
\newblock URL
  \url{https://proceedings.iclr.cc/paper_files/paper/2025/file/5e07476b6bd2497e1fbd11b8f0b2de3c-Paper-Conference.pdf}.

\bibitem[Kuhn et~al.(2023)Kuhn, Gal, and Farquhar]{kuhn2023semantic}
Lorenz Kuhn, Yarin Gal, and Sebastian Farquhar.
\newblock Semantic uncertainty: Linguistic invariances for uncertainty
  estimation in natural language generation.
\newblock In \emph{International Conference on Learning Representations}, 2023.
\newblock Spotlight.

\bibitem[La~Malfa et~al.(2024)La~Malfa, Petrov, Frieder, Weinhuber, Burnell,
  Nazar, Cohn, Shadbolt, and Wooldridge]{lamalfa2024lmaas}
Emanuele La~Malfa, Aleksandar Petrov, Simon Frieder, Christoph Weinhuber, Ryan
  Burnell, Raza Nazar, Anthony Cohn, Nigel Shadbolt, and Michael Wooldridge.
\newblock Language-models-as-a-service: Overview of a new paradigm and its
  challenges.
\newblock \emph{Journal of Artificial Intelligence Research}, 80:\penalty0
  1497--1523, 2024.
\newblock \doi{10.1613/jair.1.15865}.

\bibitem[Li et~al.(2025)Li, Tamkin, Goodman, and Andreas]{li2023eliciting}
Belinda Li, Alex Tamkin, Noah Goodman, and Jacob Andreas.
\newblock Eliciting human preferences with language models.
\newblock In Y.~Yue, A.~Garg, N.~Peng, F.~Sha, and R.~Yu, editors,
  \emph{International Conference on Learning Representations}, volume 2025,
  pages 80984--81013, 2025.
\newblock URL
  \url{https://proceedings.iclr.cc/paper_files/paper/2025/file/c9867d5a22653ce98b02595061e40f12-Paper-Conference.pdf}.

\bibitem[Liu et~al.(2023{\natexlab{a}})Liu, Wu, Michael, Suhr, West, Koller,
  Swayamdipta, Smith, and Choi]{liu2023ambiguity}
Alisa Liu, Zhaofeng Wu, Julian Michael, Alane Suhr, Peter West, Alexander
  Koller, Swabha Swayamdipta, Noah~A. Smith, and Yejin Choi.
\newblock We're afraid language models aren't modeling ambiguity.
\newblock In \emph{Conference on Empirical Methods in Natural Language
  Processing (EMNLP)}, 2023{\natexlab{a}}.
\newblock \doi{10.18653/v1/2023.emnlp-main.51}.

\bibitem[Liu and Dietterich(2014)]{liu2014learnability}
Liping Liu and Thomas Dietterich.
\newblock Learnability of the superset label learning problem.
\newblock In Eric~P. Xing and Tony Jebara, editors, \emph{Proceedings of the
  31st International Conference on Machine Learning}, volume~32 of
  \emph{Proceedings of Machine Learning Research}, pages 1629--1637, Bejing,
  China, 2014. PMLR.
\newblock URL \url{https://proceedings.mlr.press/v32/liug14.html}.

\bibitem[Liu et~al.(2023{\natexlab{b}})Liu, Iter, Xu, Wang, Xu, and
  Zhu]{liu2023geval}
Yang Liu, Dan Iter, Yichong Xu, Shuohang Wang, Ruochen Xu, and Chenguang Zhu.
\newblock {G-Eval}: {NLG} evaluation using {GPT}-4 with better human alignment.
\newblock In \emph{Proceedings of the 2023 Conference on Empirical Methods in
  Natural Language Processing (EMNLP)}, pages 2511--2522, 2023{\natexlab{b}}.
\newblock \doi{10.18653/v1/2023.emnlp-main.153}.

\bibitem[Mammen and Tsybakov(1999)]{mammen1999smooth}
Enno Mammen and Alexandre~B. Tsybakov.
\newblock Smooth discrimination analysis.
\newblock \emph{The Annals of Statistics}, 27\penalty0 (6):\penalty0
  1808--1829, 1999.
\newblock \doi{10.1214/aos/1017939240}.

\bibitem[Manski(2003)]{manski2003partial}
Charles~F. Manski.
\newblock \emph{Partial Identification of Probability Distributions}.
\newblock Springer Series in Statistics. Springer, New York, 2003.
\newblock \doi{10.1007/b97478}.

\bibitem[Maurer and Pontil(2009)]{maurer2009empirical}
Andreas Maurer and Massimiliano Pontil.
\newblock Empirical {Bernstein} bounds and sample-variance penalization.
\newblock In \emph{Proceedings of the 22nd Annual Conference on Learning Theory
  (COLT)}, 2009.
\newblock URL \url{https://www.cs.mcgill.ca/~colt2009/papers/012.pdf}.

\bibitem[Mazuelas et~al.(2023)Mazuelas, Romero, and
  Gr{\"u}nwald]{mazuelas2023minimax}
Santiago Mazuelas, Mauricio Romero, and Peter Gr{\"u}nwald.
\newblock Minimax risk classifiers with 0--1 loss.
\newblock \emph{Journal of Machine Learning Research}, 24\penalty0
  (208):\penalty0 1--48, 2023.
\newblock URL \url{https://www.jmlr.org/papers/volume24/22-0339/22-0339.pdf}.

\bibitem[Mitchell(1982)]{mitchell1982generalization}
Tom~M. Mitchell.
\newblock Generalization as search.
\newblock \emph{Artificial Intelligence}, 18\penalty0 (2):\penalty0 203--226,
  1982.
\newblock \doi{10.1016/0004-3702(82)90040-6}.

\bibitem[Molchanov and Molinari(2018)]{molchanov2018random}
Ilya Molchanov and Francesca Molinari.
\newblock \emph{Random Sets in Econometrics}.
\newblock Cambridge University Press, 2018.
\newblock \doi{10.1017/9781316392973}.

\bibitem[Natarajan et~al.(2013)Natarajan, Dhillon, Ravikumar, and
  Tewari]{natarajan2013learning}
Nagarajan Natarajan, Inderjit~S Dhillon, Pradeep~K Ravikumar, and Ambuj Tewari.
\newblock Learning with noisy labels.
\newblock In C.~J. Burges, L.~Bottou, M.~Welling, Z.~Ghahramani, and
  K.~Weinberger, editors, \emph{Advances in Neural Information Processing
  Systems}, volume~26. Curran Associates, Inc., 2013.
\newblock URL
  \url{https://proceedings.neurips.cc/paper_files/paper/2013/file/3871bd64012152bfb53fdf04b401193f-Paper.pdf}.

\bibitem[Nie et~al.(2020)Nie, Zhou, and Bansal]{nie2020chaosnli}
Yixin Nie, Xiang Zhou, and Mohit Bansal.
\newblock What can we learn from collective human opinions on natural language
  inference data?
\newblock In \emph{Proceedings of the 2020 Conference on Empirical Methods in
  Natural Language Processing (EMNLP)}, pages 9131--9143, 2020.
\newblock \doi{10.18653/v1/2020.emnlp-main.734}.

\bibitem[Norman et~al.(2026)Norman, Rivera, and Hughes]{norman2026reliability}
Justin~D. Norman, Michael~U. Rivera, and D.~Alex Hughes.
\newblock Reliability without validity: A systematic, large-scale evaluation of
  {LLM}-as-a-judge models across agreement, consistency, and bias, 2026.

\bibitem[Pan et~al.(2022)Pan, Bhatia, and Steinhardt]{pan2022reward}
Alexander Pan, Kush Bhatia, and Jacob Steinhardt.
\newblock The effects of reward misspecification: Mapping and mitigating
  misaligned models.
\newblock In \emph{International Conference on Learning Representations
  (ICLR)}, 2022.

\bibitem[Pavlick and Kwiatkowski(2019)]{pavlick2019inherent}
Ellie Pavlick and Tom Kwiatkowski.
\newblock Inherent disagreements in human textual inferences.
\newblock \emph{Transactions of the Association for Computational Linguistics},
  7:\penalty0 677--694, 2019.
\newblock \doi{10.1162/tacl_a_00293}.

\bibitem[Plank(2022)]{plank2022problem}
Barbara Plank.
\newblock The ``problem'' of human label variation: On ground truth in data,
  modeling and evaluation.
\newblock In \emph{Conference on Empirical Methods in Natural Language
  Processing (EMNLP)}, 2022.
\newblock \doi{10.18653/v1/2022.emnlp-main.731}.

\bibitem[Ratner et~al.(2017)Ratner, Bach, Ehrenberg, Fries, Wu, and
  R{\'e}]{ratner2017snorkel}
Alexander Ratner, Stephen~H. Bach, Henry Ehrenberg, Jason Fries, Sen Wu, and
  Christopher R{\'e}.
\newblock Snorkel: Rapid training data creation with weak supervision.
\newblock \emph{Proceedings of the VLDB Endowment}, 11\penalty0 (3):\penalty0
  269--282, 2017.
\newblock \doi{10.14778/3157794.3157797}.

\bibitem[Singleton and Booth(2024)]{singleton2024truth}
Joseph Singleton and Richard Booth.
\newblock Truth-tracking with non-expert information sources.
\newblock \emph{Journal of Artificial Intelligence Research}, 81:\penalty0
  619--641, 2024.
\newblock \doi{10.1613/jair.1.15273}.

\bibitem[Sion(1958)]{sion1958minimax}
Maurice Sion.
\newblock On general minimax theorems.
\newblock \emph{Pacific Journal of Mathematics}, 8\penalty0 (1):\penalty0
  171--176, 1958.
\newblock \doi{10.2140/pjm.1958.8.171}.

\bibitem[Stoye(2009{\natexlab{a}})]{stoye2009minimax}
J{\"o}rg Stoye.
\newblock Minimax regret treatment choice with finite samples.
\newblock \emph{Journal of Econometrics}, 151\penalty0 (1):\penalty0 70--81,
  2009{\natexlab{a}}.
\newblock \doi{10.1016/j.jeconom.2009.02.013}.

\bibitem[Stoye(2009{\natexlab{b}})]{stoye2009more}
J{\"o}rg Stoye.
\newblock More on confidence intervals for partially identified parameters.
\newblock \emph{Econometrica}, 77\penalty0 (4):\penalty0 1299--1315,
  2009{\natexlab{b}}.
\newblock \doi{10.3982/ECTA7347}.

\bibitem[Tsybakov(2009)]{tsybakov2009introduction}
Alexandre~B. Tsybakov.
\newblock \emph{Introduction to Nonparametric Estimation}.
\newblock Springer Series in Statistics. Springer New York, New York, NY, 1
  edition, 2009.
\newblock ISBN 978-0-387-79052-7.
\newblock \doi{10.1007/b13794}.
\newblock URL \url{https://link.springer.com/book/10.1007/b13794}.

\bibitem[Uma et~al.(2021)Uma, Fornaciari, Hovy, Paun, Plank, and
  Poesio]{uma2021disagreement}
Alexandra~N. Uma, Tommaso Fornaciari, Dirk Hovy, Silviu Paun, Barbara Plank,
  and Massimo Poesio.
\newblock Learning from disagreement: A survey.
\newblock \emph{Journal of Artificial Intelligence Research}, 72:\penalty0
  1385--1470, 2021.
\newblock \doi{10.1613/jair.1.12752}.

\bibitem[Valiant(1984)]{valiant1984theory}
Leslie~G. Valiant.
\newblock A theory of the learnable.
\newblock \emph{Communications of the {ACM}}, 27\penalty0 (11):\penalty0
  1134--1142, 1984.
\newblock \doi{10.1145/1968.1972}.

\bibitem[Vapnik and Vashist(2009)]{vapnik2009new}
Vladimir Vapnik and Akshay Vashist.
\newblock A new learning paradigm: Learning using privileged information.
\newblock \emph{Neural Networks}, 22\penalty0 (5--6):\penalty0 544--557, 2009.
\newblock \doi{10.1016/j.neunet.2009.06.042}.

\bibitem[Vovk et~al.(2005)Vovk, Gammerman, and Shafer]{vovk2005algorithmic}
Vladimir Vovk, Alexander Gammerman, and Glenn Shafer.
\newblock \emph{Algorithmic Learning in a Random World}.
\newblock Springer New York, New York, NY, 1 edition, 2005.
\newblock ISBN 978-0-387-25061-8.
\newblock \doi{10.1007/b106715}.
\newblock URL \url{https://link.springer.com/book/10.1007/b106715}.

\bibitem[Wald(1950)]{wald1950statistical}
Abraham Wald.
\newblock \emph{Statistical Decision Functions}.
\newblock Wiley, New York, 1950.

\bibitem[Waudby-Smith and Ramdas(2024)]{waudbysmith2024estimating}
Ian Waudby-Smith and Aaditya Ramdas.
\newblock Estimating means of bounded random variables by betting.
\newblock \emph{Journal of the Royal Statistical Society Series B: Statistical
  Methodology}, 86\penalty0 (1):\penalty0 1--27, 2024.
\newblock \doi{10.1093/jrsssb/qkad009}.

\bibitem[Xu and Jurgens(2026)]{xu2026perspectivist}
Yinuo Xu and David Jurgens.
\newblock Beyond consensus: Perspectivist modeling and evaluation of annotator
  disagreement in nlp, 2026.

\bibitem[Yang et~al.(2026)Yang, Shi, Ma, Liu, K{\"a}stner, and
  Wu]{yang2025underspec}
Chenyang Yang, Yike Shi, Qianou Ma, Michael~Xieyang Liu, Christian K{\"a}stner,
  and Tongshuang Wu.
\newblock What prompts don't say: Understanding and managing underspecification
  in {LLM} prompts.
\newblock In \emph{Findings of the Association for Computational Linguistics:
  ACL 2026}, pages 9072--9101. Association for Computational Linguistics, 2026.
\newblock \doi{10.18653/v1/2026.findings-acl.441}.

\bibitem[Young(2025)]{young2025harm}
Robin Young.
\newblock Information-theoretic distinctions between deception and confusion.
\newblock In Kentaro Inui, Sakriani Sakti, Haofen Wang, Derek~F. Wong, Pushpak
  Bhattacharyya, Biplab Banerjee, Asif Ekbal, Tanmoy Chakraborty, and
  Dhirendra~Pratap Singh, editors, \emph{Proceedings of the 14th International
  Joint Conference on Natural Language Processing and the 4th Conference of the
  Asia-Pacific Chapter of the Association for Computational Linguistics}, pages
  258--268, Mumbai, India, December 2025. The Asian Federation of Natural
  Language Processing and The Association for Computational Linguistics.
\newblock \doi{10.18653/v1/2025.findings-ijcnlp.15}.
\newblock URL \url{https://aclanthology.org/2025.findings-ijcnlp.15/}.

\end{thebibliography}
\end{document}